\documentclass[11pt]{article}
\usepackage[ngerman,english]{babel}
\usepackage{amsmath, amsxtra, amsfonts, amssymb, amstext}
\usepackage{amsthm}
\usepackage{booktabs}
\usepackage{fullpage}
\usepackage{nicefrac}
\usepackage{xspace}
\usepackage[noadjust]{cite}
\usepackage{url}\urlstyle{rm}
\usepackage{graphics}
\usepackage[usenames,dvipsnames]{xcolor}
\usepackage[colorlinks]{hyperref}
\definecolor{linkblue}{rgb}{0.1,0.1,0.8}
\hypersetup{colorlinks=true,linkcolor=linkblue,filecolor=linkblue,urlcolor=linkblue,citecolor=linkblue}
\usepackage[algo2e,ruled,vlined,linesnumbered]{algorithm2e}
\usepackage{wrapfig}
\usepackage{pdfpages}
\usepackage{fullpage}
\usepackage{multirow}
\usepackage{rotating}
\usepackage{colortbl}

\usepackage{ifthen}
\newcommand{\IsChapter}{false}

\newtheorem{theorem}{Theorem}


\newcommand{\N}{\mathbb{N}}
\newcommand{\R}{\mathbb{R}}

\renewcommand{\epsilon}{\varepsilon}
\newcommand{\eps}{\epsilon}

\DeclareMathOperator{\E}{E}

\DeclareMathOperator{\opt}{opt}
\DeclareMathOperator{\nonopt}{nonopt}
\DeclareMathOperator{\mut}{flip}
\DeclareMathOperator{\cross}{cross}
\newcommand{\plateau}{\textsc{Plateau}\xspace}

\DeclareMathOperator{\Bin}{Bin}

\newcommand{\assign}{\leftarrow}



\newcommand{\oea}{$(1 + 1)$~EA\xspace}
\newcommand{\lea}{$(1 + \lambda)$~EA\xspace}
\newcommand{\opl}{$(1 + \lambda)$~EA\xspace}
\newcommand{\opladap}{$(1 + \lambda)$~EA$_{(2r,r/2)}$\xspace}
\newcommand{\ocl}{$(1 , \lambda)$~EA\xspace}

\newcommand{\ga}{$(1 + (\lambda,\lambda))$~GA\xspace}

\newcommand{\OneMax}{\textsc{OneMax}\xspace}
\newcommand{\onemax}{\textsc{OneMax}\xspace}
\newcommand{\OM}{\textsc{Om}\xspace}

\newcommand{\leadingones}{\textsc{LeadingOnes}\xspace}

\newcommand{\LO}{\textsc{Lo}\xspace}

\newcommand{\opllga}{$(1 + (\lambda,\lambda))$~GA\xspace}

\DeclareMathOperator{\RLS}{RLS}

\begin{document}
    
\title{Theory of Parameter Control for Discrete Black-Box Optimization:\\ Provable Performance Gains Through Dynamic Parameter Choices}

\author{Benjamin Doerr$^{1}$,  
Carola Doerr$^{2}$}
 
\date{
$^1$\'Ecole Polytechnique, CNRS, Laboratoire d'Informatique (LIX), Palaiseau, France\\
$^2$ Sorbonne Universit\'e, CNRS, Laboratoire d'Informatique de Paris 6, LIP6, 75005 Paris, France
\\[1cm]
\today
}

\sloppypar
\maketitle
 
\begin{abstract}
Parameter control aims at realizing performance gains through a dynamic choice of the parameters which determine the behavior of the underlying optimization algorithm. In the context of evolutionary algorithms this research line has for a long time been dominated by empirical approaches. With the significant advances in running time analysis achieved in the last ten years, the parameter control question has become accessible to theoretical investigations. A number of running time results for a broad range of different parameter control mechanisms have been obtained in recent years. This \ifthenelse{\equal{\IsChapter}{true}}{chapter}{book chapter} surveys these works, and puts them into context, by proposing an updated classification scheme for parameter control.

\vspace{2ex}
\ifthenelse{\equal{\IsChapter}{true}}{}{Author-generated version of a book chapter that has appeared in \emph{Benjamin Doerr and Frank Neumann (Editors). Theory of Evolutionary Computation---Recent Developments in Discrete Optimization. Springer, 2020. \url{https://doi.org/10.1007/978-3-030-29414-4}}.}
\end{abstract}
%

\newpage
\tableofcontents

\newpage
\sloppy{
\section{Introduction}\label{sec:SAintro}

Evolutionary algorithms and many other iterative black-box optimization heuristics are parametrized algorithms; i.e., their search behavior depends (to a large extent) on a set of parameters which the user needs to specify, or which are set by the algorithm designer to some default values. It is today well understood that the parameter choice can have a very decisive influence on the performance of the heuristic~\cite{ParamSettBook}. Understanding how to best choose the parameters is therefore an important task. It is referred to as the \emph{parameter selection problem.}

The parameter setting problem is difficult for several reasons. 
\begin{itemize}
	\item \textbf{Complexity of Performance Prediction.} Despite significant research efforts devoted to this problem, predicting how the performance of an algorithm depends on the chosen parameter values remains a very challenging problem---both with empirical and theoretical methods. In fact, determining optimal parameter values can be very complex already for a single parameter. Many black-box optimization heuristics, however, rely on two or more parameters. Rigorously analyzing the interdependency between these parameters is often infeasible by state-of-the-art technology. 
	\item \textbf{Problem- and Instance-Dependence.} It is well known that no globally good parameter values exist, but that suitable parameter values can differ substantially between different optimization problems, and even between different instances of the same problem.  
	\item \textbf{State-Dependence.} It is furthermore widely acknowledged that the best parameter values can change during the optimization process. For example, it is often beneficial to use larger mutation rates in the beginning of an optimization process, to allow for a faster exploration, and to shrink the search radius over time, to allow for a better exploitation in the later stages, cf. Section~\ref{sec:SAmotivation} for a detailed example.
\end{itemize}

To overcome these difficulties, a large number of different parameter setting techniques have been developed. Following standard terminology in evolutionary computation, they can be classified into two main approaches:
\begin{itemize}
	\item \textbf{Static Parameter Settings: Parameter Tuning.} Parameter tuning aims at identifying parameter values that are, for a given algorithm on a given problem (instance), globally suitable throughout the whole optimization process. The parameters are initialized with these values and do not change during the optimization process. Parameter tuning thus addresses the above-mentioned problem- and instance-dependence of optimal parameter choices, but not their state-dependence.
	
	In \emph{empirical works,} parameter tuning often requires an initial set of experiments that support an informed decision. Automated tools that help the user to identify reasonable static parameter values are available, and have shown to bring significant performance gains over a manual tuning process. \cite{irace,SPOT,GGA,ParamILS,SMAC} are examples for automated parameter tuning approaches that have been used in (and to some extend specifically designed for) evolutionary optimization contexts.
	
	In \emph{theoretical works,} parameter tuning requires running time bounds that depend on the parameters under investigation. The minimization of these performance bounds then suggests suitable parameter values. A prime \emph{example} for such a mathematical approach towards parameter tuning is the precise running time bound for the \oea with mutation rate $p=c/n$ on linear functions. Witt~\cite{Witt13j} has shown that this expected optimization time is $(1\pm o(1)) \tfrac{e^c}{c} n \ln (n)$. This bound, together with larger running time bounds for mutation rates $p \neq c/n$, proves that the often recommended choice $p=1/n$ is indeed optimal for the \oea on this problem. Such precise upper and lower bounds, however, are rare. Even worse, only few running time bounds that depend on two or more parameters exist, cf. Section~\ref{sec:SAfitnessTheoretical}.
	\item \textbf{Dynamic Parameter Setting: Parameter Control.} Parameter control, in contrast, aims to benefit from a non-static choice of the parameters, with the underlying idea that the flexibility in the behavior can be used to adjust the algorithms' behavior to the current state of the optimization process. Put differently, parameter control does not only aim at \emph{identifying} parameter values that are a good compromise for the whole optimization process, but aims also at \emph{tracking} the evolution of the best parameter values. Even when the optimal parameter values are rather stable, the role of parameter control is to identify these values \emph{on the fly}, without a dedicated tuning step that precedes the actual optimization process. 
\end{itemize}

 This book chapter focuses on non-static parameter choices, and thus on parameter control mechanisms. We survey existing theoretical works of parameter control in the context of evolutionary algorithms and other standard black-box optimization heuristics. We also summarize a few standard techniques used in the empirical research literature.\footnote{Readers interested in empirical works on parameter control are referred to~\cite{KarafotiasHE15} for an exhaustive survey. Additional pointers can be found in the systematic literature review~\cite{AletiM16}, the book chapter~\cite{EibenMSS07} (and other book chapters in the same collection) and the seminal paper~\cite{EibenHM99}.} We structure our presentation by a new classification scheme for parameter control mechanisms. This taxonomy builds on the well-known classification by Eiben, Hinterding, and Michalewicz~\cite{EibenHM99}, but modifies it to better reflect the developments that parameter control has witnessed in the last 20 years. 

This \ifthenelse{\equal{\IsChapter}{true}}{chapter}{book chapter} is structured as follows. We motivate the use of non-static parameter choices in Section~\ref{sec:SAmotivation} by demonstrating a simple example where adaptive parameter selection is provably beneficial. We then introduce our revised classification scheme in Section~\ref{sec:SAclassification}. In the subsequent Sections~\ref{sec:SAstate} to~\ref{sec:SAhyper} we survey existing theoretical results. In Section~\ref{sec:SAconclusions} we conclude this book chapter with a discussion of promising avenues for future work. A summary of selected theoretical running time results covered in this book chapter can be found in Table~\ref{tab:SAsummaryResults}.

\section{A Motivating Example: (1+1)~EA and RLS on LeadingOnes}\label{sec:SAmotivation}

We start this section with an example that demonstrates potential advantages of parameter control mechanisms. To this end, we study the well-known \leadingones benchmark, the problem of minimizing an unknown function of the type 
$$\LO_{z,\sigma}:\{0,1\}^n \rightarrow [0..n]:=\{0,1,\ldots,n\}, x \mapsto \max \{ i \in [0..n] \mid \forall j \in [1..i]: x_{\sigma(j)} = z_{\sigma(j)} \},$$
where $z \in \{0,1\}^n$ and $\sigma$ is a permutation (one-to-one map) of the set $[1..n]$. Optimizing $\LO_{z,\sigma}$ corresponds to identifying $z$, the unique optimum of $\LO_{z,\sigma}$. Note that for every $x$ the function value $\LO_{z,\sigma}(x)$ is the length of the longest prefix that $x$ and $z$ have in common, when comparing the strings in the order prescribed by $\sigma$. 

It has been shown in~\cite{BottcherDN10} that the \oea with static mutation rate $0<p<1$ needs $\tfrac{1}{2p^2}[(1-p)^{1-n} - (1-p)]$ iterations, on average, to optimize a \leadingones instance~\cite{BottcherDN10}. This term is minimized for $p\approx 1.59/n$, which yields an expected optimization time of around $0.77 n^2$. It was observed in~\cite{BottcherDN10} that a fitness-dependent choice of the mutation rate gives a better optimization time. More precisely, when $x$ denotes the current-best individual, and we choose in the next iteration as mutation rate $p=1/(\LO(x)+1)$, then the expected optimization time decreases to around $(e/4)n^2 \approx 0.68 n^2$. This is almost $21\%$ better than the expected optimization time of the \oea with standard mutation rate $p=1/n$ and about $11.7\%$ better than the mentioned $0.77n^2$ expected running time which the best static mutation rate $p\approx 1.59/n$ achieves.  

Also Randomized Local Search (RLS), the algorithm which flips in each iteration one uniformly selected bit and selects the better of the two offspring as parent individual for the next iteration, can profit from a non-static choice of the \emph{step size}, i.e., the number of bits that it flips in every iteration. It is well known that RLS needs $n^2/2$ iterations, in expectation, to optimize an $n$-dimensional \leadingones instance. In Figure~\ref{fig:SALO} we take a closer look at the optimization process, and plot the expected number of iterations ($y$-axis) needed by RLS to identify, on the $1000$-dimensional \leadingones problem, a solution of fitness value at least $\LO(x)$ ($x$-axis). This is the blue straight line. We also illustrate in the same figure the corresponding expected \emph{fixed-target running times} of the RLS variant which flips in each iteration exactly 2 and 3 pairwise different bits, respectively. These are the yellow and gray curves, respectively. The lower-most, black line illustrates the expected performance of the RLS variant which chooses in each iteration the best of these three parameter values. We observe that this adaptive variant has an expected optimization time that is around $20\%$ smaller than that of standard 1-bit flip RLS. We also see that for $\LO$-values smaller than $n/2$, it is advisable to flip more than one bit per iteration, while 1-bit flips are optimal once a solution of $\LO$-value $\ge n/2$ has been identified. This can be best seen by comparing the slopes of the curves in this plot of fixed-target running times. The ultimate goal of parameter control is the design of mechanisms that detect such transitions and suggest best possible parameter values for the different stages in an automated way. 

\begin{figure}[t]
\begin{center}
\includegraphics[width=0.65\linewidth]{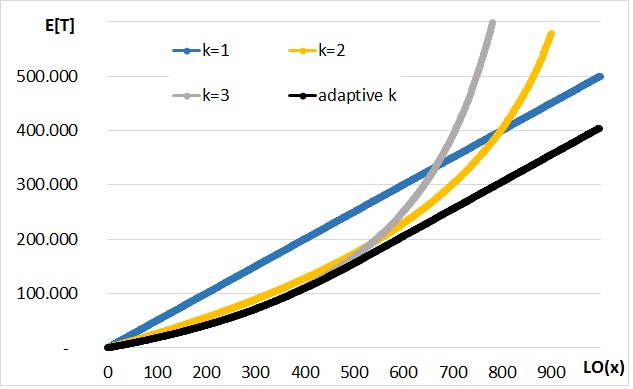}
\end{center}
\caption{Expected fixed-target running times of RLS variants flipping in each iteration exactly 1, 2, 3, or an adaptive number of bits. The adaptive variant, which chooses the best among the three parameter values, has a total expected optimization time that is about 20\% better than RLS, which always flips one bit per iteration}
\label{fig:SALO}
\end{figure}

We note that in the example discussed in this section, ``only'' constant factors could be gained through the dynamic parameter choice, but that in general also asymptotic performance gains can be expected. As example where this has been rigorously proven will be discussed in Section~\ref{sec:SAfitnessTheoretical}.


\section{Classification of Parameter Control Mechanisms}
\label{sec:SAclassification}

A considerable obstacle to overcome when searching for previous works on non-static parameter choices is the lack of a commonly agreed-upon terminology. This has led to a situation in which similar techniques have significantly different names, and, conversely, the same term being used for two fundamentally different concepts. Since 1999 a widely accepted classification scheme for parameter setting has been the taxonomy proposed by Eiben, Hinterding, and Michalewicz in~\cite{EibenHM99}. We present this classification in Section~\ref{sec:SAtaxoalt}, and modify it to cope with the developments in parameter control of the last twenty years in Section~\ref{sec:SAclassi}. 

\subsection{The Classification Scheme of Eiben, Hinterding, and Michalewicz}\label{sec:SAtaxoalt}
Eiben, Hinterding, and Michalewicz~\cite{EibenHM99} distinguish three different types of parameter control, namely deterministic, self-adaptive, and adaptive parameter settings.
\begin{itemize}
	\item A dynamic parameter choice is called \textbf{deterministic} if the choice of the parameter value does not depend on the fitness landscape encountered by the algorithm. Since there is thus no feedback from the optimization process into the parameter choice, the parameter value can only depend on iteration or time counters.
	
	It was noted already in~\cite{EibenHM99} that the term ``deterministic'' is misleading, since a time-dependent parameter choice may still contain randomized elements, that is, the time or iteration counter determines a probability distribution from which the parameter value is sampled. As alternative names for this class of update schemes the terms \emph{scheduled} or \emph{feedback-free parameter control} might be more appropriate.
	\item In \textbf{self-adaptive} parameter choices, the parameters are encoded into the representation of the search points and are thus subject to variation operators. The hope is that the better parameter values yield better offspring and are thus more likely to survive the evolutionary process. By this, implicitly, the choice of the parameters depends on the optimization process and thus, in particular, on the fitness function. 
	\item \textbf{Adaptive} parameter choices are dynamic parameter settings in which there is an explicit dependence of the parameters on the optimization process. This large category includes structurally simple success-based update rules like those resembling the 1/5-th success rule from evolution strategies, but also learning-inspired techniques which choose the parameter values depending on statistics from the optimization process so far. 
\end{itemize}

\subsection{A Revised Classification Scheme}\label{sec:SAclassi}

At the time of writing of~\cite{EibenHM99}, the three different types of parameter control discussed in Section~\ref{sec:SAtaxoalt} were of similar importance. In the last almost twenty years, however, we observe an increasing interest (and massive progress) in the subcategory of adaptive parameter control schemes, which also play a predominant role within the theoretical studies. In particular, the last years made it quite clear that the substantial differences between, say, a simple deterministic fitness-dependent choice of a parameter value and a parameter choice via reinforcement-learning approaches motivate to not have both in the same category. We therefore present in the next subsection an alternative classification scheme, which takes into account this development.

\begin{itemize}
	\item \textbf{State-Dependent Parameter Control.} We classify as \emph{state-dependent parameter control} those mechanisms that depend only on the current state of the search process, e.g., the current population, its fitness values, its diversity, but also a time or iteration counter. Hence this subsumes the previous ``deterministic'' category (containing time-dependent parameter choices) and all other parameter setting mechanisms which determine the current parameter values via a pre-specified function mapping algorithm states to parameter values, possibly in a randomized manner. All these mechanisms require the user to precisely specify how the parameter value depends on the current state and as such need a substantial understanding of the problem to be solved.
%
%
	\item \textbf{Success-Based Parameter Control.} To overcome the usability challenges and the inflexibility of state-dependent parameter control mechanisms, several approaches to set the parameters in a \emph{self-adjusting} manner have been proposed. As one important type of self-adjusting parameter control mechanisms, we classify as \emph{success-based} parameter settings all those mechanisms that change the parameters from one iteration to the next. In other words, the parameter value to be used in the current iteration is determined (possibly in a randomized manner) by the parameter value used in the previous iteration and by an evaluation how successful the previous iteration was. The success measure can be a simple binary information like whether a solution with superior fitness was found, but it could also take into account the quantitative information like the fitness gain or loss in this iteration. Depending on the parameter to be set, also other quantities than the fitness can be taken into account, e.g., the evolution of the diversity of the population.
	
  The most common form of success-based rules are multiplicative updates of parameters, which increase or decrease the parameter value by suitable factors depending on whether the previous iteration was classified as success or not. Success-based rules other than multiplicative updates have been designed as well. For example, in~\cite{DoerrGWY17} the offspring were generated with two different parameter values and the information which parameter value led to the best offspring determined the parameters of the next iteration, cf. Section~\ref{sec:SADoerrGWY17} for a detailed discussion.
	\item \textbf{Learning-Inspired Parameter Control.} As the main second type of self-adjusting parameter control mechanisms, we classify as \emph{learning-inspired parameter control} mechanisms all those schemes which aim at exploiting a longer search history than just one iteration. To allow such learning mechanisms to also adapt quickly to changing environments, older information is taken into account to a lesser extend than more recent ones. This can be achieved by
	only regarding information from (static or sliding) \emph{time windows} or by discounting the importance of older information via weights that decrease (usually exponentially) with the anciency of the data. 
	
	Most learning-inspired parameter control mechanism that have been experimented with in the evolutionary computation context borrow tools from machine learning, where a similar problem known as the \emph{multi-armed bandit problem} is studied, cf. Section~\ref{sec:SAlearning}.
	
	\item \textbf{Endogenous Parameter Control (Self-adaptation).} This category corresponds to the self-adaptive parameter control mechanisms in the taxonomy of~\cite{EibenHM99}. We prefer the name endogenous parameter control as it best emphasizes the structural difference of these mechanisms, which is to encode the parameters in the genome and to let them evolve via the usual variation and selection mechanisms of the evolutionary system. 
	\item \textbf{Hyper-Heuristics.} Hyper-heuristics are algorithms that operate on a set of low-level heuristics, select from it an algorithm, and run it for some time, before re-evaluating which of the low-level heuristics to use next. The main hope is that the hyper-heuristics automate the algorithm selection and configuration process, in a way that allows for maximizing the profit from different algorithmic ideas in the different stages of the optimization process. Similar to the motivation behind endogenous parameter control, the use of a high-level hyper-heuristic is guided by the belief that the high complexity of the parameter control problem calls for efficient heuristic approaches.
\end{itemize}

Figure~\ref{fig:SAclassi} summarizes our classification scheme. Existing theoretical results are summarized in the next sections, which are structured according to this taxonomy.

\begin{figure}[t]
\begin{center}
\includegraphics[width=0.75\linewidth]{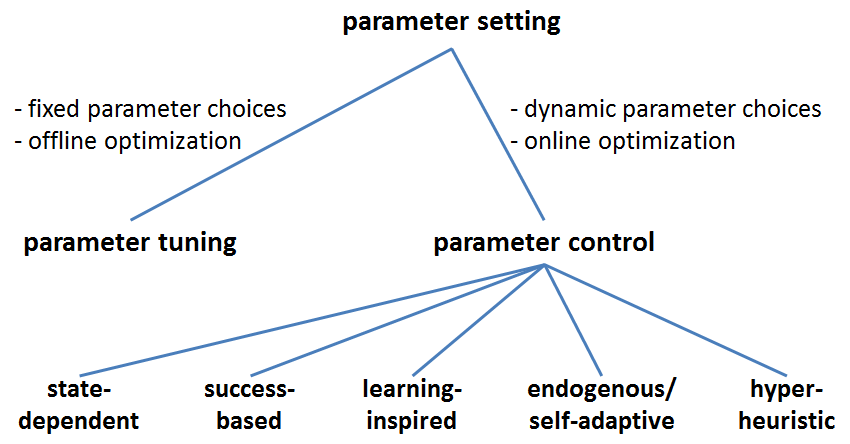}
\end{center}
\caption{Classification of Parameter Control Mechanisms. We call success-based and learning-inspired mechanisms also \emph{self-adjusting}.}
\label{fig:SAclassi}
\end{figure}

We emphasize that our classification is partly driven by the historical development of the field. For example, it would be more logical to not have hyper-heuristics (as long as they essentially optimize parameters) as a separate category, but rather classify them as success-based or learning-inspired parameter control schemes. Since historically the area of hyper-heuristics developed relatively independently (partially due to the fact that there are many hyper-heuristics that cannot be seen as parameter control mechanisms), we prefer to maintain an own category for hyper-heuristics.
 
\section{State-Dependent Parameter Control}\label{sec:SAstate}

We recall from the previous section that state-dependent parameter selection schemes are those mechanisms which choose the parameter values based only on the current state of the algorithm, without making use of the search history. One of the best known examples for state-dependent parameter control is the so-called \emph{cooling schedule} used by Simulated Annealing. The idea of this cooling schedule is to start the heuristic with a rather generous acceptance behavior, and to increase the selective pressure during the optimization process, cf. Section~\ref{sec:SAtime} for a more detailed description. The cooling schedule, as the name suggests, is a time-dependent selection mechanism, which maps the iteration counter to a temperature value that defines the selective pressure. 

As we shall see in this section, time-dependent parameter selection schemes have also been experimented with in the context of evolutionary computation. In addition, other state-dependent parameter settings, like rank- and fitness-based mutation rates and diversity-based parameter choices have been analyzed empirically, but have received considerably less attention in the theory of evolutionary algorithms community. 

\subsection{Time-Dependent Parameter Choices}\label{sec:SAtime}

Simulated Annealing is typically not regarded as an evolutionary algorithm, since it draws inspiration from the physical phenomenon of an annealing process. We nevertheless decided to discuss it in this book chapter, as it is structurally very similar to Randomized Local Search, and certainly falls in the class of iterative randomized black-box optimization heuristics. 

\emph{Simulated Annealing}~\cite{SA83} is a (1+1)-type search heuristic that uses a Boltzmann selection rule to decide whether or not to replace the previous parent individual $x$ by a new solution $y$. More precisely, the algorithm keeps in its memory only one previously evaluated solution $x$, and modifies it by a local variation. In case of pseudo-Boolean maximization this local move is identical to that of RLS, i.e., the offspring $y$ is created from $x$ by flipping exactly one bit, the position of which is chosen uniformly at random. The new solution $y$ always replaces $x$ if it is better, and it replaces $x$ with probability $\exp\big((f(y)-f(x))/T \big)$ otherwise. That is, the better $y$, the larger the probability that it survives the selection procedure. The novelty of Simulated Annealing over its predecessor, the Metropolis algorithm~\cite{Metropolis53}, is an adaptive choice of the ``temperature'' $T$ in the Boltzmann selection rule: while the Metropolis algorithm uses the same $T$ throughout the whole optimization process, the value of $T$ is decreased over time in Simulated Annealing, either with each iteration or, more commonly, after a fixed number $\tau$ of iterations. The adaptive selective pressure results in a more generous acceptance behavior at the beginning of the optimization process (to allow for faster exploration), and a more and more elitist selection towards the end (``exploitation''). Algorithm~\ref{alg:SASA} summarizes this algorithm. For constant $T_t=T$ we obtain from Algorithm~\ref{alg:SASA} the pseudo-code for the Metropolis algorithm. Numerous successful applications and more than 43,000 citations\footnote{This citation number is according to Google Scholar as of April 12, 2018.} of~\cite{SA83} witness that this idea to control the selective pressure during the optimization process can have an impressive impact on the performance. 

\begin{algorithm2e}%
	\textbf{Initialization:} 
	Choose $x \in \{0,1\}^n$ uniformly at random (u.a.r.)\;
 \textbf{Optimization:}
\For{$t=1,2,3,\ldots$}{
Create from $x$ a new solution candidate $y$ by flipping exactly one bit in $x$\;
\eIf{$f(y)\geq f(x)$}
			{$x \assign y$}
			{$x \assign y$ with probability $\exp((f(y)-f(x))/T_t)$}
}
\caption{Simulated Annealing for the maximization of a pseudo-Boolean function $f:\{0,1\}^n \to \R$.}
\label{alg:SASA}
\end{algorithm2e}

A number of theoretical results analyzing the performance of Simulated Annealing exist. Most of these prove convergence to a global optimum for suitably chosen parameter settings, cf. the book chapter~\cite{Henderson2003} for a summary of selected theoretical and empirical results. In addition to results mentioned there, a plethora of running time results exist for combinatorial optimization problems on graphs, including most notably matching~\cite{SasakiH88} and graph bisection problems~\cite{CarsonI01,Impagliazzo01,JerrumS93}. Selected theoretical works that concentrate on the advantages of dynamic parameter choices are summarized below.  

Answering an open problem posed in~\cite{JerrumS96}, Wegener presented in~\cite{Wegener05} a problem class for which Simulated Annealing outperforms its static counterpart, the Metropolis algorithm, regardless of how the temperature value is chosen in the latter. More precisely, Wegener proves that Simulated Annealing with multiplicative temperature decay $T(t)=\alpha T(1)$ ($\alpha<1$ being a constant and the initial value $T(1)$ being ignorant of the instance, but possibly depending on the number of edges $m$ and the maximal weight $w_{\max}$) has a better expected optimization time on some subclasses of the Minimum Spanning Tree (MST) problems than the Metropolis algorithm with any fixed temperature. Previous examples for this phenomenon had been presented in~\cite{Sorkin91} and~\cite{DrosteJW00}, but were of a rather artificial nature. The novelty of~\cite{Wegener05} was thus to prove this statement for a natural combinatorial optimization problem. A particular instance of the MST problem for which Wegener proved the superiority of Simulated Annealing is a graph that has the form of connected triangles. Wegener also showed a provable advantage for $\varepsilon$-separated graphs, in which non-equal weights are apart from each other by a constant factor of at least $1+\varepsilon$, cf.~\cite[Section~5]{Wegener05}.

One of the first works analyzing a classic evolutionary algorithm with a dynamic parameter setting was presented by Droste, Jansen and Wegener in the above-mentioned work~\cite{DrosteJW00}. Besides a \emph{time-dependent selection strategy,} the authors also analyze the \oea with a \emph{time-dependent mutation rate} $p \in \{2^k/n \mid k=0, 1, 2, \dots, \lceil \log_2(n) \rceil -2\}$. In this algorithm, the mutation rate is initialized as $1/n$ and doubled in every iteration until $p$ exceeds $1/2$, in which case it is reset to $1/n$. An example function, \textsc{PathToJump}, is presented for which the \oea with the time-dependent mutation rate needs only $O(n^2 \log n)$ steps, on average, to locate the optimum, while the \oea with static mutation rate $p=1/n$ does not optimize \textsc{PathToJump} in expected polynomial time. The authors also show a converse result in which the dynamic \oea is much slower than the classical static one. It is not difficult to see that the dynamic EA performs worse than the static \oea on most classic benchmark functions like \onemax, \leadingones, etc., cf.~\cite[Section~4]{DrosteJW00}. This work was later extended and simplified by Jansen and Wegener in~\cite{JansenW06}.

In~\cite{JansenW07} a comparison is made between the \oea with static and with time-dependent mutation rates on the one hand, and Simulated Annealing and the Metropolis algorithm on the other hand, but the focus of this work is not on the advantages of adaptive parameter choices, but rather on a comparison of the different selection schemes.


\subsection{Rank-Dependent Parameter Control} 

Motivated by empirical work reported in~\cite{CervantesStephens2009}, Oliveto, Lehre, and Neumann analyzed in~\cite{OlivetoLN09} a $(\mu+1)$ EA with \emph{rank-based mutation rates}. In this algorithm, the individuals of the parent population are ranked according to their fitness values and the mutation rate applied in some iteration $t$ depends on the rank of the (uniformly selected) individual undergoing mutation. The intuition behind this rank-based mutation rates is that individuals at larger ranks (i.e., worse fitness) should be modified more aggressively (suggesting large mutation rates), while the best individuals of the population should be modified with caution, suggesting small mutation rates. 

To be more precise, the algorithm proposed in~\cite{CervantesStephens2009} uses standard bit mutation with mutation rate $p_i$, where for the $i$-th ranked search point the value of $p_i$ is set to $p_{\min}+(p_{\max}-p_{\min})(i-1)/m$ (linear interpolation ensuring a minimal mutation rate of $p_{\min}>0$ and a maximal mutation rate $p_{\max}$). The variant studied in~\cite{OlivetoLN09} uses $p_{\min}=1/n$, $p_{\max}=1$, and $m=\mu$. Theorem~\ref{thm:SAOlivetoLN09} below gives a general upper bound for the rank-based $(\mu+1)$~EA, which is better than the $\Theta(n^n)$ expected running time of the \oea on functions like \textsc{Needle} or \textsc{Trap}. 

\begin{theorem}[Theorems~1 and~2 in~\cite{OlivetoLN09}]
\label{thm:SAOlivetoLN09}
For $\mu\ge 2$ and $\mu=\text{poly}(n)$, the expected optimization time of the $(\mu+1)$ EA with rank-based mutation rates is at most\footnote{This bound is mistakenly stated as $O(2^n)$ in~\cite[Theorem~1]{OlivetoLN09}, but the proof clearly shows the here-stated upper bound.} $7 \cdot 3^n$ for any pseudo-Boolean function $f:\{0,1\}^n \to \R$, and it is $O(\mu n \log n)$ for \onemax.\footnote{We recall that \onemax is the function that assigns to each $x \in \{0,1\}^n$ the number of ones in it; i.e., $\OM(x)=\sum_{i=1}^n{x_i}$. All running time bounds that we state in this chapter for the optimization of \onemax also apply to the optimization of the functions $\OM_z:\{0,1\}^n \to \R, x \mapsto |\{i \in [n] \mid x_i=z_i\}|$, whose fitness landscape is isomorphic to that of $\OM=\OM_{(1,\ldots,1)}$.}  
\end{theorem}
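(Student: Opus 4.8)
The plan is to prove the two bounds separately, each driven by a different part of the population: the middle-ranked individuals, whose mutation rate is bounded away from $0$ and $1$, power the general $7\cdot 3^n$ bound, whereas the single best (rank-$1$) individual, which always carries the small rate $p_1=1/n$, powers the \onemax bound.

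For the general $7\cdot 3^n$ bound I would first fix one global optimum $z$ and note that, by elitism of the \mea, once $z$ enters the population it is never removed (nothing can be strictly better), so it suffices to bound the expected number of iterations until $z$ is first created. I then lower-bound the probability $P$ that a single iteration produces $z$. Reindexing the population by rank, an individual of rank $i$ sits at some Hamming distance $d_{[i]}=H(x_{[i]},z)$ and, when selected (probability $1/\mu$) and mutated with rate $p_i$, hits $z$ with probability $p_i^{d_{[i]}}(1-p_i)^{n-d_{[i]}}$. The crucial observation is that whenever $p_i\in[1/3,2/3]$ we have $\min(p_i,1-p_i)\ge 1/3$, so this probability is at least $(1/3)^{d_{[i]}}(1/3)^{n-d_{[i]}}=3^{-n}$, uniformly over $d_{[i]}$ and hence over the (adversarial) function $f$. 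Since the events ``rank $i$ is selected and mutates to $z$'' are disjoint, $P\ge \frac1\mu\,3^{-n}\cdot\#\{i:p_i\in[1/3,2/3]\}$.

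The main obstacle, and the reason the stated bound is independent of $\mu$, is exactly this counting step. Because $p_i=1/n+(1-1/n)(i-1)/\mu$ grows linearly in the rank from $1/n$ up to almost $1$, the ranks whose rate lies in $[1/3,2/3]$ form a contiguous block of size $\Theta(\mu)$ (roughly $\mu/3$ of them). Thus the factor $1/\mu$ from uniform selection is cancelled by the $\Theta(\mu)$ individuals of good rate, giving $P=\Omega(3^{-n})$ with a constant independent of $\mu$; a short case distinction for the few small values of $\mu$ (where the block may contain only one rank, e.g.\ the near-$1/2$ rate $p_2$ when $\mu=2$) is needed to pin down the explicit constant. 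Taking the reciprocal of this per-iteration success probability yields the $7\cdot 3^n$ upper bound on the expected optimization time.

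For the \onemax bound I would apply the classical fitness-level method to the best fitness in the population, which is non-decreasing by elitism. The rank-$1$ individual is always a current-best individual and always carries mutation rate exactly $p_1=1/n$, the near-optimal rate for \onemax. If the best fitness is $k$, then selecting this rank-$1$ individual (probability $1/\mu$) and flipping exactly one of its $n-k$ zero-bits while leaving all others unchanged creates a strictly fitter, hence accepted, offspring; this has probability at least $\frac1\mu\cdot\frac{n-k}{n}(1-\tfrac1n)^{n-1}\ge\frac{n-k}{e\mu n}$. Summing reciprocals over the levels $k=0,\dots,n-1$ gives $\sum_{k=0}^{n-1}\frac{e\mu n}{n-k}=e\mu n\,H_n=O(\mu n\log n)$, as claimed. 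Here the argument is routine once one notices that rank $1$ permanently supplies the ideal small mutation rate; the only care needed is to confirm that improving the rank-$1$ individual indeed raises the population's best fitness.
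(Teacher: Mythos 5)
Your proposal is correct and follows essentially the same route as the original proof in~\cite{OlivetoLN09} (which this chapter only cites): for the general bound, a constant fraction of the ranks carry a mutation rate in $[1/3,2/3]$ --- the interval of admissible ranks has length at least $\mu/3$, giving at least $\mu/3-1\ge\mu/7$ good ranks for $\mu\ge 6$ and at least one good rank for $\mu\in\{2,\dots,5\}$, whence a per-iteration success probability of at least $3^{-n}/7$ independent of $\mu$ and the population state --- while for \onemax the rank-$1$ individual's permanent rate $1/n$ combined with the standard fitness-level method yields $e\mu n H_n=O(\mu n\log n)$. The only piece you defer, pinning the explicit constant $7$, is exactly the elementary counting just described, so no genuine gap remains.
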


In addition to these results, examples are constructed for which the $(\mu+1)$ EA with rank-based mutation rates performs significantly worse~\cite[Section~V]{OlivetoLN09} and significantly better~\cite[Section~VI]{OlivetoLN09} than the classical $(\mu+1)$ EA with standard bit mutation rate $p=1/n$.

\subsection{Fitness-Dependent Parameter Control}\label{sec:SAfitnessTheoretical}

While rank-based parameter selection had originally been introduced with the hope to find a generally well-functioning control scheme, fitness-based parameter selection schemes are often highly problem-tailored, and cannot be assumed to work particularly well when applied to different objective functions. 
The theoretical results stated below should therefore not be considered as a suggestion for generally applicable parameter control mechanisms, but rather as a point of comparison for more plausible, general-purpose parameter update techniques; i.e., we should use these results only as a lower bound for the performance of a best possible parameter update scheme. This way, the results form a baseline that helps us understand and judge the limits of parameter control. 
	
\subsubsection{Fitness-Dependent Mutation Rates for the (1+1) EA on LeadingOnes} 	
The first work showing a significant advantage of a fitness-dependent choice of the mutation rate has been presented in~\cite{BottcherDN10}, where the following result is shown.\footnote{Prior to~\cite{BottcherDN10}, fitness-dependent mutation rates had also been analyzed in  \emph{immune algorithms}~\cite{Zarges09,Zarges08}, but no advantage of the analyzed parameter choices could be shown.} 

\begin{theorem}[Theorems~3 to~6 in~\cite{BottcherDN10}]
\label{thm:SABottcherDN10}
On \leadingones, the expected number of iterations needed by the \oea with \emph{static} mutation rate $p \in (0,1)$ to identify the optimal solution is $\tfrac{1}{2p^2}[(1-p)^{1-n} - (1-p)]$. This expression is minimized for $p \approx 1.59/n$, which gives an expected optimization time of around $0.77 n^2$.

For the \oea variant that chooses in every iteration the fitness-dependent mutation $p=1/(\LO(x)+1)$, where $x$ denotes the solution that undergoes modification, the expected optimization time decreases to around $(e/4)n^2 \approx 0.68 n^2$. No other fitness-dependent mutation rate can achieve a better expected optimization time.
\end{theorem}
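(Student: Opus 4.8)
The plan is to analyze the \oea on \leadingones through a level-based argument, where the ``levels'' are the possible fitness values $0,1,\ldots,n$, and to reduce all four claims to a single \emph{separable} formula for the expected optimization time. The cornerstone is an invariant: at every iteration, conditioned on the current fitness being $k<n$, the bits at the positions $\sigma(k+2),\ldots,\sigma(n)$ (those strictly behind the first incorrect bit) are independent and uniformly distributed. First I would establish this invariant by induction over the iterations. The base case holds since the initial search point is uniform. For the induction step, the decisive observations are that a uniform bit stays uniform after being flipped with any probability (uniform $\oplus$ independent Bernoulli is uniform), and that acceptance of an improving step depends only on the bits $\sigma(1),\ldots,\sigma(k+1)$ and is therefore independent of the bits further behind, which are merely carried along. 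This is exactly what makes the invariant robust to the mutation rate and lets one analysis cover both the static and the fitness-dependent regimes.

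Given the invariant, I would derive the master formula $\E[T]=\tfrac12\sum_{k=0}^{n-1}\frac{1}{p_k(1-p_k)^k}$, where $p_k$ is the mutation rate used while the fitness equals $k$ (so $p_k\equiv p$ in the static case). Two facts feed into this. The probability of an improving step at level $k$ is exactly $p_k(1-p_k)^k$---the first incorrect bit must flip and the $k$ correct bits in front of it must be preserved, while the trailing bits are irrelevant to acceptance---so the expected number of iterations spent at level $k$, once it is reached, is $1/(p_k(1-p_k)^k)$. The factor $\tfrac12$ is the probability that level $k$ is visited at all: by the invariant, the moment the fitness first reaches a value $\ge k$ the bit $\sigma(k+1)$ is still uniform, hence incorrect (making the fitness exactly $k$) with probability $\tfrac12$. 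Linearity of expectation over the levels, together with the fact that the per-level waiting time is geometric and, by the Markov property, independent of the past given the current level, then yields the formula.

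From the master formula the remaining claims are short. For $p_k\equiv p$, summing the geometric series $\sum_{k=0}^{n-1}(1-p)^{-k}$ gives precisely $\tfrac{1}{2p^2}[(1-p)^{1-n}-(1-p)]$; substituting $p=c/n$ and using $(1-c/n)^{-n}\to e^c$ reduces the minimization to minimizing $(e^c-1)/c^2$, whose first-order condition $e^c(2-c)=2$ has solution $c\approx1.59$ and value $\approx0.77\,n^2$. For the fitness-dependent rate $p_k=1/(k+1)$, the $k$-th term equals $(k+1)(1+1/k)^k$ for $k\ge1$; since $(1+1/k)^k\to e$, the sum is $(1+o(1))\,e\,\tfrac{n^2}{2}$, giving $\E[T]=(1+o(1))\tfrac{e}{4}n^2\approx0.68\,n^2$. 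Finally, optimality is immediate from separability: the master formula is a sum of terms $\frac{1}{p_k(1-p_k)^k}$ each depending on a single $p_k$, so the total is minimized term by term by maximizing $p\mapsto p(1-p)^k$; differentiation gives the unique maximizer $p_k=1/(k+1)$, which is exactly the claimed fitness-dependent choice, so no other fitness-dependent schedule can do better.

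I expect the main obstacle to be a fully rigorous proof of the uniform invariant, and in particular handling the conditioning correctly: one must argue that conditioning on the \emph{type} of the step (improving, accepted-but-neutral, or rejected) does not disturb the uniformity of the trailing bits, which requires carefully separating the bits that control acceptance from those that are only transported along. Once the invariant and the independence of the per-level waiting times are pinned down, the factor-$\tfrac12$ visiting probability and the separable formula follow cleanly, and the optimization and asymptotic steps are routine calculus.
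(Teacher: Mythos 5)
Your proposal is correct and follows essentially the same route as the proof in the cited source \cite{BottcherDN10} (the chapter itself only quotes the result): the uniformity of the bits behind the first incorrect position, the resulting separable formula $\E[T]=\tfrac12\sum_{k=0}^{n-1}\frac{1}{p_k(1-p_k)^k}$ with the factor $\tfrac12$ from the level-visit probability, and termwise maximization of $p(1-p)^k$ yielding $p_k=1/(k+1)$ are exactly the ingredients used there. The subsequent calculus steps (geometric summation for static $p$, the condition $e^c(2-c)=2$ giving $c\approx 1.59$, and $(k+1)(1+1/k)^k\to e(k+1)$ giving $(e/4)n^2$) also match the original analysis.
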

In this result the expected optimization time of the fitness-dependent \oea is almost $21\%$ better than the expected optimization time of the \oea with standard mutation rate $p=1/n$ and about $11.7\%$ better than the $0.77n^2$ expected running time which the best static mutation rate $p\approx 1.59/n$ achieves. 
 
\subsubsection{Fitness-Dependent Mutation Rates for the \texorpdfstring{$(1+\lambda)$~EA}{(1+l)~EA} on OneMax} 	\label{sec:SA1lambdaOM}
	Interestingly, the question how to best control the mutation rate during the optimization process gained relevance with the establishment of \emph{black-box complexity} as a measure for the best possible running time that any randomized search heuristic of a certain type can achieve (cf. \ifthenelse{\equal{\IsChapter}{true}}{Chapter~\ref{chap:BBC} of this book}{\cite{Doerr18BBC}} for a survey of works on this complexity notion). By comparing existing algorithms with the theoretically best possible performance, one can judge how well suited a given approach is. Non-surprisingly, the best-possible algorithms take into account the state of the optimization process, and adjust their parameters accordingly.
	
	In this context, and more precisely, in the context of analyzing lower bounds for the performance of unbiased parallel evolutionary algorithms\ifthenelse{\equal{\IsChapter}{true}}{, cf. Section~\ref{sec:BBCparallel} in Chapter~\ref{chap:BBC} for a more detailed description of the motivation,}{,} Badkobeh, Lehre, and Sudholt analyzed in~\cite{BadkobehLS14} the optimal fitness-dependent mutation rate for the $(1+\lambda)$ EA on \onemax. The main result is summarized by the following theorem. 
	
	\begin{theorem}[Theorems~3 and~4 in~\cite{BadkobehLS14}]\label{thm:SABadkobehLS14}
For $\lambda \le e^{\sqrt{n}}$ the $(1+\lambda)$ EA that uses in each iteration the mutation rate $p(x):=\max\Big\{1/n, \frac{\ln\lambda}{n \ln(en/(n-\OM(x)))} \Big\}$ (where $x$ denotes the parent individual held in the memory at the beginning of the iteration) has an expected optimization time on \onemax equal to $\Theta\Big(n \log n + \frac{\lambda n}{\log \lambda} \Big)$. 

This performance is best possible among all unary unbiased black-box algorithms that create $\lambda$ offspring in parallel. 
	\end{theorem}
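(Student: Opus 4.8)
The plan is to prove the running-time bound by establishing matching upper and lower bounds separately, the point being that the fitness-dependent rate $p(x)$ is engineered precisely so that the \opl realizes the parallel unary unbiased black-box optimum. Throughout I write $d := n-\OM(x)$ for the number of bits the current parent still has wrong, and I count generations first, multiplying by $\lambda$ at the end to pass to function evaluations. The formula for $p(x)$ switches regimes at $d \approx en/\lambda$: for $d \le en/\lambda$ it equals $1/n$ (the ``near'' phase), and for $d > en/\lambda$ it equals $\frac{\ln\lambda}{n\ln(en/d)} > 1/n$ (the ``far'' phase). I treat the two phases with different tools.

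For the upper bound I would use the fitness-level method in the near phase and variable drift in the far phase. In the near phase a single offspring improves with probability $\Theta(d/n)$ (flip one of the $d$ wrong bits and nothing else), so the best of $\lambda$ offspring improves with probability $\Theta(\min\{1,\lambda d/n\}) = \Theta(\lambda d/n)$ because $d \le en/\lambda$; summing the reciprocal waiting times $\sum_{d=1}^{en/\lambda}\Theta(n/(\lambda d)) = O((n/\lambda)\log n)$ bounds the near-phase generations. In the far phase the key lemma is that with the chosen rate the expected one-generation gain of the best offspring is $\Omega(\log\lambda/\log(en/d))$; feeding this into the upper-bound variable drift theorem and integrating $\int_1^{n/2}\frac{\log(en/s)}{\log\lambda}\,\mathrm{d}s = \Theta(n/\log\lambda)$ gives $O(n/\log\lambda)$ generations. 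Adding the phases and multiplying by $\lambda$ yields $O(n\log n + \lambda n/\log\lambda)$. The hypothesis $\lambda \le e^{\sqrt n}$ enters here exactly to keep $p(x)\,n = \ln\lambda/\ln(en/d) = O(\sqrt n)$, so that each offspring flips far fewer than $d$ bits in the far phase and the destructive flips of correct bits do not cancel the gain, legitimising the drift estimate.

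For the lower bound I would establish the two summands separately. The term $\Omega(n\log n)$ follows because any $\lambda$-parallel unary unbiased algorithm can be simulated sequentially with the same number of evaluations, while the sequential unary unbiased black-box complexity of \onemax is $\Omega(n\log n)$. The term $\Omega(\lambda n/\log\lambda)$ is obtained from a matching \emph{upper} bound on the per-generation drift: from a parent at distance $d$, a single unary unbiased offspring flipping $k$ uniform bits attains a gain of $g$ only with probability roughly $(d/n)^g$, so among $\lambda$ independent offspring the best gain is $O(\log\lambda/\log(en/d))$ in expectation. Plugging this drift ceiling into the variable-drift \emph{lower} bound theorem and integrating the same $\int_1^{n/2}\log(en/s)\,\mathrm{d}s = \Theta(n)$ gives $\Omega(n/\log\lambda)$ generations, i.e. $\Omega(\lambda n/\log\lambda)$ evaluations; since standard bit mutation with rate $p(x)$ is itself a unary unbiased operator, the same bound applies to the algorithm of the theorem, so upper and lower bounds meet.

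The hard part will be the two drift estimates, and especially the progress ceiling underlying the lower bound. Bounding the expected gain of the \emph{best} of $\lambda$ offspring is an order-statistics and tail estimate: one must control, uniformly over the unknown mutation-strength distribution that an adversarial unary unbiased operator may use, the probability that flipping $k$ random bits turns at least $(k+g)/2$ wrong bits right, then take a maximum over $\lambda$ independent copies and an expectation over $g$. Obtaining the precise denominator $\ln(en/d)$—rather than a cruder $\log(n/d)$—is what forces a careful rather than merely order-of-magnitude analysis of these tails, and the matching upper-bound lemma demands the analogous but reversed estimate for the concrete rate $p(x)$; the two must agree up to constants for the $\Theta$ to close.
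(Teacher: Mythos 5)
A preliminary remark: this chapter states Theorem~\ref{thm:SABadkobehLS14} by citation only, so the proof to compare against is that of Badkobeh, Lehre, and Sudholt~\cite{BadkobehLS14}. Your skeleton matches theirs: the upper bound via fitness levels in the $p=1/n$ regime plus a per-generation progress lemma of order $\ln\lambda/\ln(en/d)$ in the far regime (with $d = n - \OM(x)$), and the lower bound split into the sequential unary unbiased $\Omega(n\log n)$ bound (valid since a $\lambda$-parallel algorithm is a restricted sequential one) plus a drift ceiling on the best of $\lambda$ offspring. The upper-bound half and the $\Omega(n\log n)$ reduction are sound in outline, and your reading of the role of $\lambda\le e^{\sqrt n}$ (keeping $p(x)n\le\ln\lambda\le\sqrt n$ so that destructive flips stay controlled) is plausible.

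The genuine gap is in your progress ceiling for the lower bound. The tail estimate $\Pr[\text{gain}\ge g]\approx (d/n)^g$, and with it the drift ceiling $O(\log\lambda/\log(en/d))$, are \emph{false} when $d$ is close to $n/2$. At $d=n/2$ the unbiased operator flipping $k=n/2$ uniformly random bits yields a gain $2W-k$ with $W$ hypergeometric, mean $0$ and standard deviation $\Theta(\sqrt n)$; the best of $\lambda$ independent offspring then gains $\Theta(\sqrt{n\ln\lambda})$ with high probability, vastly exceeding your claimed $O(\ln\lambda)$ ceiling (your tail would assign probability $2^{-\Omega(\sqrt n)}$ to an event of constant probability). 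This is not a cosmetic issue: your integration $\int_1^{n/2}\ln(en/s)\,\mathrm{d}s$ draws a constant fraction of its mass from exactly the region $s\in[n/4,n/2]$ where the lemma fails, so the variable-drift application as you set it up rests on a false premise. The repair is to run the drift argument only on $d\le cn$ for a constant $c<1/4$: there the counting bound $\binom{k}{(k+g)/2}(d/n)^{(k+g)/2}\le (4d/n)^g$, uniform in the mutation strength $k$, restores geometric tails; the region $[cn,n/2]$ is simply discarded (or costs $o(n/\ln\lambda)$ generations), and $\int_1^{cn}$ alone still yields $\Omega(n/\ln\lambda)$ generations, hence $\Omega(\lambda n/\ln\lambda)$ evaluations. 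Two further points you gloss over must then be handled: the ceiling has to hold uniformly over the \emph{parent choice} of an adversarial algorithm, i.e., it should be phrased for the best-so-far distance rather than a current parent (correcting from distance $d'\ge D_t$ to below $D_t-g$ only tightens the tails, so this works but must be stated); and the jump conditions of the variable-drift lower-bound theorem must be verified, including the possible overshoot of order $\sqrt{n\ln\lambda}$ upon first entering $d\le cn$. Finally, your worry about the precise denominator $\ln(en/d)$ versus a cruder $\log(n/d)$ is a red herring for a $\Theta$-result---constants inside the logarithm are immaterial to both bounds; the exact form matters only in the definition of $p(x)$ itself.
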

	
	The performance of this fitness-dependent $(1+\lambda)$ EA for many values of $\lambda$ is superior to the performance of the $(1+\lambda)$ EA with the static mutation rates regarded so far, which is $\Theta(n \log n + \frac{\lambda n \log\log \lambda}{\log \lambda})$ for mutation rate $p = c/n$, $c$ a constant, see~\cite{DoerrK15,GiessenW17Algorithmica}, and  $\Theta\Big(\sqrt{\lambda} n \log n + \frac{\lambda n}{\log \lambda} \Big)$ for  $p=\ln(\lambda)/(2n)$ and $\lambda \in \omega(1) \cap n^{O(1)}$~\cite[Lemma~1.2]{DoerrGWY17}.
	
	In Section~\ref{sec:SADoerrGWY17} we will see an example for a purely success-based adaptation scheme which achieves the same expected performance as the $(1+\lambda)$ EA with fitness-dependent mutation rate. Most recently, a self-adaptive $(1,\lambda)$ EA has been designed, which also achieves the same bound. This algorithm will be discussed in Section~\ref{sec:SAselfadaptive}.
	 
\subsubsection{Fitness-Dependent Mutation Strengths for RLS on OneMax} 

While the result in Section~\ref{sec:SA1lambdaOM} is of asymptotic order only, one might hope to get more precise results for selected values of $\lambda$. Unfortunately, the precise relationship between function values and optimal mutation rates is not even known in the very special case $\lambda=1$. What is known, however, is the following. 

In~\cite{DoerrDY16} it is shown that the best possible running time on \onemax that any unary unbiased black-box algorithm can achieve is $n \ln(n) - cn \pm o(n)$ for a constant $c$ between $0.2539$ and $0.2665$\ifthenelse{\equal{\IsChapter}{true}}{ (cf. Section~\ref{sec:BBCunbiased} in Chapter~\ref{chap:BBC} for a discussion of unbiased black-box algorithms).}{.} It cannot be better by more than an additive $o(n)$ term than the expected optimization time attained by the RLS variant that chooses in every iteration the mutation strength (i.e., the number of bits to be flipped) in a way that maximizes the expected progress. By the symmetry of the \onemax function, this \emph{drift-maximizing mutation strength} depends only on the fitness of the current-best solution, and not on the structure of this search point. More precisely, when $\ell$ different bits of the search point $x$ are flipped to create $y$, the expected progress $\E[\max\{\OM(y)-\OM(x),0\}]$ equals 
\begin{align}\label{eq:SAdriftOM}
	\sum_{i=\lceil \ell/2 \rceil}^{\ell}
	\frac{\binom{n-\OM(x)}{i} \binom{\OM(x)}{\ell-i}\left(2i-\ell \right)}{\binom{n}{\ell}}.
\end{align}
The drift-maximizing mutation strength $r_{\opt}(x)$ is the value of $\ell$ that maximizes this expression.\footnote{No easy to interpret algebraic relationship between $x$ and $r_{\opt}(x)$ could be established in~\cite{DoerrDY16}, and an approximation of $r_{\opt}(x)$ is therefore used in that work. It is shown, however, that this affects the overall performance by at most $o(n)$ iterations.} 

\begin{theorem}[Theorem~9 in~\cite{DoerrDY16}]\label{thm:SADoerrDY16}
The expected optimization time $\E[T]$ of the drift-maximizing algorithm with fitness-dependent mutation strengths $r_{\opt}(x)$ is $n \ln(n) - cn \pm o(n)$ for a constant $c$ between $0.2539$ and $0.2665$. The unary unbiased black-box complexity is smaller than $\E[T]$ by an additive term of at most $o(n)$.
\end{theorem}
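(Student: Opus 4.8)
The plan is to track the fitness distance $d := d(x) = n - \OM(x) \in [0..n]$ and to turn one-step progress estimates into a running time bound via the variable drift theorem. Since the algorithm is elitist and, by construction, flips the number $r_{\opt}(x)$ of bits maximizing \eqref{eq:SAdriftOM}, the distance process $(d_t)$ is non-increasing and, as \eqref{eq:SAdriftOM} only involves $\OM(x)$ and $n-\OM(x)$, its one-step drift
\[
\Delta(d) := \E[d_t - d_{t+1} \mid d_t = d] = \max_{\ell} \sum_{i=\lceil \ell/2\rceil}^{\ell} \frac{\binom{d}{i}\binom{n-d}{\ell-i}(2i-\ell)}{\binom{n}{\ell}}
\]
depends on $x$ only through $d$ and is increasing in $d$. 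Writing $T$ for the first time $d_t=0$, I would apply the upper-bound variable drift theorem to get $\E[T]\le \frac{1}{\Delta(1)} + \int_1^{d_0}\frac{dz}{\Delta(\lfloor z\rfloor)}$, and a bounded-step lower-bound version to get a matching lower bound up to an $o(n)$ error; here one uses that a uniform start satisfies $d_0 = n/2 + o(n)$ with high probability and that the jumps stay small outside a negligible window (the drift is $\Theta(1)$, hence the steps $O(\log n)$ whp, except in an $o(n)$-measure window near $d=n/2$). Both directions reduce $\E[T]$ to $\sum_{d=1}^{n/2} 1/\Delta(d) + o(n)$.

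The heart of the argument is a sufficiently precise estimate of $\Delta(d)$. The number of zero-bits among the $\ell$ flipped positions is hypergeometrically distributed, so for $d=\alpha n$ I would approximate it by $\Bin(\ell,\alpha)$, and for the larger $\ell$ that become relevant as $\alpha\to 1/2$ by a Gaussian with mean $\ell\alpha$ and variance $\ell\alpha(1-\alpha)\tfrac{n-\ell}{n-1}$. Using $\E[\max\{X-a,0\}]=(\mu-a)\Phi(\tfrac{\mu-a}{\sigma})+\sigma\phi(\tfrac{\mu-a}{\sigma})$ for $X\sim\mathcal N(\mu,\sigma^2)$ with $a=\ell/2$ gives a closed-form approximation of the drift as a function of $\ell$ that can be optimized analytically. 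Two facts must be made rigorous: (i) for $\alpha$ below a constant threshold $\alpha_0$ the maximizer is $r_{\opt}=1$, so $\Delta(d)=d/n$ exactly; and (ii) for $\alpha\in(\alpha_0,1/2]$ the optimal $\ell$ grows and yields a strictly larger drift $\Delta(d)>d/n$. Since $r_{\opt}$ has no closed form (see the footnote to \eqref{eq:SAdriftOM}), I would replace it by the tractable maximizer of the Gaussian approximation; the task is to show the induced drift loss contributes only $o(n)$ to $\sum 1/\Delta(d)$.

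With these estimates I would write $\sum_{d=1}^{n/2}1/\Delta(d) = \sum_{d=1}^{n/2}\tfrac nd - \sum_{d=1}^{n/2}\big(\tfrac nd - \tfrac{1}{\Delta(d)}\big)$. The first (harmonic) sum equals $n\ln n - (\ln 2)\,n + \gamma n + o(n)$ with Euler's constant $\gamma$. In the second sum every term with $d\le \alpha_0 n$ vanishes (single-bit regime), so it equals $\sum_{\alpha_0 n<d\le n/2}\big(\tfrac{n}{d}-\tfrac{1}{\Delta(d)}\big) = s\,n+o(n)$, where $s=\int_{\alpha_0}^{1/2}\big(\tfrac1\alpha-\tfrac1{\delta(\alpha)}\big)\,d\alpha$ and $\delta(\alpha)=\lim_n \Delta(\alpha n)$ (the window near $\alpha=1/2$ where $\Delta=\Theta(\sqrt n)$ contributes $o(n)$). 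Collecting terms gives $\E[T]=n\ln n - c n \pm o(n)$ with $c=\ln 2-\gamma+s$. Because $\delta(\alpha)$ is known only through the hypergeometric/Gaussian estimates, $c$ is not available in closed form; evaluating the integral against matching upper and lower drift estimates (and absorbing the approximation of $r_{\opt}$) produces the two-sided numerical bound $0.2539\le c\le 0.2665$.

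For the black-box statement, one direction is free: the drift-maximizing algorithm is itself unary unbiased, so the unary unbiased complexity is at most $\E[T]$. For the matching lower bound I would argue that any unary unbiased algorithm holding a best-so-far search point at distance $d$ increases the best \OM-value in expectation by at most $\Delta(d)$ per query: a unary unbiased operator samples an offspring at a random Hamming radius, uniformly on each sphere, so its expected progress is a mixture of the radius-$\ell$ progresses and hence at most their maximum $\Delta(d)$. Feeding this into the lower-bound variable drift theorem gives that every unary unbiased algorithm needs at least $\sum_{d=1}^{n/2}1/\Delta(d)-o(n)=\E[T]-o(n)$ queries in expectation. I expect the main obstacles to be exactly this last step—handling algorithms that are neither elitist nor Markovian and showing that greedy drift maximization loses only $o(n)$—together with the precise drift estimate of the second paragraph; the drift-theorem bookkeeping and the harmonic-sum asymptotics are routine.
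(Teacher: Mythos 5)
Your proposal follows essentially the same route as the original proof in~\cite{DoerrDY16} (the chapter only cites Theorem~9 and contains no proof of its own): reduce unary unbiased variation to uniform $\ell$-bit flips, maximize the hypergeometric drift~\eqref{eq:SAdriftOM} over $\ell$ (with $1$-bit flips uniquely optimal once the fitness exceeds $(\tfrac23+o(1))n$, as the chapter itself notes), apply a variable drift upper bound and a tailored lower bound that copes with the unbounded steps near $d=n/2$ and with non-elitist, history-using algorithms, and evaluate $\sum_d 1/\Delta(d)$ via two-sided estimates of the drift in place of the closed-form-free $r_{\opt}$. Your decomposition $c=\ln 2-\gamma+s$ with $s\in[0.138,0.151]$ is exactly consistent with the chapter's remark that the drift maximizer beats RLS by an additive term between $0.138n$ and $0.151n$, and the two obstacles you single out---the lemma that mutating a dominated history point gains at most the best-so-far drift $\Delta(d)$, and a variable-drift lower bound tolerating rare large jumps---are precisely the technical heart of~\cite{DoerrDY16}.
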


Compared to RLS or the RLS variant using an optimized initialization phase presented and analyzed in~\cite{LaillevaultDD15}, the bound in Theorem~\ref{thm:SADoerrDY16} is smaller by an additive term between $0.138 n \pm o(n)$ and $0.151 n \pm o(n)$. For problem dimensions $\le 10,000$ the advantage of the drift-maximizing algorithm over classic RLS is around $2\%$.

In the language of fixed-budget computation as introduced by Jansen and Zarges in~\cite{JansenZ14} the drift-maximizing algorithm with a budget of at least $0.2675 n$ iterations computes a solution with expected fitness distance to the optimum roughly $13\%$ smaller than the output proposed by RLS~\cite[Section~6]{DoerrDY16}. 

\subsubsection{Fitness-Dependent Offspring Population Sizes in the \texorpdfstring{$(1+(\lambda,\lambda))$ Genetic Algorithm}{(1+(l,l) Genetic Algorithm}} All the results above concern the control of the mutation rate. A fitness-dependent choice of the \emph{offspring population sizes} was considered in~\cite{DoerrDE15} for the \ga on \onemax. Since this algorithm later gave rise to a growing interest in parameter control (note that the conference version~\cite{DoerrDE13} appeared before most of the other results mentioned in this section), we describe this algorithm in more detail. Note in particular that in contrast to the purely mutation-based algorithms mentioned above, the \ga also uses crossover. 

The \ga (Algorithm~\ref{alg:SAga}) works with a parent population of size one. This population $\{x\}$ is initialized with a search point chosen from $\{0,1\}^n$ uniformly at random. The \ga then proceeds in iterations, each consisting of a mutation phase, a crossover phase, and a final elitist selection step determining the new parent population.

In the \emph{mutation phase}, a step size $\ell$ is chosen at random from the binomial distribution $\Bin(n,p)$, where the parameter $p$ is called the mutation rate of the algorithm. Then independently $\lambda$ offspring are created by flipping exactly (i.e., pairwise different) $\ell$ random bits in $x$. In an intermediate selection step, one best mutation offspring $x'$ is selected as mutation winner. In the \emph{crossover phase}, again $\lambda$ offspring are created; this time via a biased uniform crossover between~$x$ and~$x'$, taking each entry from $x'$ with probability $c$ only and taking the entry from $x$ otherwise. Again, an intermediate selection chooses one of the best crossover offspring $y$ as crossover winner. In the final \emph{selection step}, this $y$ replaces $x$ if its \emph{fitness} is at least as large as the fitness of $x$; i.e., if and only if $f(y) \geq f(x)$ holds. 

\begin{algorithm2e}[t]%
	\textbf{Initialization:} 
	Choose $x \in \{0,1\}^n$ uniformly at random (u.a.r.)\;
 \textbf{Optimization:}
\For{$t=1,2,3,\ldots$}{
\underline{\textbf{Mutation phase:}}\\
\Indp
Sample $\ell$ from $\Bin(n,p)$\label{line:L}\;
\lFor{$i=1, \ldots, \lambda$\label{line:mutstart}}{$x^{(i)} \assign \mut_{\ell}(x)$}
Choose $x' \in \{x^{(1)}, \ldots, x^{(\lambda)}\}$ with $f(x')=\max\{f(x^{(1)}), \ldots, f(x^{(\lambda)})\}$ u.a.r.\label{line:mutend}\;
\Indm
\underline{\textbf{Crossover phase:}}\\
\Indp
\lFor{$i=1, \ldots, \lambda$\label{line:costart}}{$y^{(i)} \assign \cross_{c}(x,x')$}
Choose $y \in \{y^{(1)}, \ldots, y^{(\lambda)}\}$ with $f(y)=\max\{f(y^{(1)}), \ldots, f(y^{(\lambda)})\}$ u.a.r.\label{line:coend}\;
\Indm
\underline{\textbf{Selection step:}}
\lIf{$f(y)\geq f(x)$}{$x \assign y$
}
}
\caption{The \ga maximizing a given function $f : \{0,1\}^n \to \R$ with offspring population size~$\lambda$, mutation rate $p$, and crossover bias $c$. The mutation operator $\mut_\ell$ generates an offspring from one parent by flipping exactly $\ell$ random bits (without replacement). The crossover operator $\cross_c$ performs a biased uniform crossover, taking bits independently with probability $c$ from the second argument.}
\label{alg:SAga}
\end{algorithm2e}

The \ga has thus three parameters that need to be set prior to any execution: the offspring population size $\lambda$, the mutation rate $p$, and the crossover bias $c$. Using intuitive considerations, it was suggested in~\cite{DoerrDE15} to use $p=\lambda/n$ and $c=1/\lambda$. With these choices, the 3-dimensional parameter space is reduced to a one-dimensional one, and only $\lambda$ needs to be set. In~\cite{DoerrDE15} it was shown that choosing $\lambda=\Theta(\sqrt{\log n})$ yields an expected running time of $O\left(\max\left\{\frac{n \log(n)}{\lambda}, \lambda n\right\}\right)$ for the \ga on the \onemax problem. This bound was later improved to $F^*=\Theta(n \sqrt{\log(n) \log\log\log(n) / \log\log(n)})$ in~\cite{DoerrD15tight}; this expected running time is attained for a slightly larger value of $\lambda$, namely $\lambda^*=\Theta(\sqrt{\log(n) \log\log(n)/\log\log\log(n)})$. Finally, \cite{Doerr16} showed that the suggested dependencies $p=\lambda/n$ and $c=1/\lambda$ are asymptotically optimal in the sense that any static parameter combination $(p,c,\lambda)$ that gives an expected running time of $O(F^*)$ needs to satisfy $p=\Omega(\lambda^*/n)$, $p = (1/n)\exp(O(\sqrt{{\log(n) \log\log\log(n)} / {\log\log(n)}}\,))$, $c=\Theta(1/(pn))$, and $\lambda=\Theta(\lambda^*)$. No parameter combination can achieve an asymptotically better running time than $\Theta(F^*)$. 

The results mentioned above all concern static parameter values. In terms of dynamic parameters, it was observed already in~\cite{DoerrDE15} that a better expected running time, namely a linear one, can be achieved by the \ga on \onemax if we allow the parameters to depend on the function values. This linear expected performance has later been shown to be asymptotically optimal. 

\begin{theorem}[Theorem~8 in~\cite{DoerrDE15} and Sections~5 and~6.5 in~\cite{DoerrD18ga}]
\label{thm:SAgafitness}
The expected optimization time of the \ga with $p=\lambda/n$, $c=1/\lambda$, and $\lambda=\sqrt{n/(n-f(x))}$ on \onemax is $\Theta(n)$, and this is asymptotically best possible among all dynamic parameter choices. For any static parameter values $(p,c,\lambda)$ the expected running time of the \ga on \onemax is of order at least $n \sqrt{\log(n) \log\log\log(n) / \log\log(n)}$, and thus strictly larger than linear. 
\end{theorem}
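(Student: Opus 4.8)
The statement combines three claims: (i) an $O(n)$ upper bound for the expected optimization time (number of fitness evaluations) of the \ga with the fitness-dependent choice $\lambda = \sqrt{n/(n-f(x))}$, $p=\lambda/n$, $c=1/\lambda$; (ii) a matching $\Omega(n)$ lower bound holding for \emph{every} dynamic parameter choice, which simultaneously yields optimality and the lower-bound half of the $\Theta(n)$ claim; and (iii) the $\Omega(n\sqrt{\log n \,\log\log\log n/\log\log n})$ bound for static parameters. Claim (iii) I would not prove anew: it is exactly the static lower bound already established in \cite{Doerr16} and quoted above ($F^*=\Theta(n\sqrt{\log(n)\log\log\log(n)/\log\log(n)})$, with no static combination beating $\Theta(F^*)$), so invoking it suffices. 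The plan is therefore to concentrate on (i) and (ii), tracking throughout the potential $d := n-f(x)$, so that $\lambda=\sqrt{n/d}$ and each iteration costs exactly $2\lambda$ evaluations.

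For the upper bound (i) the plan is a two-step drift argument. First I would show that one iteration achieves $d\mapsto d-1$ (or more) with probability $\Theta(1)$. In the mutation phase $\ell\sim\Bin(n,\lambda/n)$ concentrates around $\lambda$, and a single $\mut_\ell$-offspring flips at least one of the $d$ zero-bits with probability $\approx \ell d/n = \lambda d/n = 1/\lambda$; taking the best of $\lambda$ such offspring, the mutation winner $x'$ carries a genuine zero-to-one flip with probability $1-(1-1/\lambda)^\lambda = \Theta(1)$. Conditioned on this, a single $\cross_c$-offspring keeps that good bit and discards all $\Theta(\lambda)$ bad flips with probability $\ge c\,(1-c)^{\Theta(\lambda)}=\Theta(1/\lambda)$, so the best of $\lambda$ crossover offspring does so with probability $\Theta(1)$ and is then accepted. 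Hence the drift of $d$ is bounded below by a positive constant. Second, I would charge the cost band by band: on the dyadic band $d\in[D,2D)$ the constant drift gives $O(D)$ iterations (additive drift theorem), each costing $2\lambda=\Theta(\sqrt{n/D})$ evaluations, i.e.\ $O(\sqrt{nD})$ per band; summing over $D=n,n/2,\dots,1$ gives $\sum_D\sqrt{nD}=\Theta(n)$, matching $\sum_{d=1}^n\sqrt{n/d}=\Theta(n)$.

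For the lower bound (ii) I would reduce everything to one per-iteration drift inequality. The structural key is that the crossover winner $y$ agrees with $x$ outside the $\ell$ positions in which $x'$ differs from $x$, so the gain $\Delta_t:=f(x_{t+1})-f(x_t)$ is bounded by the number of zero-bits of $x$ that are both flipped in the mutation winner and taken in the winning crossover offspring. I expect one can prove the absolute bound
\begin{equation*}
\E[\Delta_t \mid \F_t] \le C\,\lambda_t
\end{equation*}
for a universal constant $C$ and \emph{all} choices of $\lambda_t,p,c$. Granting this, the process $Y_t := d_t + C\sum_{s<t}\lambda_s$ is a submartingale, since $\E[d_{t+1}\mid\F_t]\ge d_t-C\lambda_t$; optional stopping at the hitting time $T$ of the optimum, where $d_T=0$, gives $\tfrac{C}{2}\,\E[\sum_{s<T}2\lambda_s]=\E[Y_T]\ge\E[Y_0]=\E[d_0]=\Theta(n)$, so the expected number $\sum_s 2\lambda_s$ of evaluations is $\Omega(n)$. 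Combined with (i), this proves $\Theta(n)$ and its optimality.

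The main obstacle is the drift inequality $\E[\Delta_t\mid\F_t]=O(\lambda_t)$ in (ii), because it must hold uniformly over all admissible $\ell$ (including $\ell$ close to $n$), all crossover biases $c$, and all distances $d$, and because $\Delta_t$ is governed by the order statistics of two successive best-of-$\lambda$ selections. The delicate case will be large $\ell$ with moderate $c$, where many bits are flipped and a best-of-$\lambda$ crossover could in principle net several fixed zeros at once: one must show that the expected maximal net gain over the $\lambda$ crossover offspring still grows no faster than $\lambda$. I would handle this by conditioning on the numbers $a$ of zero-flips and $b=\ell-a$ of one-flips in the mutation winner, bounding $\E[\Delta_t]$ by the expected maximum of $\lambda$ i.i.d.\ differences $\Bin(a,c)-\Bin(b,c)$ truncated at $0$, and optimizing this over $a,b,c$. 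The upper-bound analysis of (i) suggests that the efficiency $\E[\Delta_t]/\lambda_t$ is maximized, up to constants, exactly by the prescribed setting, which is precisely what makes $\Theta(n)$ simultaneously achievable and unbeatable.
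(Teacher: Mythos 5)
First, a point of context: the chapter itself contains no proof of this theorem --- it is quoted from the literature, with the upper bound being Theorem~8 of~\cite{DoerrDE15} and both lower bounds coming from Sections~5 and~6.5 of~\cite{DoerrD18ga}. Your part~(i) is essentially the cited upper-bound argument: the mutation winner carries at least one flipped zero-bit with probability $1-(1-1/\lambda)^\lambda=\Theta(1)$ (correct, since all mutants flip exactly $\ell$ bits, so the fittest one maximizes the number of flipped zero-bits), the crossover repairs with probability $\Theta(1)$ via $\lambda \cdot c(1-c)^{O(\lambda)}=\Theta(1)$, and $\sum_{d=1}^n \sqrt{n/d}=\Theta(n)$; your dyadic-band accounting is an equivalent repackaging of the fitness-level computation. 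Invoking~\cite{Doerr16} (respectively~\cite{DoerrD18ga}) for the static lower bound~(iii) is also appropriate.

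The genuine gap is in part~(ii): your central inequality $\E[\Delta_t \mid \F_t] \le C\lambda_t$, claimed uniformly over all adaptive $(p,c,\lambda)$ and all distances $d=n-f(x)$, is \emph{false}. Take $d=n/2$, $\lambda=2$, $p=1$ (so $\ell=n$ surely and $x'=\bar{x}$ is forced), and $c=1/2$. Then each crossover offspring is a uniform random string, its fitness gain over $x$ is distributed as $\Bin(n/2,\tfrac12)-\Bin(n/2,\tfrac12)$, a symmetric variable with standard deviation $\Theta(\sqrt{n})$, and elitist selection yields $\E[\Delta_t]=\E[\max\{0,Z_1,Z_2\}]=\Omega(\sqrt{n}) \gg C\lambda_t$ at a cost of only four evaluations. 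So $Y_t=d_t+C\sum_{s<t}\lambda_s$ is not a submartingale and the optional-stopping step collapses. Your own ``delicate case'' paragraph identifies the right object --- the truncated maximum of $\lambda$ copies of $\Bin(a,c)-\Bin(b,c)$ --- but optimizing over $a,b,c$ \emph{reveals} the failure rather than fixing it: writing the bound as $-\alpha c+\beta\sqrt{c}$ with $\alpha\propto \ell(1-2d/n)$ and $\beta\propto\sqrt{\ell\log\lambda}$, the optimum is of order $\frac{n\log\lambda}{|n-2d|}$, which blows up as $d\to n/2$ (the algorithm can then emulate cheap random sampling). Any correct proof must therefore be distance-dependent: one shows the $O(\lambda)$-per-iteration bound only for $d$ bounded away from $n/2$, say $d\le n/4$ (where the negative drift term $-\Omega(c\ell)$ caps the gain at $O(\log\lambda)\le O(\lambda)$ even after accounting for the best-of-$\lambda$ boosts in both phases), adds a separate argument that no single iteration jumps over the window $[n/8,n/4]$ (a gain of $\Omega(n)$ from $d\le n/2$ has probability $e^{-\Omega(n)}$ per iteration, by binomial concentration), and only then runs your drift/optional-stopping argument inside that region, where $\Omega(n)$ fitness must still be gained at $O(1)$ gain per evaluation. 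This two-regime structure is exactly what the cited proof in~\cite{DoerrD18ga} supplies and what your single global potential cannot replace.
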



In Section~\ref{sec:SAsuccess} we will discuss a success-based parameter control mechanism that identifies and tracks good values for $\lambda$ in an automated way.


\section{Success-Based Parameter Control}\label{sec:SAsuccess}

As success-based parameter control mechanisms we classified all those which change the parameters from one iteration to the next, based on the outcome of the iteration. This includes in particular multiplicative update rules which change parameters by constant factors depending on whether the iteration was considered a success or not. 

\subsection{The 1/5-th Success Rule and Other Multiplicative Success-Based Updates}\label{sec:SARechenberg}

Already the very early works on evolution strategies used a simple, yet powerful technique to adapt the parameters online. The so-called \emph{1/5-th success rule}, which was independently discovered in~\cite{Rechenberg, Devroye72, SchumerS68}, suggests to set the step size of an evolution strategy in such a manner that $1/5$-th of the iterations lead to a fitness improvement. The idea behind this is that when the success rate is higher, then most likely the step size is too small and time is wasted on minor improvements; however, when the success rate is smaller, then time is wasted by waiting too long for an improvement. The value $1/5$ was derived from some theoretical considerations for the performance of the (1+1) evolution strategy on the \emph{sphere} problem $f:\R^n \to \R, x \mapsto \sum_{i=1}^n{x_i^2}$. Rechenberg showed that a success rate of about $20\%$ yields optimal expected gain for this problem (and also on another problem with a so-called inclined ridge, cf.~\cite{Rechenberg} for details).  
%

The first implementations of this 1/5-th success rule were not success-based in our language, but rather observed the success rate over several iterations and then adjusted the step size if a discrepancy from the target success rate of $1/5$ was detected. In~\cite{KernMHBOK04}, a simpler success-based implementation was proposed. Here, the step size is multiplied by some number $F > 1$ in case of success and divided by $F^{1/4}$ in case of no success. The hyper-parameter $F$ is called the \emph{update strength} of the adaptation rule. 

%
%

We next present two examples for success-based parameter control suggested in the literature. 

\textbf{Example 1: the 1/5-th success rule applied to the \ga.} It may be surprising that a simple multiplicative success-based rule can work. We therefore present an illustrated example, the self-adjusting \ga, which has originally been proposed in~\cite{DoerrDE15} and later been formally analyzed on the \onemax problem in~\cite{DoerrD15self}. We will describe this algorithm in more detail in Section~\ref{sec:SAgaadaptive}, but note here only that by using the recommended dependencies $p=\lambda/n$ and $c=1/\lambda$ the self-adjusting \ga requires to set the offspring population size $\lambda$ as only parameter. The value of $\lambda$ is adapted based on the success of a full iteration, using the above-sketched implementation of the 1/5-th success rule suggested in~\cite{KernMHBOK04}. Figure~\ref{fig:SAgalambda} shows how well the optimal fitness-dependent value of the offspring population size $\lambda$ suggested by Theorem~\ref{thm:SAgafitness} (smooth black curve) is approximated by this multiplicative success-based update rule (staggered red curve). The uppermost (blue) curve shows the evolution of the current-best fitness value, from which the optimal fitness-dependent mutation rate is computed. Note that in this figure we show the optimal mutation rates \emph{per iteration}, each of which costs $2 \lambda$ function evaluations. The update strength $F$ in this illustration is set to $1.5$.

\begin{figure}[!h]
\begin{center}
\includegraphics[width=0.75\linewidth]{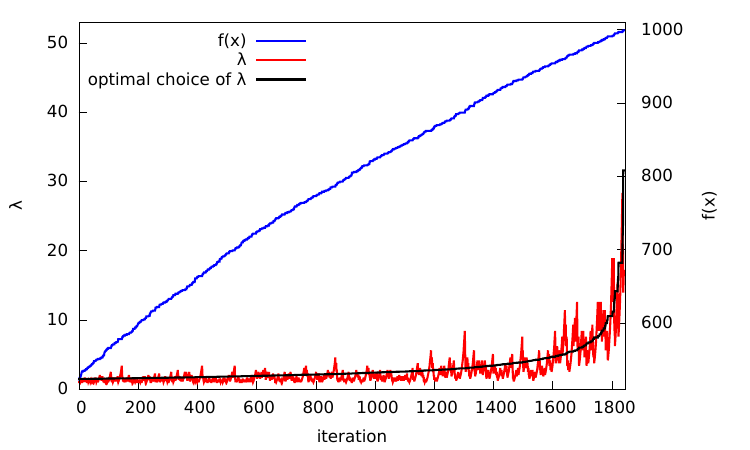}
\end{center}
\caption{Application of the $1/5$-th success rule to the offspring population size in the \ga on \onemax}
\label{fig:SAgalambda}
\end{figure}
%

\textbf{Example 2: The \ocl with success-based offspring population size $\lambda$.} A different success-based parameter control has been suggested in~\cite{HansenGO95} for the control of the offspring population size $\lambda$ in a non-elitist $(1,\lambda)$~evolution strategy (ES). Motivated by a theoretical result that proves that in the $(1,\lambda)$~ES the so-called local serial progress is maximized when the expected progress of the second best offspring created in one iteration is zero (this result applies to any function $f:\R^n \to \R$), the following multiplicative success-based update rule for the offspring population size $\lambda$ has been suggested. Denoting by $x^{(t)}$ the parent individual of the $t$-th iteration, by $\lambda(t)$ the selected offspring population size, and by $x^{(t),1}, \ldots, x^{(t),\lambda(t)}$ the offspring created in the $t$-th iteration, sorted by decreasing function values, then the offspring population size for the next iteration is set to 
\begin{equation}\label{eq:SAhansen}
\lambda^{(t+1)}:= \max\left\{2, \lambda^{(t)} \exp\left(\frac{-\beta (f(x^{(t),2})-f(x^{(t)}))}{\sqrt{\sum_{i=1}^{\lambda(t)}{(f(x^{(t,)i})-f(x^{(t)}))^2/(\lambda-1)}}}\right)\right\},
\end{equation} 
where $\beta \in (0,1)$ is a hyper-parameter that controls the speed of the adaptation. While this update mechanism, to the best of our knowledge, has not been formally analyzed, it is shown in~\cite{HansenGO95} to give good performance on the hyper-plane and the hyper-sphere problem.

\subsection{Theoretical Results for Success-Based Parameter Control}\label{sec:SAsuccesstheo}

In this section we describe the theoretical results known for success-based based parameter control mechanisms. We note that some works on hyper-heuristics resemble closely a success-based parameter control. The reader can find these in Section~\ref{sec:SAHHadv}.

\subsubsection{The Self-Adjusting \texorpdfstring{$(1+(\lambda,\lambda))$~GA}{(1+(l,l)) GA} on OneMax and on MaxSAT}
\label{sec:SAgaadaptive} 

We have seen in Theorem~\ref{thm:SAgafitness} that the \opllga with mutation rate $p=\lambda/n$, crossover bias $c=1/\lambda$, and fitness-dependent population size $\lambda=\sqrt{n/(n-f(x))}$ takes an expected number of $\Theta(n)$ function evaluations to optimize a \onemax instance of problem dimension $n$. This is the asymptotically best running time among all static and dynamic parameter choices. A substantial drawback of this result is the rather complex dependence of $\lambda$ on the current-best function value $f(x)$. The question whether this relationship can be detected by a parameter control mechanism in an automated way suggests itself. In fact, already in~\cite{DoerrDE15} a success-based choice of $\lambda$ was suggested, and shown to achieve a very similar empirical performance as the fitness-dependent choice, across all tested problem dimensions $n \le 5,000$. In~\cite{DoerrD18ga} the efficiency of this success-based variant of the \opllga, which we will describe in more detail below, could be formally proven. 
\begin{theorem}[Theorem~9 in~\cite{DoerrD18ga}]
\label{thm:SAga15}
The expected optimization time of the self-adjusting \ga (Algorithm~\ref{alg:SAgaself}) with mutation rate $p=\lambda/n$, crossover bias $c=1/\lambda$, and sufficiently small update strength $F>1$ on \onemax is $\Theta(n)$.
\end{theorem}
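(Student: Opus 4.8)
The plan is to prove the upper bound $O(n)$; the matching lower bound $\Omega(n)$ is immediate, because the self-adjusting rule is merely a particular dynamic way of choosing $\lambda$, and Theorem~\ref{thm:SAgafitness} already states that no dynamic parameter choice optimizes \onemax in $o(n)$ expected evaluations. Writing $d := n - f(x)$ for the current fitness distance, the guiding principle is that Theorem~\ref{thm:SAgafitness} pinpoints $\lambda^*(d) := \sqrt{n/d}$ as the ideal offspring population size, and that this is exactly the value at which the probability $P(\lambda,d)$ that a single iteration strictly improves the fitness is of constant order. I expect $P(\lambda,d)$ to be monotone increasing in $\lambda$ and to pass through a fixed constant (in particular through $1/5$) as $\lambda$ crosses $\lambda^*(d)$ from below. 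The $1/5$-th rule, which shrinks $\lambda$ after a successful iteration and grows it after an unsuccessful one, is stabilizing with respect to this threshold: a too-large $\lambda$ produces frequent successes and is therefore pulled down, while a too-small $\lambda$ produces frequent failures and is pushed up. Hence the rule should keep $\lambda$ within a constant factor of $\lambda^*(d)$ in a drift sense. Once this is established, the evaluation count telescopes exactly as in the proof of Theorem~\ref{thm:SAgafitness}, giving $\Theta(n)$.

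First I would isolate the two one-iteration estimates that make this picture quantitative, reusing the machinery behind Theorem~\ref{thm:SAgafitness}: (i) monotone lower and upper bounds on $P(\lambda,d)$ exhibiting the $\lambda^*(d)$-threshold, and (ii) a bound on the expected fitness gain conditioned on a success. Together these translate the statement ``the empirical success frequency sits near $1/5$'' into the statement ``$\lambda$ sits near $\lambda^*(d)$,'' and they let me charge a controlled number of iterations to each fitness level.

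The core of the argument is a drift statement coupling the two state variables $d$ and $\lambda$. I would work with a potential $g(x,\lambda) = g_{\mathrm{fit}}(d) + g_{\mathrm{pen}}(\lambda,d)$, where $g_{\mathrm{fit}}$ measures the remaining fitness work and $g_{\mathrm{pen}}$ penalizes $\lambda$ for lying above $\lambda^*(d)$, since an over-large $\lambda$ manifests mainly as expensive iterations ($2\lambda$ evaluations each) rather than as blocked progress. The aim is to show that in every iteration the expected cost is bounded by a constant multiple of the expected potential drop, $\E[2\lambda \mid \mathcal{F}_t] \le C\,\E[-\Delta g \mid \mathcal{F}_t]$, with $g$ initialized at $g_0 = O(n)$; a Wald-type (optional-stopping) summation then yields $\E[\text{evaluations}] \le C g_0 = O(n)$. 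The penalty must be calibrated so that the $\lambda$-increase triggered by a failure raises $g_{\mathrm{pen}}$ by just enough to pre-pay for the more expensive iterations it will cause, while the $\lambda$-decrease triggered by a success releases exactly this credit. The hypothesis that $F>1$ be a sufficiently small constant enters precisely here: it forces $\lambda$ to move in small multiplicative steps and hence to track $\lambda^*(d)$ tightly enough that the total credit ever issued stays $O(n)$.

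I expect the main obstacle to be the design and verification of $g_{\mathrm{pen}}$, together with the control of excursions in which $\lambda$ exceeds $\lambda^*(d)$ by more than a constant factor. Such an excursion is ended by two restoring forces acting on different timescales: the $1/5$-th rule contracts $\lambda$ on the many successes that a too-large $\lambda$ generates, and, independently, any fitness improvement raises $\lambda^*(d)$ and thereby makes the current $\lambda$ appropriate again. Balancing these in a simultaneous drift analysis of $\big(\log(\lambda/\lambda^*(d)),\, d\big)$, ruling out runaway growth of $\lambda$ (using the cap $\lambda \le n$ and the monotonicity of $P(\lambda,d)$), and showing that the accumulated over-investment of evaluations remains $O(n)$ for all sufficiently small $F>1$ is where the genuine work lies; the residual fitness-progress bookkeeping is routine once these estimates are in hand.
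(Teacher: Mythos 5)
First, a point of reference: this chapter states Theorem~\ref{thm:SAga15} without proof, citing~\cite{DoerrD18ga}, so your proposal must be measured against the proof given there. Your skeleton matches its key ingredients: the lower bound $\Omega(n)$ does follow from Theorem~\ref{thm:SAgafitness} (the self-adjusting rule is one particular dynamic choice of $\lambda$); the threshold $\lambda^*(d)=\sqrt{n/d}$ is exactly the scale at which the one-iteration improvement probability becomes a constant; the accounting $\sum_d \sqrt{n/d}=O(n)$ is the telescoping used there; and you correctly located where ``$F$ sufficiently small'' enters, namely in taming excursions of $\lambda$ above $\lambda^*(d)$ --- in the cited proof this appears as the convergence of sums of the type $\sum_k (1-p^*)^k F^{k/4}$, consistent with the chapter's remark that $F>2.25$ may cause exponential runtime. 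The cited paper, however, argues per fitness level with two geometric series (ramp-up cost below $\lambda^*$, overshoot cost above), rather than via an explicit two-variable potential with optional stopping; your formalism is a legitimate repackaging, but it is precisely in setting it up that your proposal has a concrete flaw.

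The gap is the one-sidedness of your penalty. You stipulate that $g_{\mathrm{pen}}$ ``penalizes $\lambda$ for lying above $\lambda^*(d)$,'' hence is flat (or zero) for $\lambda\le\lambda^*(d)$. Now consider the ramp-up regime $\lambda\ll\lambda^*(d)$: a failed iteration changes neither $d$ nor $g_{\mathrm{pen}}$, and the improvement probability there is only $O\big((\lambda/\lambda^*)^2\big)$ (this is what the static analyses behind Theorem~\ref{thm:SAgafitness} give, via $\lambda^2 d/n$), so the expected potential decrease per iteration is at most of order $(\lambda/\lambda^*)^2\cdot\sqrt{n/d} = 2\lambda\cdot\Theta(\lambda/\lambda^*)$, which is smaller than the iteration cost $2\lambda$ by the factor $\lambda/\lambda^*$. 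Thus your central inequality $\E[2\lambda\mid\mathcal{F}_t]\le C\,\E[-\Delta g\mid\mathcal{F}_t]$ fails throughout this regime, and the Wald/optional-stopping summation yields nothing. Relatedly, your sign convention is off: letting a failure \emph{raise} $g_{\mathrm{pen}}$ to ``pre-pay'' future cost makes failures have positive amortized cost, which can only be absorbed in expectation where the success probability is $\Omega(1)$, i.e., above $\lambda^*$ --- it cannot be the mechanism below. The repair is standard and restores your plan: make the $\lambda$-term two-sided, e.g., include a credit term $-B\lambda$ for $\lambda\le\lambda^*(d)$, so that every failed ramp-up step releases $B(F^{1/4}-1)\lambda=\Omega(\lambda)$ of potential (this is the amortized form of the geometric ramp-up series), and verify that the jump of $\lambda^*$ upon an improvement, of order $\lambda^*(d)/d$, is absorbed by the fitness-part decrease $\Theta(\lambda^*(d))$. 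With that correction, and with your threshold claim stated as the provable bound ``improvement probability $\ge p^*=\Omega(1)$ for $\lambda\ge\lambda^*$'' (neither exact monotonicity nor the value $1/5$ at $\lambda^*$ is needed or available; note also that in Algorithm~\ref{alg:SAgaself} only \emph{strict} improvement counts as success), your argument would go through and would essentially be an amortized rendering of the proof in~\cite{DoerrD18ga}.
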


\begin{algorithm2e}[t]%
\textbf{Initialization:} 
		Sample $x \in \{0,1\}^n$ uniformly at random (u.a.r.)\;
		Initialize $\lambda \assign 1$\;
\textbf{Optimization:}
\For{$t=1,2,3,\ldots$}{
\underline{\textbf{Mutation phase:}}\\
\Indp
	Sample $\ell$ from $\Bin(n,p)$\;
	\lFor{$i=1, \ldots, \lambda$}{$x^{(i)} \assign \mut_{\ell}(x)$}
	Choose $x' \in \{x^{(1)}, \ldots, x^{(\lambda)}\}$ with $f(x')=\max\{f(x^{(1)}), \ldots, f(x^{(\lambda)})\}$ u.a.r.\;
\Indm
\underline{\textbf{Crossover phase:}}\\
\Indp
\lFor{$i=1, \ldots, \lambda$}{$y^{(i)} \assign \cross_{c}(x,x')$}
Choose $y \in \{y^{(1)}, \ldots, y^{(\lambda)}\}$ with $f(y)=\max\{f(y^{(1)}), \ldots, f(y^{(\lambda)})\}$ u.a.r.\;
\Indm
\underline{\textbf{Selection and update step:}}\\
\Indp
\lIf{$f(y)>f(x)$}{
$x \assign y$; $\lambda \assign \max\{\lambda/F,1\}$}
\lIf{$f(y)=f(x)$}{
$x \assign y$; $\lambda \assign \min\{\lambda F^{1/4},n\}$}
\lIf{$f(y)<f(x)$}{$\lambda \assign \min\{\lambda F^{1/4},n\}$}
\Indm
}
\caption{The self-adjusting \ga with mutation probability $p$, crossover bias~$c$, and update strength $F$.}
\label{alg:SAgaself}
\end{algorithm2e}

The success-based choice of the parameter $\lambda$ uses the above-mentioned implementation of the 1/5-th success rule considered in~\cite{KernMHBOK04}. That is, after an iteration that led to an increase of the best observed function value (``success''), $\lambda$ is reduced by a constant factor $F > 1$. If an iteration was not successful, $\lambda$~is increased by the multiplicative factor $F^{1/4}$. Consequently, after a series of iterations with an average success rate of $1/5$, this mechanism ends up with the initial value of $\lambda$. 

Since $p=\lambda/n$, the value of $\lambda$ is capped at $n$. Likewise, it is capped from below at $1$. The value of $\lambda$ is allowed to be non-integral. Where an integer is required (i.e., in lines~6,7,9, and~10 of Algorithm~\ref{alg:SAgaself}), $\lambda$~is rounded to its closest integer. That is, in these four lines, instead of $\lambda$ we regard $\lfloor \lambda \rfloor= \lambda - \{\lambda\}$ if the fractional part $\{\lambda\}$ of $\lambda$ is less than $1/2$ and we regard $\lceil \lambda \rceil=\lfloor \lambda\rfloor +1$ otherwise. 

In the experiments conducted in~\cite{DoerrDE13}, see in particular Figure~4 there, all update strengths $F \in [1,2]$ worked well. While this indicates some robustness of the result in Theorem~\ref{thm:SAga15} with respect to the $F$-value, it has been argued in~\cite[Section~6.4]{DoerrD18ga} that update strengths $F$ greater than $2.25$ may lead to an exponential expected optimization time on \onemax.  A commonly used value for $F$, also used in Auger's implementation~\cite{Auger09}, is $F = 1.5$. This is also the value with which Figure~\ref{fig:SAgalambda} has been created. 

One may further wonder how important is the relationship of the two multiplicative updates, that is, the exponent $1/4$. It is argued in~\cite[Section~6.4]{DoerrD18ga} that a similar result as in Theorem~\ref{thm:SAga15} is likely to hold for a range of other exponents as long as the exponent is not too large. Hence in discrete optimization, there is no particular reason for a $1/5$-th rule (that is, the exponent $1/4$). 
This has also been observed in a recent work on image composition, where a success-based $1/k$-th success rule was used to adjust the length of a random walk that is part of the mutation operator~\cite{NeumannSCN17image}. In a set of initial experiments $k=9$ seemed to be a suitable value, and is used for the empirical evaluations.    

Being the first algorithm which can provably reduce the expected optimization time by applying a success-based parameter control mechanism, the self-adjusting \ga has been analyzed also on other functions, by empirical and theoretical means. Already in~\cite[Section~4]{DoerrDE15} a promising empirical performance for linear functions $f:\{0,1\}^n \to \R, x \mapsto \sum_{i=1}^n{w_i x_i}$ with random weights $w_i\in [1,2]$ and for so-called royal road functions with block size $5$ was reported. 
In~\cite{GoldmanP14} the self-adjusting \ga is tested on a number of combinatorial problems. In particular for the maximum satisfiability problem, the self-adjusting \ga shows a very good performance, beaten only by the parameterless population pyramid proposed in the same work. Inspired by this result, a mathematical running time analysis of the \ga on random satisfiability instances was conducted in~\cite{BuzdalovD17}. It confirms that the \ga has a better performance than solely mutation-based algorithms, see, e.g.,~\cite{DoerrNS17}. The work however also shows that weaker fitness-distance correlation of the satisfiability instances can lead to the effect that when offspring are created with a high mutation rate, then the algorithm has problems  determining the structurally better ones. This difficulty can be overcome by imposing an upper limit on the population size $\lambda$, which determines the mutation rate $p = \lambda /n $.

\subsubsection{The \texorpdfstring{\opl}{(1+l) EA} with Success-Based Offspring Population Size \texorpdfstring{$\lambda$}{l}} 
\label{sec:SALassigS11}

For the \opl, the following success-based adaptation of the offspring population size has been suggested in~\cite[Section~5]{JansenJW05}. The offspring population size $\lambda$ is initialized as one. After each iteration, the number $s$ of offspring having a function value that is at least as large as that of the parent fitness is determined. When $s=0$ (i.e., if the iteration has been unsuccessful), the offspring population size $\lambda$ is doubled, otherwise it is replaced by $\lfloor \lambda/s \rfloor$. The intuition for this adaptive choice of the offspring population is to have the value of $\lambda$ inversely proportional to the probability of creating an offspring that replaces its parent. This algorithm, which we call the $(1+\{2\lambda,\lambda/s\})$~EA, had not been analyzed by mathematical means in~\cite{JansenJW05}, but showed encouraging empirical performance on \onemax, \leadingones, and a benchmark function called \textsc{SufSamp}.

The idea of a success-based offspring population size was taken up in~\cite{LassigS11}, where a theoretical analysis of two similar success-based update schemes was performed. The first update scheme, the $(1+\{2\lambda,1\})$~EA, doubles $\lambda$ in case no strictly better search point could be identified and sets $\lambda$ to one otherwise. The second \opl variant, the $(1+\{2\lambda,\lambda/2\})$~EA, also doubles $\lambda$ if no solution of quality better than the parent is found, and reduces $\lambda$ to $\max\{1,\lfloor \lambda/2 \rfloor\}$ otherwise. While these schemes do not result in an improved overall running time in terms of function evaluations, they are both able to achieve a significant reduction of the \emph{parallel} optimization time on selected benchmark problems. That is, the average number of \emph{generations} needed before an optimal solution is evaluated for the first time is smaller than that of classical sequential EAs, which do not perform any evaluations in parallel. The precise results are as follows.
\begin{table}
\begin{center}
\begin{tabular}{c|c|c|l}
Function & Algorithm & $\E[T^{\text{seq}}]$ & $\E[T^{\text{par}}]$ \\
\hline
\onemax & $(1+\{2\lambda,1\})$~EA & $\Theta(n \log n)$     &  $O(n)$ [*]       \\
& $(1+\{2\lambda,\lambda/2\})$~EA &   $\Theta(n \log n)$           &  $O(n)$ \\
\hline
\leadingones & $(1+\{2\lambda,1\})$~EA &  $\Theta(n^2)$    &      $\Theta(n \log n)$   \\
& $(1+\{2\lambda,\lambda/2\})$~EA &    $\Theta(n^2)$          &  $O(n)$ \\
\hline
unimodal with $d$ different  & $(1+\{2\lambda,1\})$~EA &  $O(dn)$    &  $O(d \log n)$       \\
function values & $(1+\{2\lambda,\lambda/2\})$~EA &     $O(dn)$         &  $O(d+ \log n)$ \\
\hline
$\textsc{jump}_k$, $k\ge 2$ & $(1+\{2\lambda,1\})$~EA & $O(n^k)$     & $O(n+k \log n)$ [*]  \\
& $(1+\{2\lambda,\lambda/2\})$~EA &     $O(n^k)$         &  $O(n+k \log n)$ \\
\hline
\end{tabular}
\caption{Expected sequential and parallel running times of the $(1+\{2\lambda,\lambda/2\})$~EA and the $(1+\{2\lambda,1\})$~EA on selected benchmark problems~\cite{LassigS11}. For the two bounds marked [*], we slightly improve the original bound of $O(n \log n)$ via an elementary argument, cf. proof below Theorem~\ref{thm:SALassigS11}}
\label{tab:SALassigS11}
\end{center}
\end{table}

\begin{theorem}[Theorem~7 in~\cite{LassigS11} and proof below for the results marked [*] in Table~\ref{tab:SALassigS11}]
\label{thm:SALassigS11}
The sequential and parallel expected running time of the $(1+\{2\lambda,\lambda/2\})$~EA and the $(1+\{2\lambda,1\})$~EA satisfy the bounds given in Table~\ref{tab:SALassigS11}.
\end{theorem}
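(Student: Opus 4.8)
The bounds in Table~\ref{tab:SALassigS11} that are not marked [*] are exactly Theorem~7 of~\cite{LassigS11}, so the plan is to prove only the two improved parallel bounds, both for the $(1+\{2\lambda,1\})$~EA: the $O(n)$ bound on \onemax and the $O(n+k\log n)$ bound on $\textsc{jump}_k$. Both follow from a fitness-level argument that exploits the crucial structural feature of this variant: after every improving generation $\lambda$ is reset to $1$, so each fitness level is entered afresh with $\lambda=1$ and the sojourn time on a single level can be analysed in isolation.

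\textbf{Per-level estimate.} First I would prove the following building block. Suppose the algorithm sits at a search point from which one standard bit mutation produces a strictly better offspring with probability $p$. Starting from $\lambda=1$ and doubling $\lambda$ on every unsuccessful generation, the expected number of generations until a strictly better offspring appears is $O(\log(1/p)+1)$, and the expected number of function evaluations is $O(1/p)$. The argument is elementary: after $m:=\lceil \log_2(1/p)\rceil$ doublings we have $\lambda \ge 1/p$, hence every subsequent generation succeeds with probability at least $1-(1-p)^{\lambda}\ge 1-e^{-1}$; since $\lambda$ keeps doubling, the probability of still failing $t$ generations later decays like $\exp(-\Theta(2^{t}))$, so the expected number of generations beyond the first $m$ is $O(1)$ and the associated evaluations, which grow geometrically, still sum to $O(1/p)$. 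This yields the stated per-level bounds (and, summed, re-proves the cited $\Theta(n\log n)$ and $O(n^k)$ sequential times as a sanity check).

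\textbf{Summation.} For \onemax, a point with $i$ zeros is improved with probability $p_i \ge (i/n)(1-1/n)^{n-1}\ge i/(en)$, so $1/p_i=O(n/i)$ and $\log(1/p_i)=O(\log(n/i))$. Because each level is entered with $\lambda=1$, the fitness-level method bounds the expected parallel time by $\sum_{i=1}^{n} O(\log(n/i)+1)$. The decisive \emph{elementary argument} promised in the footnote is then the Stirling estimate $\sum_{i=1}^{n}\ln(n/i)=n\ln n-\ln(n!)=n-O(\log n)=O(n)$; together with $\sum_{i=1}^n 1 = n$ this gives the improved $O(n)$ bound, whereas bounding each term crudely by $\log n$ would only yield the original $O(n\log n)$. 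For $\textsc{jump}_k$ the climb up the slope is \onemax-like on all levels with more than $k$ zeros and therefore contributes $O(n)$ parallel generations by the same computation; from the local optimum (which has exactly $k$ zeros) a single mutation reaches the optimum with probability at least $n^{-k}(1-1/n)^{n-k}\ge e^{-1}n^{-k}$, so crossing the gap costs $O(\log(n^{k})+1)=O(k\log n)$ generations. Adding the two phases gives the claimed $O(n+k\log n)$.

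\textbf{Main obstacle.} The delicate point is the per-level parallel estimate rather than the summation: I must argue that once $\lambda$ has overshot $1/p$ the remaining cost is only $O(1)$ generations in expectation, which requires controlling the doubly-exponentially fast decaying tail of the failure probability under doubling. The remaining gain over~\cite{LassigS11} is then genuinely elementary---it is exactly the observation that $\sum_i \log(n/i)$ collapses via Stirling to $\Theta(n)$ rather than the pessimistic $n\log n$---which is why the footnote advertises it as a slight, elementary improvement.
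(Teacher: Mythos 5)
Your proposal is correct and takes essentially the same route as the paper: the decisive new step in both is the observation that the fitness-level sum of per-level terms of order $\log(n/i)$ collapses to $O(n)$ via Stirling's formula, and the $\textsc{jump}_k$ bound is obtained in both as the \onemax parallel time plus an $O(k\log n)$ term for crossing the gap from the local optimum. The only difference is that you re-derive the per-level building block ($O(\log(1/p)+1)$ expected generations and $O(1/p)$ expected evaluations under doubling, including the doubly-exponential tail for the evaluation count) from scratch, whereas the paper simply quotes the per-level bound $2\log\bigl(\tfrac{2en}{n-i}\bigr)$ from~\cite{LassigS11}; this makes your argument more self-contained but is not a different proof idea.
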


\begin{proof}
Using the classic fitness level method, the expected parallel running time of the $(1+\{2\lambda,1\})$~EA on \onemax is bounded from above by $2 \sum_{i=1}^{n-1}{\log(\tfrac{2en}{n-i})}$ in~\cite{LassigS11}. This expression is further bounded by $2n \log(2en)=O(n \log n)$. However, a closer look reveals that with Stirling's formula, we easily obtain
\begin{align*}
2 \sum_{i=1}^{n-1}{\log\Big(\frac{2en}{n-i}\Big)} 
\le 
2 \log\Big(\frac{(2en)^n}{n!}\Big)
\le 
2 \log\Big(\frac{(2en)^n}{(n/e)^n}\Big)
= 
2 \log((2e^2)^n)
= 
O(n).
\end{align*}
This improved bound immediately carries over to the bound for $\textsc{jump}_k$, $k\ge 2$, where the expected parallel running time of the $(1+\{2\lambda,1\})$~EA is bounded by the expected parallel running time on \onemax plus the time needed to ``jump'' from the local optimum to the global one, which is of order at most $k \log n$. 
\end{proof}

\subsubsection{The 2-Rate \texorpdfstring{\opl}{(1+l)~EA} with Success-Based Mutation Rates}\label{sec:SADoerrGWY17}

In the previous examples, we have studied different ways to control the offspring population size. We now turn our attention to a success-based adaptation of the mutation rates in a \opl with fixed offspring population size $\lambda$, which has been introduced and analyzed in~\cite{DoerrGWY17}. The \opladap stores a parameter $r$ that controls the mutation rate. This parameter is adjusted after each iteration by the following mechanism. In each iteration, the \opladap creates $\lambda/2$ offspring by standard bit mutation with mutation rate $r/(2n)$, and it creates $\lambda/2$ offspring with mutation rate $2r/n$. At the end of the iteration a random coin is flipped. With probability $1/2$ the value of $r$ is replaced randomly by either $r/2$ or $2r$ and with the remaining $1/2$ probability it is set to the value that the winning individual of the last iteration has been created with. Finally, the value $r$ is capped at $2$ if it smaller, and at $n/4$, if it exceeds this value. Algorithm~\ref{alg:SAopladaptive} summarizes this 2-rate \opl variant.

 \begin{algorithm2e}%
	\textbf{Initialization:} 
	Sample $x \in \{0,1\}^{n}$ uniformly at random\;
	Initialize $r \assign r^{\text{init}}$\;
  \textbf{Optimization:}
	\For{$t=1,2,3,\ldots$}{
		\For{$i=1,\ldots,\lambda/2$}{ 
			Create $y^{(i)}$ by flipping each bit in $x$ independently with probability $r/(2n)$\;
		 }
		\For{$i=\lambda/2+1,\ldots,\lambda$}{
		Create $y^{(i)}$ by flipping each bit in $x$ independently with probability $2r/n$\;
		 }
		$x^* \assign \arg\max\{f(y^{(1)}), \ldots, f(y^{(\lambda)})\}$ (ties broken u.a.r.)\;
		\lIf{$f(x^*)\ge f(x)$}{$x \assign x^*$}
		With prob. $1/4$ replace $r$ by $\max\{r/2,2\}$, with prob. $1/4$ by $\min\{2r,n/4\}$, and the remaining prob. replace $r$ by the probability that $x^{*}$ has been created with (capped again at $2$ and $n/4$, respectively)\; 
	}
\caption{The 2-rate \opladap with adaptive mutation probabilities and static population size for the maximization of a pseudo-Boolean function $f:\{0,1\}^n \rightarrow \R$}
\label{alg:SAopladaptive}
\end{algorithm2e}  

\begin{theorem}[Theorem~1.1 
in~\cite{DoerrGWY17}]\label{thm:SADoerrGWY17}
Let $\lambda=\omega(1)$ and $\lambda=n^{O(1)}$. The expected optimization time of the \opladap on \onemax is $\Theta(n \log n + n\lambda/\log \lambda)$. 
\end{theorem}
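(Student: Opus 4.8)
The lower bound is immediate from Theorem~\ref{thm:SABadkobehLS14}. The \opladap generates each of its $\lambda$ offspring by applying standard bit mutation to the current parent and keeps a best-so-far individual; it is therefore a unary unbiased black-box algorithm that produces $\lambda$ offspring per generation (that the rate $r$ is steered by the observed fitnesses is permitted in the black-box model). Since $\lambda=n^{O(1)}\le e^{\sqrt n}$ for all large $n$, the cited theorem says that $\Theta(n\log n+\lambda n/\log\lambda)$ is optimal in this class, so $\E[T]=\Omega(n\log n+\lambda n/\log\lambda)$. For the upper bound I would track $k:=n-\OM(x)$ and count generations, each of which costs $\lambda$ evaluations. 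Writing $m^*(k):=\max\{1,\ln\lambda/\ln(en/k)\}$ for the expected number of flipped bits under the drift-optimal rate of Theorem~\ref{thm:SABadkobehLS14}, I would split the run at $k=en/\lambda$ into the \emph{far region} ($k\ge en/\lambda$, where $m^*>1$) and the \emph{near region} ($k<en/\lambda$, where the optimal rate is $\approx 1/n$).

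Two things must be shown: that the fitness falls quickly whenever the rate parameter is well scaled, and that the self-adjusting rule keeps it well scaled. For the fitness, note that in a generation the half of the offspring using the better-suited of the two offered rates $r/(2n)$, $2r/n$ dominates, and if $r=\Theta(m^*(k))$ these rates straddle the optimum; a concentration estimate for the binomial net gain (zeros flipped minus ones flipped, mean of order $(2k-n)m^*/n$ and variance $\Theta(m^*)$) together with a max-of-$\lambda$ extreme-value computation then gives expected one-generation progress $\Omega(\log\lambda)$ on the bulk of the far region, degrading to $\Omega(\sqrt{\log\lambda})$ only near its lower end $k\approx en/\lambda$. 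For the rate, observe that on the scale $\log_2 r$ the update performs, with probability $1/2$, a mean-zero $\pm1$ step (the random exploration) and, with probability $1/2$, a step $+1$ or $-1$ according to whether the large- or the small-rate half produced the overall best offspring. Hence the drift of $\log_2 r$ equals $q-\tfrac12$, where $q$ is the probability that the large-rate half wins; the point is that $q>\tfrac12+\Omega(1)$ whenever $r$ lies a constant factor below $m^*(k)$ and $q<\tfrac12-\Omega(1)$ a constant factor above, so $\log_2 r$ is a random walk with an $\Omega(1)$ restoring drift toward $\log_2 m^*(k)$.

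Combining the two, a drift/occupation-time argument shows that $r=\Theta(m^*(k))$ in a constant fraction of the generations, and since selection is elitist the remaining generations never increase $k$. Charging progress only to the well-scaled generations, a variable-drift theorem applied to $k$ with drift ranging in $[\Omega(\sqrt{\log\lambda}),O(\log\lambda)]$ yields that the integral $\int_{en/\lambda}^{n}dk/\delta(k)$ is dominated by its $\Theta(\log\lambda)$ bulk and equals $O(n/\log\lambda)$; thus the far region costs $O(n/\log\lambda)$ generations, i.e.\ $O(\lambda n/\log\lambda)$ evaluations. In the near region the same restoring drift pins $r=\Theta(1)$ (so $\approx\lambda/2$ offspring mutate at rate $\Theta(1/n)$), the per-generation improvement probability is $\Theta(k\lambda/n)$, and summing the waiting times $\Theta(n/(k\lambda))$ over $k=en/\lambda,\dots,1$ gives $O((n/\lambda)\log(n/\lambda))=O((n\log n)/\lambda)$ generations, i.e.\ $O(n\log n)$ evaluations. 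Adding the two regions establishes the matching upper bound.

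The hard part is the coupled evolution of $r$ and $k$: the adaptation pulls $r$ toward $m^*(k)$, but $m^*(k)$ itself moves as $k$ shrinks, so $r$ perpetually chases a moving target while the probability-$1/2$ random component repeatedly knocks it off. Turning the heuristic ``$r=\Theta(m^*(k))$ a constant fraction of the time'' into a rigorous statement---quantifying the sign and the $\Omega(1)$ magnitude of $q-\tfrac12$ as a function of $r/m^*(k)$, bounding the length of excursions to badly scaled $r$, and fusing the $r$-drift and the fitness-drift into one valid variable-drift statement---is the technical core. The caps $r\in[2,n/4]$, the transition across the far/near interface, and the constant-factor loss from devoting only half the offspring to the better rate are further complications, but affect only constants and are routine by comparison.
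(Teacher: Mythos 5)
Your lower-bound paragraph coincides exactly with what this chapter itself says: the survey does not prove Theorem~\ref{thm:SADoerrGWY17} at all, but states it as a citation of~\cite{DoerrGWY17} and adds only the remark---identical in substance to yours---that the bound is optimal among $\lambda$-parallel unary unbiased black-box algorithms by Theorem~\ref{thm:SABadkobehLS14}. So that half is correct and matches the paper. Your upper-bound outline must be measured against the original proof in~\cite{DoerrGWY17}, and at the level of the roadmap it reproduces it faithfully: a far/near split around $k \approx n/\lambda$, per-generation fitness drift of order $\log\lambda$ when the rate is well-adjusted, the observation that on the $\log_2 r$ scale the update has drift $q-\tfrac12$ with $q$ the probability that the large-rate subpopulation produces the winner, an occupation-time argument that the rate is well-scaled often enough, and a variable-drift summation; the near-region accounting giving $O(n\log n)$ evaluations is also right.

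The genuine gap is the one you flag yourself, but it is worth naming why your stated key lemma would fail as written. The claim that $q - \tfrac12$ is a sign-definite $\Omega(1)$ quantity whenever $r$ is a constant factor off from $m^*(k)$ is not uniformly true. Two failure modes: (i) since the rate is inherited from the \emph{winner} of the generation (also in unsuccessful generations, where all offspring are rejected), selection rewards variance, not parent-drift; for $k$ close to $n/2$ the mean gain is nearly rate-independent while the best-of-$\lambda$ gain grows with the variance, so the large-rate half can win with probability above $\tfrac12$ even when $r$ already exceeds the drift-maximizing value---the walk is not simply ``restored toward $\log_2 m^*(k)$''; (ii) near the far/near interface and in regimes where neither half improves, the two maxima are nearly identically distributed and the bias can degenerate below any fixed constant. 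This is precisely why the original analysis does not prove a single restoring-drift lemma toward $m^*(k)$ but instead establishes a \emph{corridor} of rates within which the fitness drift is within a constant factor of optimal, and controls excursions from it by region-dependent case distinctions and occupation-probability bounds; your ``constant fraction of well-scaled generations'' statement is the conclusion of that machinery, not an input to it. A further, minor, quantitative slip: at the lower end of the far region one has $m^*(k)=1$ and the best-of-$\lambda$ gain is $\Theta(1)$, not $\Omega(\sqrt{\log\lambda})$; this is harmless for the integral $\int dk/\delta(k) = O(n/\log\lambda)$, but it signals that the extreme-value heuristic needs care exactly where the two regions are glued. In short: correct lower bound, correct architecture for the upper bound, but the two-dimensional drift/occupation core is asserted rather than proved, and its central inequality needs to be weakened to a corridor statement before it is even true.
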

By the result presented in Theorem~\ref{thm:SABadkobehLS14} above, the $\Theta(n \log n + n\lambda/\log \lambda)$ expected running time achieved by the \opladap is best possible among all $\lambda$-parallel black-box algorithms.

\subsubsection{Success-Based Mutation Strengths for the Multi-Variate OneMax Problem} 

In~\cite{DoerrDK16PPSN} a success-based choice of the mutation strength has been proven to be very efficient for a multi-variate generalization of the \onemax problem. Concretely, the authors study three different classes of generalized \onemax functions. Denoting the size of the alphabet by $r$, the first class contains, for all $z \in [0..r-1]^n$, the functions 
$\OM^{(1)}_z: [0..r-1]^n \to [0..n]; x \mapsto |\{i \in [1..n] \mid x_i=z_i\}|$, the second all functions
$\OM^{(2)}_z: [0..r-1]^n \to [0..n(r-1)]; x \mapsto \sum_{i=1}^n |x_i-z_i|$, while the third class subsumes all $r^n$ functions
$\OM^{(3)}_z: [0..r-1]^n \to [0..n(r-1)]; x \mapsto \min\{x_i - (z_i-r), |x_i-z_i|, (z_i+r)-x_i\}$.
Unlike all other settings regarded in this chapter, \cite{DoerrDK18} studies the \emph{minimization} of these \onemax generalizations. In our description below, we stick to this optimization target, to ease the comparison with the original publication.

The self-adjusting algorithm studied in~\cite{DoerrDK16PPSN} is a RLS-variant, which flips one coordinate in every iteration. For each coordinate $i$, a \emph{velocity} $v_i$ is stored, which denotes the mutation strength at this coordinate. When in iteration $t$ coordinate $i$ is chosen for modification, the entry $x_i$ of the current-best solution $x$ is replaced by $x_i - \lfloor v_i \rfloor$ with probability $1/2$ and by $x_i + \lfloor v_i \rfloor$ otherwise. The entries in positions $j \neq i$ are not subject to mutation. The resulting string $y$ replaces $x$ if its fitness is at least as good as the one of $x$, i.e., if $f(y) \leq f(x)$ holds (we recall that we aim at minimizing $f$). If the offspring $y$ is strictly better than its parent $x$, i.e., if $f(y)<f(x)$ holds, the velocity $v_i$ in the $i$-th component is increased by multiplying it with a fixed constant $A>1$ and $v_i$ is decreased to $bv_i$ otherwise, where $b<1$ is again some fixed constant. If the value of $v_i$ drops below 1 or exceeds $\lfloor r/4 \rfloor$, it is capped at these values.

\begin{theorem}[Theorem~17 in~\cite{DoerrDK18}]
\label{thm:SADoerrDK17algo}
For constants $A,b$ satisfying $1<A \leq 2$, $1/2 < b \leq 0.9$, $2Ab-b-A>0$, $A+b>2$, and $A^2b>1$ the expected running time of the $\RLS_{A,b}$ on any of the generalized $r$-valued \OneMax function $\OM^{(i)}_z$, $i \in \{1,2,3\}$ and $z \in [0..r-1]^n$, is $\Theta(n(\log n + \log r))$. This is asymptotically best possible among all comparison-based variants of RLS and the \oea.
\end{theorem}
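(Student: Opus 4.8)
The plan is to exploit the \emph{separability} of all three function classes and reduce the analysis to $n$ essentially independent single-coordinate processes. Since the mutation operator changes exactly one coordinate $i$ per iteration and each $\OM^{(j)}_z$ is a sum of per-coordinate contributions $g_i(x_i)$, the acceptance test $f(y)\le f(x)$ collapses to the one-dimensional test $g_i(y_i)\le g_i(x_i)$, and the velocity update touches only $v_i$. Moreover, the index of the mutated coordinate is uniform and independent of the current velocities and distances. Hence the algorithm decomposes into $n$ velocity-controlled one-dimensional searches, coupled only through a shared ``clock'' that in each global iteration advances one uniformly chosen coordinate by a single step. First I would make this reduction precise and record that the optimum is absorbing in each coordinate: once $x_i=z_i$, any move increases the per-coordinate distance (the floor keeps $\lfloor v_i\rfloor\ge 1$) and is rejected, so $x_i$ never leaves the target.

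The core is a single-coordinate lemma: starting from any state, the expected number $N_i$ of \emph{selections} of coordinate $i$ needed to reach $x_i=z_i$ is $O(\log r)$, with an exponential tail $\Pr[N_i> C(\log r + t)]\le e^{-\Omega(t)}$. I would track the pair (distance $d_i$, velocity $v_i$) and argue a \emph{geometric homing} behaviour: when $v_i\ll d_i$ a step toward the target is a strict improvement, so $v_i$ grows by the factor $A$ until $\lfloor v_i\rfloor\approx d_i$, after which successful steps roughly halve $d_i$ while rejected and neutral steps shrink $v_i$ by $b$. The effect is captured by a potential such as $\Phi=\log d_i + c\,\bigl|\log(v_i/d_i)\bigr|$, for which the stated constraints $A^2b>1$, $A+b>2$ and $2Ab-b-A>0$ are exactly what forces $\E[\Delta\Phi\mid\text{coordinate }i\text{ selected}]\le-\Omega(1)$; multiplicative drift then yields both the $O(\log r)$ bound and its tail. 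These constraints rule out the two failure modes of such a rule: a velocity that grows so fast it perpetually overshoots, and one that shrinks faster than the distance so that the search stalls.

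Given the lemma, I would assemble the global bound by a coupon-collector argument. Each global iteration gives coordinate $i$ one of its own steps with probability $1/n$, so after a union bound $\max_i N_i = O(\log r+\log n)=:k$ holds with probability $1-n^{-\Omega(1)}$; the expected number of global iterations until every coordinate has been selected at least $k$ times is $O(n(\log n + k))=O(n(\log n+\log r))$, and the rare complementary event is absorbed by multiplying its $n^{-\Omega(1)}$ probability with a crude polynomial worst-case bound. For the matching lower bound, $\Omega(n\log n)$ follows from the standard fact that a coordinate-wise operator must, coupon-collector-like, address each of the $n$ coordinates, while $\Omega(n\log r)$ follows from an information-theoretic argument: a comparison-based variant learns only the $O(1)$-bit outcomes of fitness comparisons, and distinguishing the $r^{\,n}$ possible targets $z$ requires $\Omega(\log(r^{\,n}))=\Omega(n\log r)$ such bits.

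I expect the single-coordinate drift analysis to be the main obstacle. The joint $(d_i,v_i)$ dynamics are genuinely non-monotone --- overshoots flip the sign of the displacement, neutral moves (relevant especially for the flat per-coordinate landscape of $\OM^{(1)}$, where moving between two wrong values leaves the fitness unchanged) only shrink the velocity, and both $v_i$ and the effective step $\lfloor v_i\rfloor$ are capped at $1$ from below and $\lfloor r/4\rfloor$ from above. Designing a single potential that yields a uniform negative drift across all three induced one-dimensional landscapes (flat for $\OM^{(1)}$, linear for $\OM^{(2)}$, cyclic for $\OM^{(3)}$) within the same $A,b$ window, and with tails strong enough to survive the union bound over $n$ coordinates, is where the real work lies; the separability reduction and the coupon-collector assembly are comparatively routine.
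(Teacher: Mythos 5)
This survey chapter contains no proof of the theorem---it is imported verbatim from~\cite{DoerrDK18}, and the chapter only sketches the optimality argument---so your proposal has to be measured against the original analysis and the chapter's remarks. At the level of architecture you reconstruct it faithfully: the analysis in~\cite{DoerrDK18} does exploit separability to reduce to per-coordinate processes (the acceptance test collapses to a one-dimensional comparison, the velocity update touches only $v_i$), does bound the number of \emph{selections} a coordinate needs before it is fixed, with tail bounds strong enough to survive a union bound over $n$ coordinates, and does assemble the global bound via occupation counts of the uniformly chosen coordinate. Your two lower bounds are exactly the ones the chapter itself names: the information-theoretic $\Omega(n\log r)$ for comparison-based algorithms and the $\Omega(n\log n)$ coupon-collector/unary-unbiased bound. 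Minor loose ends (boundary handling of moves leaving $[0..r-1]$ in $\OM^{(2)}$, and the ``crude polynomial worst-case bound'' used to absorb the failure event, which should instead be a restart argument based on your exponential tails) are repairable.

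The genuine gap sits exactly where you park it as ``the real work,'' and for one of the three classes it is not work but an impossibility under your reading. First, the assertion that the five constraints on $A,b$ ``are exactly what forces'' $\E[\Delta\Phi]\le -\Omega(1)$ for $\Phi=\log d_i + c\,|\log(v_i/d_i)|$ is the entire technical content of the original proof (the case analysis of overshooting, undershooting, and the caps at $1$ and $\lfloor r/4\rfloor$); nothing in your proposal derives it, and note that the resulting drift on $\Phi$ is additive, so the correct tool is additive drift with Azuma-type concentration, not ``multiplicative drift.'' Second, and decisively: for the flat per-coordinate landscape you attribute to $\OM^{(1)}$, the single-coordinate lemma is \emph{false}, not merely hard. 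On a flat landscape every non-hitting move is neutral, hence accepted with $v_i$ multiplied by $b<1$; after $O(\log r)$ selections $\lfloor v_i\rfloor=1$ and the coordinate performs a symmetric unit-step random walk, whose hitting time is polynomial in $r$. Worse, with pure hit-or-miss feedback each probe excludes at most one candidate value, so \emph{any} mechanism---and a fortiori any choice of potential or of $A,b$---needs $\Omega(r)$ expected selections for a uniformly random target value, which is irreconcilable with $O(\log r)$. Since the theorem is true, the per-coordinate signal exploited in~\cite{DoerrDK18} cannot be flat in any of the three classes; your plan as written therefore does not cover the case $i=1$, and fixing it requires revisiting the reduction (i.e., the actual per-coordinate structure of the first class), not a cleverer potential within the framework you set up.
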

In this theorem, the update strengths can be chosen, for example, as $A\in [1.6,2]$ and $b=(1/A)^{1/4}$, imitating the above-mentioned interpretation of the 1/5-th success rule proposed in~\cite{KernMHBOK04}.
 
Using a result proven in~\cite{Dit-Row-Weg-Woe:j:10}, it is argued in~\cite[Section~6.1]{DoerrDK18} that the $\Theta(n(\log n + \log r))$ expected running time of the self-adaptive RLS variant is better by a multiplicative factor of at least $\log r$ than any RLS or \oea variant using \emph{static} step sizes. The optimality of the bound follows from the simple information-theoretic $\Omega(n \log r)$ lower bound which applies to all comparison-based algorithms, while the $\Omega(n \log n)$ lower bound applies to any unary unbiased black-box algorithm\ifthenelse{\equal{\IsChapter}{true}}{ (cf. Section~\ref{sec:BBCunbiased} of Chapter~\ref{chap:BBC} for a discussion of unbiased black-box algorithms).}{.}

\subsubsection{Success-Based Migration Intervals for Parallel EAs in the Island Model}
\label{sec:SAMambriniS15}

A multiplicative success-based adaptation scheme has also been used to adjust the migration interval in a parallel (1+1) EA with a fixed number of $\lambda$ islands. Mambrini and Sudholt~\cite{MambriniS15} apply the two schemes described in Section~\ref{sec:SALassigS11} for the control of the offspring population size of the \opl now to control the migration interval. In their parallel EA, every island has its own migration interval at the end of which it broadcasts its current-best solution to all of its neighbors. In the $(2\tau_i,1)$ variant of the parallel EA (Algorithm~2 in~\cite{MambriniS15}), improved solutions are always broadcast instantly, to all neighboring islands, and the migration interval $\tau_i$ of the corresponding island is set to one. It is set to one also if during the migration interval at least one superior solution has migrated to the island. The migration interval is doubled otherwise, i.e., if at the end of the migration period no strictly better solution has been identified or migrated from a different island. 

In the $(2\tau_i,\tau_i/2)$ scheme (Algorithm~3 in~\cite{MambriniS15}), the broadcast happens only at the end of the migration interval, which is again doubled in case no improved solution could be identified nor migrated from another island, and halved otherwise. 

The $(2\tau_i,\tau_i/2)$ scheme is analyzed for the complete graph topology, for which all migration intervals $\tau_i$ are identical. For the $(2\tau_i,1)$ variant \cite{MambriniS15} proves results for general graph topologies with $\lambda$ islands as well as for a few selected topologies like the unidirectional ring, the grid, a torus, etc. The results comprise upper bounds on the expected communication effort needed to optimize general black-box optimization benchmarks, cf. Sections~4 and~5 in~\cite{MambriniS15}. These bounds are then applied to the same benchmark functions as those regarded in~Theorem~\ref{thm:SALassigS11}. In some cases, including the complete graph topology, the adaptive migration intervals are shown to outperform any static choice in terms of expected communication effort, without (significantly) increasing the expected parallel running time. Table~1 in~\cite{MambriniS15} summarizes the results for the selected benchmark problems. The bounds proven in~\cite{MambriniS15} are upper bounds, and the question of complementing these with meaningful lower bounds seems to remain an open problem. 


\section{Learning-Inspired Parameter Control}

In contrast to the success-based control mechanisms discussed in the previous section, we call \emph{learning-inspired} all those self-adjusting parameter control mechanisms which are based on information obtained over more than one iteration. 

\subsection{Adaptive Operator Selection}

An important class of parameter control schemes takes inspiration from the machine learning literature, and in particular from the \emph{multi-armed bandit problem}. These \emph{adaptive operator selection} techniques\footnote{The term ``operator'' is used because the adaptive operator selection mechanisms have originally not only been designed to choose between different parameter values but also between different actions, such as different variation operators.} maintain a portfolio of $k$ possible parameter values. At each step they decide which of the possible parameter values to use next. To this end, they assign to each possible parameter value a \emph{confidence value}. This confidence value is supposed to be an indicator for how suitable the corresponding value is at the given stage of the optimization process. The confidence can be, for example, an estimator for the likelihood or the magnitude of progress we would obtain from running the algorithms with this value. These confidence values determine or modify the \emph{probabilities} of choosing the corresponding parameter value. We present below three ways to implement this adaptive operator selection principle. 

What distinguishes the parameter control setting from the classically regarded setting in machine learning is the fact that the ``rewards'', i.e., the gain that we can obtain with a given value, can drastically change over time, compared to the static (but random) reward typically investigated in the machine learning literature. The non-static reward distributions change the complexity of the algorithms and the theoretical analysis considerably. As far as we know, the only theoretical work rigorously proving an advantage of learning-based parameter control is~\cite{DoerrDY16PPSN}, which we shall discuss in more detail in Section~\ref{sec:SAlearning}. Despite the promising empirical performance of adaptive operator selection techniques, none of the techniques mentioned below could establish itself as a standard routine. Potential reasons for this situation include the complexity of these techniques, the difficulty of finding good hyper-parameters that govern the update rules, and a lack of theoretical support. 

\begin{itemize}
	\item \textbf{Probability Matching.} This technique aims at assigning the probabilities proportionally to the confidence values, while maintaining for each parameter value a minimal probability $p_{\min}$ for being sampled. Concretely, in round $t$ we choose the $i$-th parameter value with probability 
$$p^i_{t}:= p_{\min} + (1-k p_{\min}) \frac{c_{t}^i}{\sum_{j=1}^k{c_{t}^j}},$$
where $k$ is the total number of different parameter values from which we can choose (the size of the \emph{portfolio}) and $c_t^j$ the confidence in parameter value $j$ at time $t$. 

After executing one iteration with the $i$-th parameter value, its confidence value is updated to 
$$c_{t+1}^i := (1-\alpha) c_t^i + \alpha r^t,$$ 
where $r^t$ denotes the (normalized) reward obtained in the $t$-th round and $0<\alpha<1$ is the \emph{hyper-parameter} that determines the speed of the adaptation. The confidence value of parameter values that have not been selected in the $t$-th round are not updated. 

\item \textbf{Adaptive Pursuit.} When larger portfolios used, the previous mechanism choosing the operator with probability roughly proportional to the confidence value might not give enough preference for the truly best choice. To this aim, a more ``aggressive'' update rule has been suggested: \emph{adaptive pursuit.} This selection scheme uses the same confidence values as Probability Matching, but applies a much more progressive update rule for the probabilities.  In Adaptive Pursuit the probabilities of selection are obtained from the probabilities of the previous iteration according to a ``the winner takes it all'' policy. Concretely, the ``best'' arm, i.e., the parameter value with the highest confidence value is assigned a probability of $(1-\beta) p^{i^*}_{t}+\beta p_{\max}$, while for all other parameters the probability of being sampled is reduced to $p_{t+1}^i:=(1-\beta) p^{i}_{t}+\beta p_{\min}$. Empirical comparisons of Probability Matching and Adaptive Pursuit are presented in~\cite{Thierens05}. In general, it seems that Adaptive Pursuit is more suitable for situations in which the quality differences between the potential parameter values are small, but persistent. 

\item \textbf{Upper Confidence Bound.} The \emph{upper confidence bound} (UCB)-algorithm, originally proposed in~\cite{Auer2002}, plays an important role in machine learning, as it is one of the few strategies that can be proven to behave optimally in a classical operator selection problem. More precisely, the UCB algorithm can be proven to achieve minimal \emph{cumulative regret} in the \emph{multi-armed bandit problem} in which the reward of each ``arm'' follows a static probability distribution. Interpreting the different ``arms'' as the different parameter values that we want the algorithm to choose from, the UCB algorithm chooses in every step the parameter value $i$ that maximizes the expression
$$ \text{ER}(i) + \sqrt{c \log\Big( \frac{2\sum_{j=1}^k{n_{j,t}}}{n_{i,t}}\Big)},$$
where $\text{ER}(i)$ is an estimate for the expected reward of the $i$-th parameter value, $n_{j,t}$ is the number of times the $j$-th parameter value has been chosen in the first $t$ iterations, and $c$ is a hyper-parameter that determines the bias between exploiting parameter values with high expected reward and exploring parameter values that have not yet been tested very often. While being provably optimal in static settings, the UCB algorithm is rather sedate, and thus not very well suited for environments that gradually change over time---the typical situation encountered in the optimization of rather smooth optimization problems. In the parameter control context, it therefore makes sense to replace $n_{j,t}$ by an index that counts the number of occurrences in a given time interval only, instead of considering the whole history (\emph{sliding window}, cf.~\cite{FialhoCSS10} for a detailed discussion and experimental results on two discrete benchmark problems). In contrast, when the environments change abruptly, a combination of the UCB algorithm with a statistical test that detects significant changes in the fitness landscape has been shown to perform very well~\cite{DaCostaGECCO08,FialhoCSS09}.
\end{itemize}

\subsection{Theoretical Results for Learning-Inspired Parameter Control}\label{sec:SAlearning}

The first, and so far only, theoretical work that rigorously analyzes a learning-inspired parameter selection scheme is~\cite{DoerrDY16PPSN}. The algorithm proposed there is a generalized version of \emph{randomized local search (RLS)}, which selects in every step the number of bits to be flipped according to the following rule. With probability $\varepsilon>0$ a random one of the $k$ possible mutation strengths $1,\ldots,k$ is chosen, and with the remaining probability the algorithm greedily selects the parameter value for which the expected progress (coined \emph{velocity} in~\cite{DoerrDY16PPSN}) is maximized. The expected progress is estimated by a time-discounted average of the progresses observed in the learning iterations. More precisely, the velocity of mutation strength $r$ at time $t$ is defined via 
\begin{equation}
v_t[r] := \frac{\sum_{s=1}^t \mathbf{1}_{r_s = r} (1-\delta)^{t-s} (f(x_s) - f(x_{s-1}))}{\sum_{s=1}^t \mathbf{1}_{r_s = r} (1-\delta)^{t-s}},
\label{eq:velo}
\end{equation}
where $r_s$ is the parameter value used in the $s$-th iteration, and the hyper-parameter $\delta$ determines the speed of the adaptation process. \cite{DoerrDY16PPSN} refer to $\delta$ as the \emph{forgetting rate}, inspired by the observation that the reciprocal $1/\delta$ of the forgetting rate is (apart from constant factors) the information half-life. Note here that compared to~\cite{DoerrDY16PPSN}, we have changed the meaning of $\varepsilon$ and $\delta$, to be in line with the classical literature in machine learning, where the algorithm from~\cite{DoerrDY16PPSN} would be classified as an \emph{$\varepsilon$-greedy} selection scheme (meaning that with probability $\eps$ a random choice is taken and otherwise a greedy choice).

The main theoretical result in~\cite{DoerrDY16PPSN} is a proof that, for suitably selected hyper-parameters $\varepsilon$ and $\delta$, this algorithm essentially always uses the best possible mutation strength when run on \onemax. More precisely, it is shown that in all but a $o(1)$ fraction of the iterations the selected parameter value achieves an expected progress that differs from the best possible one by at most some lower order term. Consequently, the algorithm has the same optimization time (apart from a $o(n)$ additive lower order term) and the same asymptotic 13\% superiority in the fixed budget perspective as the fastest algorithm which can be obtained from these mutation strengths, which again comes arbitrary close (by taking $k$ large) to performance of the hand-crafted mutation strength schedule presented in Theorem~\ref{thm:SADoerrDY16}.

\begin{theorem}[Theorems~1 and~2 in~\cite{DoerrDY16PPSN}]
\label{thm:SADoerrDY16PPSN}
  Let $T(r_{\max})$ be the minimal expected running time that any randomized local search algorithm using a fitness-dependent mutation strength of at most $r_{\max}$ can achieve on \onemax. Then the expected running time $T$ of the $\varepsilon$-greedy RLS variant from~\cite{DoerrDY16PPSN} with hyper-parameters $\epsilon = n^{-0.01}$, $\delta = n^{-0.99}$, and $k=r_{\max}$ is $T(r_{\max}) + o(n)$.
	
	
	In the fixed-budget perspective, the following holds. Let $x^{(t)}_{\varepsilon}$ be the best solution that the $\varepsilon$-greedy RLS variant with this parameter setting has identified within the first $t$ iterations. Similarly, let $x^{(t)}_{\RLS}$ be the best solution that the classic RLS using $1$-bit flips only has found within the first $t$ iterations. For $t \ge 0.2675 n$ the expected Hamming distances to the optimum $z$ satisfy \[E[H(x^{(t)}_{\varepsilon},z)] \le (1+o(1)) \, 0.872 \, E[H(x^{(t)}_{\RLS},z)].\] 
\end{theorem}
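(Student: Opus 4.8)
The plan is to show that the $\varepsilon$-greedy learning rule reliably selects a (nearly) drift-maximizing mutation strength in all but a negligible fraction of iterations, so that the algorithm mimics the fitness-dependent drift-maximizer of Theorem~\ref{thm:SADoerrDY16} (with the strengths capped at $r_{\max}=k$) up to an additive $o(n)$ error. First I would use the symmetry of \onemax underlying \eqref{eq:SAdriftOM}: the true expected one-step progress of any fixed strength $r$ depends only on the current fitness, so the whole process may be followed through the one-dimensional distance $d_t:=n-\OM(x_t)$. For each value of $d$ this defines true velocities $g_r(d)$, a maximizer $r_{\opt}(d)$, and the optimal drift $g^*(d)=g_{r_{\opt}(d)}(d)$; recall also that $r_{\opt}(d)=1$ once $d\le n/2$.

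The technical heart is a concentration statement for the estimators \eqref{eq:velo}. The effective memory length is $\Theta(1/\delta)=\Theta(n^{0.99})$, and within such a window the $\varepsilon$-exploration alone samples every strength $\Theta(\epsilon/(k\delta))=\Theta(n^{0.98}/k)$ times. For distances that are not too small this is enough: a martingale (Azuma-type) bound for the exponentially weighted sums in \eqref{eq:velo} shows that each $v_t[r]$ concentrates around the window-average of $g_r(\cdot)$, and since over $n^{0.99}$ iterations the distance---and hence, by smoothness, the velocities---changes only negligibly relative to the relevant scale, one gets $v_t[r]=g_r(d_t)\pm o(g^*(d_t))$ with failure probability small enough to survive a union bound over all $\Theta(n\log n)$ iterations. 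Consequently the greedy step picks a strength whose true drift is $(1-o(1))\,g^*(d_t)$, and exploration is used in only an $\epsilon=o(1)$ fraction of the steps.

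The main obstacle is the regime of very small distance, $d=O(1)$ up to $d=O(k\,n^{0.02})$, where the time between improvements, $\Theta(n/d)$, exceeds the memory window, so that a window typically contains no successful step at all and every estimator $v_t[r]$ collapses to $0$. Here the learning signal vanishes and one cannot distinguish the strengths from their estimates. The saving grace is that this is exactly the regime where $r_{\opt}(d)=1$, i.e.\ the conservative default is optimal; the argument must therefore show that strength $1$ is still used with probability $1-o(1)$ (via the tie-breaking/default behaviour when all estimated velocities coincide), so that the per-level time stays $(1+o(1))\,n/d$. This is delicate because relying on exploration alone to play strength $1$ would cost a prohibitive factor: the bound has to come from the greedy default, and the entire $\sum_{d\le O(n^{0.02})}n/d=\Theta(n\log n)$ contribution of this regime must be matched up to $o(n)$.

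Putting the pieces together through the variable drift theorem yields the running-time claim: in the signal-rich regime the drift is $(1-o(1))\,g^*(d)$, so that part of the optimization matches $T(r_{\max})$ up to $o(n)$; the exploration overhead contributes $O(\epsilon\cdot n\log n)=O(n^{0.99}\log n)=o(n)$, the rare estimation failures contribute $o(n)$, and the near-optimum regime is matched to $o(n)$ by the argument above; hence $\E[T]=T(r_{\max})+o(n)$. For the fixed-budget statement I would note that a budget $t\ge 0.2675n$ keeps the distance at $\Theta(n)$, i.e.\ squarely in the signal-rich regime, so the concentration result directly gives that $\E[H(x^{(t)}_\varepsilon,z)]$ equals the expected distance of the drift-maximizer up to $o(n)$; combining this with the $13\%$ fixed-budget advantage of the drift-maximizer over $\RLS$ recorded after Theorem~\ref{thm:SADoerrDY16} (the factor $0.872=1-0.13$) gives $\E[H(x^{(t)}_\varepsilon,z)]\le(1+o(1))\,0.872\,\E[H(x^{(t)}_{\RLS},z)]$.
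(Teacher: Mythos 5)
Your overall plan is consistent with what the chapter reports about the cited proof: the chapter itself contains no proof of this theorem (it is quoted from~\cite{DoerrDY16PPSN} with a one-paragraph summary), and that summary is exactly your strategy --- show that in all but an $o(1)$ fraction of iterations the greedily chosen strength has near-maximal expected progress, charge the $\varepsilon$-exploration as an $O(\varepsilon\, n\log n)=o(n)$ overhead, and reduce the fixed-budget claim to the $13\%$ advantage of the drift maximizer of Theorem~\ref{thm:SADoerrDY16}. However, your sketch has a genuine gap precisely at the point you yourself flag as the technical heart, and the mechanism you propose to close it does not work. In the low-signal regime the estimated velocities do \emph{not} ``collapse to $0$'' and coincide: the numerators in \eqref{eq:velo} retain exponentially small but strictly positive contributions from old successes, so greedy never faces an exact tie, and no tie-breaking/default rule is available to rescue the argument. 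Worse, the comparison between stale estimates is biased \emph{against} strength $1$: an arm $r\ge 2$ is played only an $\varepsilon/k$ fraction of the time, so its denominator in \eqref{eq:velo} is smaller by a factor of roughly $k/\varepsilon = k\,n^{0.01}$, and a single old success at such an arm keeps $v_t[r]$ above the typical value of $v_t[1]$ for on the order of $\ln(k/\varepsilon)/\delta = \Theta(n^{0.99}\log n)$ iterations, during which greedy exploits the wrong strength. What is actually needed --- and what your sketch omits --- is a quantitative argument that strength-$1$ successes are frequent enough (every $\Theta(n/d)$ steps) and successes at $r\ge2$ rare enough (probability $O((d/n)^2)$ per play) that $v_t[1]$ dominates with the required probability, together with a self-correction bound showing that when greedy does lock onto a spuriously good arm, the growth of that arm's denominator restores the correct order before a non-negligible share of the $\Theta(n\log n)$ end-game budget is wasted.

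Two smaller inaccuracies: the claim $r_{\opt}(d)=1$ once $d\le n/2$ is wrong for \onemax --- $n/2$ is the threshold only in the $1$-bit-versus-$2$-bit comparison, whereas with odd strengths available the $1$-bit flip is uniquely drift-maximizing only from fitness $(\tfrac23+o(1))n$ on, i.e., for $d\le(\tfrac13-o(1))n$, as the chapter itself notes in Section~\ref{sec:SAHHadv}. And your justification of the fixed-budget statement, namely that $t\ge 0.2675n$ ``keeps the distance at $\Theta(n)$,'' holds only for $t=\Theta(n)$; the theorem asserts the inequality for \emph{all} $t\ge 0.2675n$, including $t=\omega(n)$ where both expected distances are $o(n)$ and the multiplicative $(1+o(1))\,0.872$ comparison again hinges on the end-game control you have not supplied --- an additive $o(n)$ error in the distance profile is useless at those scales.
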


The hyper-parameters in this result were taken as one example where this algorithm shows a superior performance. As noted in~\cite{DoerrDY16PPSN}, the particular choice of these parameters is not overly critical. Clearly, $\eps$ has to be $o(1/\log n)$ to ensure that at most $o(n)$ iterations are performed with a sub-optimal mutation strength. Likewise, $\delta$ has to be $\omega(1/n)$ to ensure that information learned $\Omega(n)$ iterations ago (and thus at a time when the velocities could be substantially different) has no significant influence on the current decision.

In addition to this theoretical result, \cite{DoerrDY16PPSN} also presents empirical results for the \leadingones and the minimum spanning tree (MST) problem. These experimental works suggest that, for suitably chosen hyper-parameters $\epsilon$, $\delta$, and $k$, the average optimization time of the $\varepsilon$-greedy RLS variant can be significantly smaller than that of the \oea. It even outperforms, empirically, RLS on \leadingones, and the RLS variant that always flips either one or two random bits in the current-best solution on the MST problem.


\section{Self-Adaptation: Endogenous Parameter Control}\label{sec:SAselfadaptive}

As we have seen in the previous sections, an elegant way to overcome the difficulty of finding the right parameters of an evolutionary algorithm and to cope with the fact that the optimal parameter values may change during a run of the algorithm is to let the algorithm optimize the parameters \emph{on the fly}. However, formally speaking, this is an even more complicated task, because we now have to design a suitable parameter setting mechanism. While a number of natural heuristics like the $1/5$-th rule have proven to be effective in certain cases, it would be even more elegant to not add an \emph{exogenous} parameter control mechanism onto the algorithm, but to rather integrate the parameter control mechanism into the evolutionary process, that is, to attach the parameter value to the individual (consequently, there is no global parameter value, but each individual carries its own parameter value), to modify it via (extended) variation operators, and to use the fitness-based selection mechanism of the algorithm to ensure that good parameter values become dominant in the population. 

This \emph{self-adaptation} of the parameter values has two main advantages. 
\begin{itemize}
\item It is generic, that is, the adaptation mechanism is provided by the algorithm and only the representation of the parameter in the individual and the extension of the variation operators has to be provided by the user. 
\item It allows to re-use existing algorithms and existing code. 
\end{itemize}
Despite these advantages, self-adaptation is not used a lot in discrete evolutionary optimization (unlike in continuous optimization), and consequently, there is also little theoretical work on this topic.

Self-adaptation for discrete evolutionary computation was proposed in the seminal paper~\cite{Back92} by B\"ack, which also contains a mathematical convergence proof for the mutation rate (in the particular setting proposed there). 
Apart from this result, only two works on running time analysis for self-adapting parameter choices appeared so far. Since these results, as the paper by B\"ack, are concerned with self-adaptive mutation rates, we discuss self-adaptation only for mutation rates in the following and note that other parameters could be optimized via self-adaptation in a similar way. 

\subsection{Implementing Self-Adaptive Mutation Rates}

To use self-adaptation for the mutation rate, the individuals (which are usually possible solution candidates) have to be extended to also contain ``their'' mutation rate. In the purest possible form, as done by B\"ack~\cite{Back92}, this is implemented via appending additional bits to the bit-string which represents the solution candidate. These additional bits encode in a suitable manner the mutation rate. This pure form has the advantage that any standard variation operator can be used directly on the extended individuals. The down-side of this approach is that non-binary data is artificially treated like binary decision variables. 

It has been argued, e.g., in~\cite{DoerrDK18}, that it can be preferable to encode non-binary data in their original form and to modify it via data-specific variation operators. In the context of self-adaptation, the mutation rate has been encoded as floating point number in $]0,1[$ in~\cite{BackS96,KruisselbrinkLREB11}, which is mutated according to a log-normal distribution. In the recent theoretical works~\cite{DangL16} and~\cite{DoerrWY18}, only a discrete set of possible mutation rates was allowed. In~\cite{DoerrWY18}, the mutation rates $r/n$ with $r \in [1..n/2]$ being a power of two were used. As mutation, the rate $r/n$ was replaced by a random choice between $(r/2)/n$ and $(2r)/n$.

With either representation of the mutation rate, the extended mutation operator (acting on the extended individuals) will always be such that first the encoded mutation rate is mutated and then the core individual is mutated with this new rate. This is necessary for the subsequent selection step to see an influence of the new mutation rate and thus, hopefully, prefer individuals with a more profitable mutation rate.

Finally, when designing a self-adaptive parameter optimization scheme one may want to prefer non-elitist algorithms. An elitist algorithm carries the risk of getting stuck with individuals that have a high fitness, but a very unprofitable mutation rate. In this situation, progress can only be made when the mutation of the mutation rate in one iteration changes the rate to a value that admits an improvement. In other words, it is not possible to change the rate in several iterations if no improvement is made. 

\subsection{Theory for Self-Adaptive Mutation Rates}\label{sec:SAendogeneousTheory}

In the first work analyzing self-adaptation through the running time analysis paradigm, Dang and Lehre~\cite{DangL16} regard the following setting. They use a simple non-elitist algorithm which in each iteration generates from a population of $\lambda$ individuals a new population of again $\lambda$ individuals. This is done by $\lambda$ times independently selecting an (extended) parent individual from the current population, mutating it via the (extended) mutation operator, and adding it to the new population. For the mutation rate, Dang and Lehre assume that there is only a finite set $\mathcal{M}$ of pre-specified rates (for most results they take $|\mathcal{M}|=2$). The extended mutation operator first with probability~$p$, which is a global parameter of the algorithm, replaces the current rate of the individual by a random different one, then it mutates the core individual via standard bit mutation with the new rate. For the selection operator, a wide range of choices are subsumed in this work, since the results are phrased in terms of a parameter of the selection operator, namely the reproductive rate. A selection operator (possibly depending on a fitness function $f$) has reproductive rate $\alpha$ if for all populations $P$ and each individual $x$ of the population, the expected number of times $x$ was chosen in $\lambda$ independent applications of the selection operator, is at most $\alpha$. For example, selecting always a best individual from the population leads to $\alpha = \lambda$, whereas a uniform random selection gives $\alpha = 1$.

For this setting, the following results are shown. If a mutation rate $p_1$ satisfies $p_1 \ge (\ln \alpha + \delta)/n$ for some constant $\delta$, then the algorithm always using the rate $p_1$ (equivalent to the case that $\mathcal{M} = \{p_1\}$) and using random initialization needs with high probability an at least exponential time to reach the optimum of any pseudo-Boolean function with unique optimum (this is Theorem~2 of~\cite{DangL16} in the special case of $|\mathcal{M}|=1$). 

If two rates are used, that is, $\mathcal{M} = \{p_1,p_2\}$, and the mutation operator chooses the current rate of the individual uniformly at random, then even if only one of the rates satisfies the dangerous condition $p_i \ge (\ln \alpha + \delta)/n$, the above problem can remain: If $p_1 \ge (\ln \alpha - \ln(1+\delta_1))/n$, $p_2 \ge (\ln \alpha - \ln(1 - \delta_2))/n$, and $\delta_1 / (\delta_1+\delta_2) \le \tfrac 12 - \eps$ for constants $\delta_1, \delta_2, \eps > 0$, then again an at least exponential running time results with high probability (Theorem~4). This result again applies to any pseudo-Boolean function $f : \{0,1\}^n \to \R$ having a unique optimum. 

The latter of these two results shows that randomly mixing a good and a bad operator can be essentially as bad as using the bad operator alone. This is not overly surprising, but points out the contrast with the following result for a self-adaptive choice of the mutation rate. For a suitable 
example function~$f$ it is proven that the algorithm with a suitably initialized population, with tournament selection with tournament size $2$, with population size $\lambda \ge c \ln(n)$, and with a self-adaptive choice between the two mutation rates $p_1 \ge \ln(3)$ and $p_2 = \ln(3/2) - \eps$ finds the optimum of~$f$ in a polynomial running time, whereas either of these two rates alone or randomly mixing between them leads to an at least exponential running time with high probability. 

As for almost all such examples, also this one is slightly artificial and needs quite some assumptions, for example, that all $\lambda$ individuals are initialized with the unique local optimum. Nevertheless, this result demonstrates that self-adaptation can outperform static parameter choices and random mixing. The reason for this is that, as the proofs reveal, the self-adaptation is able to find in relatively short time the mutation rate which is most profitable (as opposed to fixed parameter choices) and to remember it (as opposed to random mixing). 

Very recently, a less artificial example for the use of self-adaptation was presented in~\cite{DoerrWY18}. There it was shown that the \ocl with a self-adaptive choice of the mutation rate can achieve an asymptotically identical performance as the self-adjusting \lea presented in~\cite{DoerrGWY17} (see also~\ref{sec:SADoerrGWY17}). In the self-adaptive setting of~\cite{DoerrWY18}, the extended individuals store their mutation rate, which is $r/n$ for an integer $r \in [32..n/64]$. The extended mutation operator first changes $r$ to $r/32$ or $32r$ (uniform random choice) and then performs standard-bit mutation with the new mutation rate $r/n$. One of the offspring with maximum fitness is selected as new parent individual. In case of ties, individuals with  smaller rate are preferred, which creates a small extra drift towards the usually recommended rates of order $\Theta(1/n)$. It is shown that when $\lambda \ge (\ln n)^{1+\eps}$, then this algorithm finds the optimum of the \onemax function in an expected number of $O(n / \log \lambda + (n \log n)/\lambda)$ iterations, which is the asymptotically best possible running time for $\lambda$-parallel algorithms (cf.\ Theorem~\ref{thm:SABadkobehLS14} cited from~\cite{BadkobehLS14}).


\section{Hyper-Heuristics}\label{sec:SAhyper}

Hyper-heuristics are search or optimization heuristics which during the run of the algorithm choose in a possibly adaptive manner which low-level heuristics to use. Since in some situations hyper-heuristics can closely resemble an adaptive parameter choice, we describe in this section what is known about such hyper-heuristics.

\subsection{Brief Introduction to Hyper-Heuristics}

Hyper-heuristics either choose from a pre-specified set of low-level heuristics (\emph{selection hyper-heuristics}) or try to generate low-level heuristics from existing components (\emph{generation hyper-heuristics}). There is a considerable amount of applied research on generation hyper-heuristics, e.g., for scheduling problems, packing problems, satisfiability, and the traveling salesman problem. However, since there appears to be no theoretical work on generation hyper-heuristics and since, naturally, generation hyper-heuristics are substantially different from parameter control mechanisms, we do not further detail this sub-area and refer, as for all other topics incompletely covered here, to the recent survey~\cite{BurkeGHKOOQ13}.

As true in general for optimization heuristics, hyper-heuristics can also be divided into \emph{construction hyper-heuristics} and \emph{perturbation hyper-heuristics}. The former try to construct a solution from partial solutions. This has led to interesting results, e.g., in production scheduling, educational timetabeling, or vehicle routing. Since constructing a solution from partial solutions necessarily is a rather problem-specific approach, it is not surprising that general theoretical results for this sub-area do not yet exist. 

In contrast, perturbation hyper-heuristics work, in a similar manner as classic evolutionary algorithms, with complete solution candidates, which are randomly modified in the hope of gaining superior solutions. Perturbation selection hyper-heuristics found numerous applications, among others, in various scheduling contexts. The most common form of perturbative selection hyper-heuristics are \emph{single-point searches}, which in a fashion analoguous to \oea{}s and \lea{}s repeat creating one or more offspring from a single parent and selecting the next parent from these offspring and the previous parent. For such selection hyper-heuristics, some general mechanisms how to choose the low-level heuristic creating the offspring were proposed, see Section~\ref{sec:SAHHadv}. 

As said above, selection hyper-heuristics are methods that select, during the run of the algorithm, which one out of several pre-specified simpler algorithmic building blocks to use. When the different pre-specified choices are essentially identical apart from an internal parameter, then this selection hyper-heuristic could equally well be interpreted as a dynamic choice of the internal parameter. For example, when only the two mutation operators are available that flip exactly one or exactly two bits, then a selection hyper-heuristic choosing between them could also be interpreted as the \emph{randomized local search} heuristic using a dynamic choice of the number of bits it flips. Conversely, some of the works described previously 
could equally well be described in the language of simple selection hyper-heuristics. In this text, we follow the language used by the original authors and do not aim at drawing a line between the different fields.

We now describe the main theoretical works that appeared in the hyper-heuristics community as long as they resemble dynamic parameter control mechanisms, the main topic of this chapter. 

\subsection{Random Mixing of Low-Level Heuristics}

\subsubsection{Markov Chain Analyses}

The first theoretical study on selection hyper-heuristics was conducted by He, He, and Dong~\cite{HeHD12}. They regard the variant of the classic \oea which in each iteration selects a mutation operator from a finite set of operators according to a fixed probability distribution. In the hyper-heuristics language, this is a single-point selection heuristics using a \emph{mixed strategy}. He et al.\ show that the asymptotic convergence rate and the asymptotic hitting time resulting from any mixed strategy are not worse than those of resulting from exclusively using the worst of the given operators. 

Some care is necessary when interpreting this result. The asymptotic hitting time as defined in~\cite{HeHD12} is not the asymptotic order of magnitude of the classic hitting time (number of iterations until the optimum was generated), but is the spectral radius $\rho(N)$ of the fundamental matrix $N = (I-T)^{-1}$ of the Markov chain describing the parent individual in a run of this single-point heuristic, where $I$ is the identity matrix and $T$ is the transition matrix restricted to the non-optimal search points. This asymptotic hitting time is only loosely related to the classic hitting time. Denoting by $T_x$ the classic hitting time of this Markov chain (usually called optimization time of the EA) when started in the state $x$, then only the week relation 
\[E_{\min} := \min\{E[T_x] \mid x \in S_{\nonopt}\} \le \rho(N) \le \max\{E[T_x] \mid x \in S_{\nonopt}\} =: E_{\max}\] is known, where $S_{\nonopt}$ is the set of all non-optimal search points. Consequently, the asymptotic hitting time $\rho(N)$ only provides a lower bound for the worst-case expected hitting time $E_{\max}$. Note that the best-case expected hitting time $E_{\min}$ often is very small as witnessed by search points $x$ that are very close to the optimum. Consequently, the lower bound for the worst-case hitting time given by $\rho(N)$ can be relatively weak. Nothing is known how the asymptotic hitting time is related to the running time starting from a random search point, which is the usual performance measure. For these reasons, it is not clear how to translate the result of~\cite{HeHD12} into the classic running time analysis language.

\subsubsection{Running Time Analysis of Mixed Strategies}

The first to conduct a running time analysis for selection hyper-heuristics in the classic methodology were Lehre and \"Oczan~\cite{LehreO13}. In~\cite[Theorem~3]{LehreO13}, it is stated that the \oea\footnote{We note that some authors prefer to call the algorithm used in~\cite{LehreO13} a variant of \emph{randomized local search} rather than an evolutionary algorithm since it only creates offspring in a bounded distance from the parent.} using the mixed strategy of choosing in each iteration the mutation operator randomly between the 1-bit flip operator (with probability $p$) and the $2$-bit flip operator (with probability $1-p$) optimizes the \onemax function in an expected time of at most
\begin{equation}\label{eq:SAfalse}
E[T] \le \min\left\{\frac np (1 + \ln(n)), \frac{n^2}{1-p} \left(1 - \frac 1n\right)\right\} \le 
\begin{cases}
\frac np  (1+\ln n), &\text{ if $p > \frac{1+\ln n}{n+\ln n}$,}\\
\frac{1}{1-p} \, n^2 , &\text{ else.}
\end{cases}
\end{equation}
It appears to us that this result is not absolutely correct, since, e.g., in the case $p=0$ the expected optimization time is clearly infinite: If the random initial search point has an odd Hamming distance from the optimum, then the optimum cannot be reached only via $2$-bit flips. For similar reasons, the expected running time has to be larger than in~\eqref{eq:SAfalse} for very small values of $p$. We therefore prove the following result.

\begin{theorem}\label{thm:SAhyom}
  Consider the \oea with the mixed mutation strategy of flipping a single random bit with probability $p$ and flipping two (different) random bits with probability $1-p$. Let $T$ be the running time (number of iterations) of this algorithm on the \onemax benchmark function. If $p > 0$, then 
  \[E[T] \le  
  \begin{cases}
  \frac np + n^2, & \text{ if $p \le \frac 1n$}\\
  \frac np \left(\ln(np) + 1 + \frac{\ln(np)}{np-1}\right) , & \text{ if $p > \frac 1n$}
  \end{cases}.\] 
  If $p=0$, then with probability $\frac 12$ the algorithm never finds the optimum (and thus the expected running time $E[T]$ is infinite).
\end{theorem}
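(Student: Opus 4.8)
The plan is to treat the two cases separately: a short invariance argument for $p=0$, and the classical fitness level method for $p>0$. For $p=0$ the algorithm only ever flips two bits, so in every step the Hamming distance of the current search point to the unique optimum $z$ changes by $-2$, $0$, or $+2$; in particular its parity is invariant. A uniformly random starting point has Hamming distance to $z$ distributed as $\Bin(n,1/2)$, which is odd with probability exactly $\tfrac12$. On that event the distance can never reach $0$, the optimum is never generated, and hence $E[T]=\infty$, which is the second claim.

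For $p>0$ I would partition the search space into fitness levels indexed by the number $d=n-\OM(x)\in\{0,\dots,n\}$ of incorrect bits, with $d=0$ being the optimum. Since the \oea accepts $y$ iff $f(y)\ge f(x)$, the level index never increases. From a point with $d$ incorrect bits a strict improvement happens in exactly two disjoint ways: the single-bit mutation (probability $p$) flips one of the $d$ wrong bits, or the two-bit mutation (probability $1-p$) flips two of them; a two-bit flip hitting one correct and one incorrect bit is accepted but leaves the level unchanged. Thus the probability of leaving level $d$ is
\[ s_d \;=\; p\,\frac{d}{n} \;+\; (1-p)\,\frac{\binom{d}{2}}{\binom{n}{2}} \;=\; \frac{d}{n}\Big(p+(1-p)\tfrac{d-1}{n-1}\Big), \]
and the fitness level theorem gives $E[T]\le\sum_{d=1}^{n}1/s_d$.

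For $p\le 1/n$ I would isolate the $d=1$ term, which is $1/s_1=n/p$, and for $d\ge2$ drop the single-bit contribution, using $s_d\ge(1-p)\binom{d}{2}/\binom{n}{2}$. The tail then telescopes via $\sum_{d=2}^n\tfrac{1}{d(d-1)}=1-\tfrac1n$, and combined with $1-p\ge 1-\tfrac1n$ it is at most $n(n-1)\le n^2$, giving $E[T]\le n/p+n^2$. For $p>1/n$ the factored form is the key: $1/s_d=\frac{n}{d\,g(d)}$ with $g(d)=p+(1-p)\frac{d-1}{n-1}$ is decreasing in $d$, so $\sum_{d=2}^n 1/s_d\le n\int_1^n \frac{dx}{x\,g(x)}$. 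Writing $g(x)=\big((pn-1)+(1-p)x\big)/(n-1)$ and integrating by partial fractions, the boundary terms collapse because $\alpha+\beta n=n-1$ and $\alpha+\beta=p(n-1)$ for $\alpha=pn-1$, $\beta=1-p$, and the integral evaluates to $\frac{(n-1)\ln(np)}{np-1}$. Adding the $d=1$ term $n/p$ and rewriting $\frac{n^2\ln(np)}{np-1}=\frac{n}{p}\big(\ln(np)+\frac{\ln(np)}{np-1}\big)$ reproduces exactly the stated bound.

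The main obstacle is not conceptual but lies in pinning down the constants. First, one must write $s_d$ correctly, accounting for the neutral two-bit moves so that the fitness level method applies with precisely this $s_d$. Second, the improvement over the crude $\frac{n}{p}(1+\ln n)$ coming from single-bit flips alone hinges on verifying that the integral comparison for $p>1/n$ telescopes to exactly $\ln(np)$ and not merely $\ln n$; this is where exploiting the two-bit flips at large $d$ pays off. A useful sanity check is that the two branches agree at $p=1/n$, both yielding $2n^2$ (using $\frac{\ln(np)}{np-1}\to1$ as $np\to1$).
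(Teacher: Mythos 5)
Your proof is correct and follows essentially the same route as the paper's: the parity-invariance argument for $p=0$, the fitness level method with the identical success probabilities $s_d = p\tfrac dn + (1-p)\tfrac{d(d-1)}{n(n-1)}$, isolation of the $d=1$ term, and the same integral comparison for $p>\tfrac 1n$ (the paper writes the logarithmic antiderivative directly where you use partial fractions, and for $p\le\tfrac 1n$ it uses the bound $s_d \ge \tfrac{d(d-1)}{n(n-1)}$, valid for all $p$, instead of your $s_d \ge (1-p)\binom d2/\binom n2$ combined with $1-p\ge 1-\tfrac 1n$ --- cosmetic variations only). All your computations check out, including the collapse of the boundary terms to $\ln(np)$ and the sanity check that both branches give $2n^2$ at $p=\tfrac 1n$.
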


\begin{proof}
  For the case $p=0$, we note that with probability exactly $\frac 12$ the random initial search point has an odd Hamming distance from the optimum.\footnote{This well-known fact follows from the beautiful argument $0 = (1-1)^n = \sum_{i=0}^n 1^i (-1)^{n-i} \binom{n}{i}$.} Since $2$-bit flips change the Hamming distance by $-2$, $0$, or $+2$, the algorithm can never reach the optimum in this case. 
  
  Hence let us assume $p>0$ for the remainder of this proof. When the current search point of the \oea has a Hamming distance of $d \ge 1$ from the optimum, then the probability $p_d$ that one iteration ends with a better search point is \[p_d = p \, \frac dn + (1-p) \, \frac{d(d-1)}{n(n-1)} = \frac{d((1-p)d+np-1)}{n(n-1)}.\] 
%
%

  Using $p_1 = \frac pn$ and $p_d \ge \frac{d(d-1)}{n(n-1)}$ for all $d \ge 2$, the classic fitness level theorem yields
  \begin{align*}
  E[T] & \le \sum_{d=1}^n p_d^{-1} \\
  & \le \frac np + n(n-1)\sum_{d=2}^n \frac{1}{d(d-1)} \\
  & = \frac np + n^2 \left(1-\frac 1n\right)^2 \le \frac np + n^2.
  \end{align*}
  Above we used the equation $\sum_{d=2}^n \frac{1}{d(d-1)} = 1 - \frac 1n$ valid for all $n \in \N$, which can be shown easily by induction.
  
  For $p > \frac 1n$, we also have the estimate 
\begin{align*}
  E[T] & \le \sum_{d=1}^n p_d^{-1} \\
  & = n(n-1) \sum_{d=1}^n \frac{1}{d((1-p)d+np-1)}\\
  & \le n(n-1)\left(\frac{1}{(n-1)p} + \int_{1}^n \frac{1}{d((1-p)d+np-1)} \mathrm{d}d \right)\\
  & = \frac np + n (n-1) \left.\left(-\frac{1}{np-1} \ln\left(\frac{(1-p)d+np-1}{d}\right)\right)\right|_{1}^n\\
  & = \frac np + \frac{n(n-1)}{np-1} \left(\ln((n-1)p) - \ln\left(\frac{n-1}{n}\right)\right)\\
  &\le  \frac np + n^2 \frac{\ln(np)}{np-1} = \frac np \left(1 + \ln(np) \left(1+\frac{1}{np-1}\right)\right) = \frac np \left(\ln(np) + 1 + \frac{\ln(np)}{np-1}\right).
  \end{align*}
  Note that for all $p > \frac 1n$, we have $\ln(np) < np-1$. Hence the bound above is less than $\frac np + n^2$ and thus stronger than the first bound.
\end{proof}

Without giving full details, we remark that better results can be obtained by using variable drift instead of the classic fitness level method. Since a $2$-bit flip giving a fitness improvement automatically improves the fitness by exactly two, we have that the expected fitness gain in one iteration starting with a search point with fitness distance $d$ is 
\[h_d = p \, \frac dn + 2(1-p) \, \frac{d(d-1)}{n(n-1)} = \frac{d(2(1-p)d+np+p-2)}{n(n-1)}.\] 
Now the variable drift theorem for upper bounds on hitting times~(see~\cite{Johannsen10}, note that for processes in $\N_0$ the integration can be replaced by a summation) gives $E[T] \le \sum_{d=1}^n h_d^{-1}$, which can be estimated in a similar fashion as the term $\sum_{d=1}^n p_d^{-1}$ above. What is more interesting than the slightly improved upper bound is that the variable drift theorem for lower bounds~\cite{DoerrFW11} gives a very similar lower bound, namely $E[T] \ge \sum_{d=3}^n h_d^{-1}$; note again that for integer valued processes we can replace the integration with a summation.


The above results show that for the classic benchmark function \onemax mixing the $1$-bit and $2$-bit operators in a random fashion gives no improvement over exclusively using the $1$-bit operator. In light of the precise analysis of the performance of $k$-bit flip operators on \onemax in~\cite{DoerrDY16}, this result is not very surprising. There, it was shown that the expected fitness gain is never maximized by flipping an even number of bits. Also, from a fitness of $(\frac 23 + o(1)) n$ on, the $1$-bit flip operator is the only one maximizing the expected fitness gain. 

\subsubsection{Superiority of Mixed Strategies}

To demonstrate the use of mixing operators, Lehre and \"Oczan~\cite{LehreO13} construct an example function \textsc{GapPath}, which has the property that the \oea mixing $1$-bit and $2$-bit flips when initialized with $x_0 = (0,\dots,0)$ can  optimize \textsc{GapPath} only when both the $1$-bit and the $2$-bit flip mutation operator are chosen with positive probability. Based on this result, several ways to alternate between a low and a high $p$-value are discussed, including a success-based reinforcement approach. While these ideas are shown to give improvements over certain choices of $p$ like $p=\frac 1n$, they do not outperform natural choices like $p=\frac 12$ or $p=1$. 

An example similar to \textsc{GapPath} was used to show that mixing $1$-bit and $2$-bit flip operators can be necessary also in multi-objective optimization~\cite{QianTZ16}.

We note that a more natural example for the need of mixing, without being explicitly stated there, was already regarded by Neumann and Wegener~\cite{NeumannW07} (and a slightly more technical example was given even earlier by Giel and Wegener~\cite{GielW03}). Neumann and Wegener~\cite{NeumannW07} analyze how simple randomized search heuristics solve the minimum spanning tree problem in connected undirected graphs $G = (V,E)$ having $n :=|V|$ vertices and $m:=|E|$ edges with integral edge weights in $[1..w_{\max}]$. They use the natural representation that individuals are sets $S=S(x)$ of edges represented via bit-strings $x \in \{0,1\}^E$. As fitness (to be minimized) of an individual they propose 
\[f(x) = M^2 (C_{x}-1) + M \left(\sum_{e \in S(x)} x_e - (n-1)\right) + \sum_{e \in S(x)} x_e w(e),\]
where $M = n^2 w_{\max}$ and $C_x$ is the number of connected components of the graph $(V,\{e \in E \mid x_e = 1\})$. This fitness function with first priority punishes connected components, then punished the number of edges, and only then prefers solutions with smaller total weight (we do not see that the punishment of edges is necessary, but clearly it does not harm either). Besides the \oea, they analyze the performance of (in their language) a variant of the randomized local search heuristic which in each iteration either (uniform random choice) flips a single random bit or two different random bits. In the hyper-heuristics language, they thus regard the same single-point selection hyper-heuristic with random mixing between the $1$-bit and the $2$-bit flip operator as~\cite{LehreO13} except that they fix the probability $p$ to $\frac 12$. 

Neumann and Wegener show that this algorithm computes a minimum spanning tree in an expected number of $O(m^2 \log(n w_{\max}))$ iterations. It can easily be seen and has been shown in~\cite{ReichelS09} that for this algorithm, the $w_{\max}$ term in the running time bound can be omitted, but we shall not care about this usually small improvement in the following. Neumann and Wegener do not make this explicit, but from their proofs it is clear that any other mixing which uses both operators with constant probability would give the same result. The reason why Neumann and Wegener use both $1$-bit and $2$-bit flips is that, obviously, all spanning trees are local optima of the fitness function. Consequently, using $1$-bit flips only bears the risk of getting stuck in a local optimum forever. The parity argument used in the proof of Theorem~\ref{thm:SAhyom} shows that also when using only the $2$-bit flip operator, the algorithm has a constant probability (of exactly $1/2$) of never reaching an optimum. 

\begin{theorem}[analogous to Theorem~11 of~\cite{NeumannW07}]\label{thm:SAhyperMST}
  Consider the \oea with the mixed strategy of flipping one random bit (with probability $p$) and two different random bits (with probability $1-p$) solving the minimum spanning tree problem in connected undirected graphs having $n$ vertices, $m$ edges, and integral edge weights in $[1..w_{\max}]$.
  \begin{itemize}
  \item If both $p$ and $1-p$ are $\Omega(1)$, then the expected optimization time is $O(m^2 \log(nw_{\max}))$. 
  \item If $p = 0$, then with probability $1/2$ the algorithm never finds any spanning tree.
  \item If $p = 1$ and the input graph is does not have the property that each spanning tree is a minimum spanning tree, then with positive probability the algorithm never finds a minimum spanning tree.
  \end{itemize}
  Consequently, this algorithm solves the minimum spanning tree problem in polynomial expected time if and only if $p \notin \{0,1\}$, that is, if there is a true mixing of the two mutation operators.
\end{theorem}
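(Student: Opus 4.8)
The plan is to prove the three items separately and then read off the equivalence; the first item adapts the argument of Neumann and Wegener~\cite{NeumannW07}, while the other two are short structural observations. For the positive result I would reuse the three-phase analysis induced by the three nested priorities of $f$. While the current graph has more than one connected component, connectedness of $G$ guarantees at least $C_x-1$ absent edges joining distinct components; inserting any of them is a $1$-bit flip that strictly decreases the dominating term $M^2(C_x-1)$ and is accepted. Since $1$-bit flips occur with probability $p=\Omega(1)$, a fitness-level argument over the component count bounds this phase by $O(m\log n)$. Once the graph is connected, any surplus edge lies on a cycle, and removing it is an accepted $1$-bit flip that lowers the term $M(\sum_e x_e-(n-1))$ while preserving connectedness; a fitness-level argument over the number of surplus edges gives $O(m\log m)$. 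After these phases the search point is a spanning tree, and only weight-reducing edge swaps remain relevant.

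The core is the final phase. Here I would invoke the matroid exchange property: for the current non-minimal spanning tree $T$ and a fixed minimum spanning tree $T^*$, there is a bijection between $T\setminus T^*$ and $T^*\setminus T$ pairing each $e$ with some $f$ such that $T-e+f$ is a spanning tree, $w(f)\le w(e)$, and the gains $w(e)-w(f)$ sum to exactly $w(T)-w(T^*)$. Each such swap is an accepted $2$-bit flip realized in one iteration with probability $(1-p)\binom{m}{2}^{-1}=\Omega(m^{-2})$, so the potential $\phi=w(T)-w(T^*)$ obeys $\E[\phi_{t+1}\mid\phi_t]\le(1-\Omega(m^{-2}))\phi_t$. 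As the integral weights make $\phi<1$ imply $\phi=0$, and $\phi\le(n-1)w_{\max}$ throughout, the multiplicative drift theorem yields $O(m^2\log(nw_{\max}))$ iterations, which dominates the two earlier phases. The only departure from~\cite{NeumannW07} is that the uniform rate $p=\tfrac12$ is replaced by $p,1-p=\Omega(1)$, which affects only the hidden constants.

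For $p=0$ only $2$-bit flips are used, so $\sum_e x_e$ changes by $-2$, $0$, or $+2$ and its parity is invariant; since the initial value is $\Bin(m,\tfrac12)$-distributed, its parity is uniform on $\{0,1\}$ and differs from that of $n-1$ with probability exactly $\tfrac12$, whence no search point with exactly $n-1$ edges---in particular no spanning tree---is ever reachable. For $p=1$ only $1$-bit flips are used, and every spanning tree is a \emph{strict} local optimum of $f$: adding an edge creates a cycle (the $M$-term rises) and deleting an edge disconnects the tree (the $M^2$-term rises), so every neighbour is rejected and each spanning tree is absorbing. By hypothesis some non-minimum spanning tree $T'$ exists, and the uniform initialization satisfies $\Pr[x_0=\mathbf{1}_{T'}]=2^{-m}>0$; on that event the algorithm is frozen at $T'$ and never finds a minimum spanning tree. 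Combining the cases gives the equivalence, since for fixed $p\in(0,1)$ both $p$ and $1-p$ are positive constants and the first item applies, while $p\in\{0,1\}$ forces infinite expected time on the relevant inputs.

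The one genuinely delicate step is the multiplicative-drift argument for the final phase---in particular the exchange lemma guaranteeing simultaneously available, individually accepted swaps whose gains telescope to $w(T)-w(T^*)$; everything else is either inherited from~\cite{NeumannW07} or is a one-line parity or absorption observation.
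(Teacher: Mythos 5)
Your proof is correct and follows essentially the same route as the paper, which settles the first item by deferring to Neumann and Wegener's Theorem~11 (explicitly noting that their proofs only need both operators to occur with constant probability, so that changing $p=\tfrac 12$ to $p,1-p=\Omega(1)$ only affects constants) and handles $p=0$ and $p=1$ by exactly your arguments, namely the parity invariance of the edge count under $2$-bit flips from the proof of Theorem~\ref{thm:SAhyom} and the fact that every spanning tree is a strict local optimum under $1$-bit flips. Your three-phase analysis with the weight-ordered exchange bijection is precisely the content of~\cite{NeumannW07}; the only cosmetic difference is that you package their expected multiplicative weight-decrease argument as an application of the modern multiplicative drift theorem.
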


The works~\cite{GielW03,NeumannW07} show that hyper-heuristics using random mixing of mutation operators could with equal justification just be called evolutionary algorithms using a possibly non-standard mutation operator. With equal justification, one could declare the \oea or the \lea using the  classic standard-bit mutation operator (flipping each bit independently with probability~$\frac 1n$) a single-point selection hyper-heuristic choosing the $k$-bit flip operator with probability exactly $\binom{n}{k} (\frac 1n)^k (1-\frac 1n)^{n-k}$. The same statement (with a different probability distribution) is true when the heavy-tailed mutation operator of~\cite{DoerrLMN17} is used instead of standard-bit mutation.

We end this section with a recent result giving an example where a large number of mixings give asymptotically the same performance. In~\cite{AntipovD18}, the plateau functions $\plateau_k$ is defined by $\plateau_k(x) = \onemax(x)$, if $\onemax(x) \in [0..n-k] \cup \{n\}$, and $\plateau(x) = n-k$ if $\onemax(x) \in [n-k+1..n-1]$. This function thus agrees with the \onemax function except that it has a large plateau of size $N = \sum_{i = 1}^{k} \binom{n}{i} = \frac{n^k}{k!} + o(n^k)$ around the optimum. Consider the \oea randomly mixing the $k$ mutation operators which flip exactly $1, 2, \dots, k$ bits. Let $p_1, p_2, \dots, p_k \in [0,1]$ with $\sum_{i=1}^k p_i = 1$ be the probabilities of selecting the corresponding operators (and view these numbers as constants, that is, not depending on $n$). Assume $p_1 > 0$ to ensure that the algorithm surely converges. Then the expected optimization time is $E[T] = (1+o(1)) N$ regardless of the values of $p_1, \dots, p_k$.

\subsection[Beyond Mixing: Advanced Selection Mechanisms]{Beyond Mixing: Advanced Selection Mechanisms\footnote{Warning: All results described in this section use a different definition of the $2$-bit flip operator, namely the one where independently and uniformly at random $i \in [1..n]$ and $j \in [1..n]$ are chosen and then first the $i$-th bit is flipped and then the $j$-th bit is flipped. Consequently, this operator with probability $1 - 1/n$ indeed flips two random different bit positions. With probability $1/n$, however, we have $i = j$ and thus the two flipping operation cancel, that is, the offspring is identical to the parent. We do not see much reason for the use of this alternative operator. We would suspect (but did not check this rigorously) that all results presented in this section hold as well for the classic $2$-bit flip operator, which flips two randomly chosen different bit position (in other words, returns a random search point with Hamming distance $2$ from the parent).}}\label{sec:SAHHadv}

The first to conduct a theoretical analysis of more sophisticated selection hyper-heuristics were Alanazi and Lehre~\cite{AlanaziL14}. Besides the \emph{simple random} heuristic (choosing a low-level heuristic uniformly at random each time, that is, mixing with uniform distribution), they regard the following classic selection mechanisms.
\begin{itemize}
\item \emph{Random gradient}: take a random low-level heuristic and repeat using it as long as a true fitness improvement is obtained.
\item \emph{Greedy}: in each iteration, use all low-level heuristics in parallel and continue with 
a best search point generated by one of them (or the parent if no offspring is at least as good as the parent).
\item \emph{Permutation}: generate initially a random cyclic order of the low-level heuristics and then use them in this order. This mechanism can be seen as a quasirandom analogue of the \emph{simple random} heuristic (see~\cite{DoerrFW10} for a discussion of the use of quasirandomness in evolutionary computation). Alternatively, this hyper-heuristic can be viewed as a time-dependent parameter control method. In fact, the time-dependent choices of the mutation rate discussed in~\cite{DrosteJW00,JansenW06}, see Section~\ref{sec:SAtime}, can be seen as special cases of this hyper-heuristic.
\end{itemize}
Again for the choice between $1$-bit and $2$-bit flips, they prove upper and lower bounds for the expected optimization time on the \leadingones benchmark function. While the results are relatively tight, the corresponding upper and lower bounds deviate by at most a factor of $6 + o(1)$, the intervals of possible running times intersect. Hence this first running time analysis for these advanced selection mechanisms does not yet give a conclusive picture. 

Given that the probabilities to find a true improvement are very low in this discrete optimization problem, one would expect that the four selection mechanisms all use the two operators in a very balanced manner and thus lead to very similar running times. This is indeed the first set of results in the remarkable work of Lissovoi, Oliveto, and Warwicker~\cite{LissovoiOW17}. Building on the precise analysis method of~\cite{BottcherDN10} instead of the fitness level method, they show that the expected running time for all four selection mechanisms is $\frac 12 \ln(3) \, n^2 + o(n^2) \approx 0.549 n^2$.  Consequently, the more complex heuristics do not give a measurable performance gain over a simple randomized selection of the operator, and all are worse than just using $1$-bit flips, which is known to give an expected running time of precisely $0.5n^2$.
\begin{theorem}[Theorem~4.2 and Corollary~4.3 in~\cite{LissovoiOW17}]\label{thm:SALOW17abc}
  The \oea using one of the selection mechanisms \emph{simple random},  \emph{random gradient}, \emph{greedy}, or \emph{permutation} to choose between the $1$-bit or the $2$-bit flip operator optimizes the \leadingones function in an expected number of $\frac 12 \ln(3) \, n^2 +o(n^2) \approx 0.549 n^2$ iterations. 
\end{theorem}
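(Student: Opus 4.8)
The plan is to use the exact \leadingones analysis technique of \cite{BottcherDN10}, which rests on the invariant that, conditioned on the current search point having fitness $i$, the bits in positions $i+2, \dots, n$ are independent and uniformly distributed. First I would verify that this invariant is preserved by all four mechanisms: under elitist selection an accepted step either strictly extends the leading-ones block (an \emph{improving} step) or leaves the fitness unchanged, and a fitness-neutral accepted step can only flip bits in positions $\ge i+2$ (flipping a leading one is rejected, and flipping the first zero is an improvement). Since flipping uniform bits preserves uniformity, the suffix stays uniform. With this in hand the per-step improvement probabilities become independent of the suffix values: a $1$-bit flip improves only by flipping position $i+1$, so $s^{(1)}_i = 1/n$; a $2$-bit flip improves only by flipping position $i+1$ together with one of the $n-1-i$ positions behind it, so $s^{(2)}_i = 2(n-1-i)/(n(n-1))$.

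Next I would pin down two quantities. \emph{(a) The jump distribution.} When an improving step occurs the new fitness is $i+1+L$, where $L$ counts the leading ones in the (possibly one-bit-modified) suffix. The key computation is that for a $2$-bit improving move the flipped partner sits at a uniformly random position $j \ge i+2$, and a short case distinction shows $P(L \ge k) = 2^{-k}$ regardless of where $j$ falls---exactly as for a $1$-bit improving move, because flipping a single uniform suffix bit leaves the geometric free-rider law intact. Hence the jump law is identical for both operators. \emph{(b) The visit probability.} By the same uniformity, at the first moment positions $1, \dots, i$ are all one the value of position $i+1$ is an independent fair coin, so level $i$ is visited with probability $\tfrac12 + o(1)$ (the correction and the boundary effects near $i = n$ contribute only $o(n^2)$ to the final sum).

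Then I would reduce each of the four mechanisms to the same \emph{effective} per-evaluation improvement rate $\tfrac12(s^{(1)}_i + s^{(2)}_i)$, measuring cost in fitness evaluations (for greedy, two per iteration). \emph{Simple random} draws a uniform operator each step, giving rate $\tfrac12 s^{(1)}_i + \tfrac12 s^{(2)}_i$ directly. \emph{Permutation} alternates the two operators, so each consecutive pair of evaluations contains one of each and improves with probability $(1+o(1))(s^{(1)}_i + s^{(2)}_i)$; per evaluation this is again $\tfrac12(s^{(1)}_i + s^{(2)}_i)$. \emph{Random gradient} re-draws the operator after every non-improving step; since improvements are rare there are at most $n$ of them in total, so the ``repeat on success'' rule is triggered only $O(n) = o(n^2)$ times and otherwise the operator is freshly uniform, matching simple random. \emph{Greedy} applies both operators and keeps the best; it improves in an iteration with probability $s^{(1)}_i + s^{(2)}_i - s^{(1)}_i s^{(2)}_i = (1-o(1))(s^{(1)}_i + s^{(2)}_i)$ and spends two evaluations per iteration, so again $(1+o(1))\big/\big(\tfrac12(s^{(1)}_i + s^{(2)}_i)\big)$ evaluations per level; the rare ($O(1/n^2)$ per step) event that both operators improve, where the larger jump is kept, perturbs the constant only by $o(1)$.

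Finally I would assemble the fitness-level sum. In every case the expected number of evaluations spent at a visited level $i$ is $(1+o(1))\big/\big(\tfrac12(s^{(1)}_i+s^{(2)}_i)\big)$, so weighting by the visit probability $\tfrac12$ gives
\[
E[T] = (1+o(1)) \sum_{i=0}^{n-1} \frac{1}{s^{(1)}_i + s^{(2)}_i} = (1+o(1)) \sum_{i=0}^{n-1} \frac{n}{3 - 2i/n}.
\]
Approximating the sum by $n^2 \int_0^1 (3-2x)^{-1}\,\mathrm{d}x = \tfrac12 \ln(3)\, n^2$ yields the claim. The main obstacle I anticipate is not this integral but the third step: making the reduction of \emph{random gradient} and \emph{greedy} to the balanced rate fully rigorous, that is, bounding the dependencies these mechanisms introduce---the $O(n)$ ``sticky'' steps of random gradient and the occasional double-improvements of greedy that produce atypically large jumps---tightly enough that they disturb neither the leading constant nor the $o(n^2)$ error term.
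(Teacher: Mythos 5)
Your proposal is correct and follows exactly the route the chapter attributes to the cited source: the chapter states Theorem~\ref{thm:SALOW17abc} as a quotation from \cite{LissovoiOW17} without reproducing a proof, noting only that the original argument builds on the precise \leadingones analysis method of \cite{BottcherDN10} rather than the fitness level method, and your reconstruction---the uniform-suffix invariant, level-visit probability $\tfrac12$, the common per-evaluation improvement rate $\tfrac12\bigl(s^{(1)}_i+s^{(2)}_i\bigr)$ for all four mechanisms (correctly charging two evaluations per greedy iteration, consistent with Table~\ref{tab:SAsummaryResults}, which counts function evaluations), and the sum $\sum_i \bigl(s^{(1)}_i+s^{(2)}_i\bigr)^{-1} \approx n^2\int_0^1 (3-2x)^{-1}\,\mathrm{d}x = \tfrac12\ln(3)\,n^2$---is precisely that method, with the genuinely delicate points (the carried-operator steps of random gradient and greedy's rare double improvements) correctly identified as lower-order perturbations. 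The only nit is that, as the footnote of Section~\ref{sec:SAHHadv} warns, \cite{LissovoiOW17} uses the independent double-flip variant of the $2$-bit operator, giving $s^{(2)}_i = 2(n-1-i)/n^2$ rather than your $2(n-1-i)/(n(n-1))$ for two distinct bits; the two differ by a factor of $1-\tfrac1n$, so the leading constant $\tfrac12\ln(3)$ is unaffected either way.
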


Lissovoi et al.~\cite{LissovoiOW17} build on this strong result by proposing to use a slower adaptation (a similar idea can be found already in~\cite{AlanaziL14}, there however in a very problem-specific manner and only with preliminary experimental results). For the \emph{random gradient} method, they propose to switch the low-level heuristic only after a phase of $\tau$ iterations. More precisely, the current low-level heuristic is used for up to $\tau$ iterations. If an improvement is found, immediately another phase with this operator starts. If a phase of $\tau$ iterations does not see a fitness improvement, then a new phase is started with a random operator. 

For this \emph{generalized random gradient} mechanism with a phase length of $\tau=cn$ for a constant~$c$, they show (still for the \leadingones problem and the $1$-bit and $2$-bit mutation operators) an expected running time of $g(c) n^2 + o(n^2)$, where $g(c)$ is a constant depending on $c$ only that tends to $\frac{\ln(2)+1}{4} \approx 0.423$ when $c$ is tending to infinity. Consequently, this new hyper-heuristic outperforms the previously investigated ones when $c$ is large enough. For $c$ tending to infinity, its performance approaches the best-possible performance that can be obtained from the two mutation operators, which is, as also shown in~\cite{LissovoiOW17}, $\frac{\ln(2)+1}{4} n^2 +o(n^2)$. The following variant of this result appeared in the preprint~\cite{LissovoiOW18}.

\begin{theorem}[Theorem~7 and Corollary~15 in~\cite{LissovoiOW18}]\label{thm:SALOW18}
  Consider the \oea using the \emph{generalized random gradient} selection heuristic with phase length $\tau \in \omega(n)$ and $\tau \le c n \ln(n)$, $c < \frac 12$, to choose between the $1$-bit and $2$-bit flip operator. Then this algorithm optimizes the \leadingones function in an expected number of $\frac 14 (\ln(2)+1) n^2 + o(n^2) \approx 0.423 n^2$ iterations. This is, apart from lower order terms, the best running time which can be achieved with these two mutation operators.
\end{theorem}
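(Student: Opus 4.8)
The plan is to reduce the running time to a sum over fitness levels of a ``local rate,'' to show that the random-gradient mechanism makes this local rate equal (up to a $1+o(1)$ factor) to the \emph{better} of the two operators at each level, and finally to evaluate the resulting sum. First I would recall the structure of \leadingones exploited in the precise analysis of~\cite{BottcherDN10}: if the current search point has fitness $k$, then bit $k+1$ is incorrect and the bits in positions $k+2,\dots,n$ are uniformly distributed and stay so throughout the run. Hence a strict improvement requires flipping the critical bit $k+1$; the $1$-bit operator does so with probability $q_k^{(1)}=1/n$, while the (ordered) $2$-bit operator must pair $k+1$ with a later position, giving $q_k^{(2)}=\tfrac{2(n-k-1)}{n^2}$. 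In both cases the bits beyond the new critical position stay uniform, so the gain of an improving step is $1+\mathrm{Geom}(1/2)$ with mean $2$; in particular the visited fitness sequence has the \emph{same} law for both operators, and a short renewal computation gives that each level $1\le k\le n-1$ is a ``landing level'' with density $v_k=\tfrac12$ (up to boundary corrections; this is consistent with the known $n^2/2$ bound for pure $1$-bit RLS). Writing $R_k$ for the amortized expected number of iterations charged per improving step taken while the fitness is near $k$, this reduces the claim to
\[
\E[T]=\sum_{k=0}^{n-1} v_k R_k = (1+o(1))\,\tfrac12\sum_{k=0}^{n-1}\min\Big\{n,\tfrac{n^2}{2(n-k)}\Big\}.
\]

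The heart of the proof is to show that the mechanism realizes the faster operator at essentially every level, i.e. $R_k=(1+o(1))\big/\max\{q_k^{(1)},q_k^{(2)}\}=(1+o(1))\min\{1/q_k^{(1)},1/q_k^{(2)}\}$. I would argue this by a renewal--reward analysis on a window of levels on which $q_k^{(1)},q_k^{(2)}$ are constant up to $1+o(1)$. There the run decomposes into ``operator runs'': after each re-randomization the current operator is kept as long as it improves within $\tau$ steps, so a run with improvement probability $q$ consists of a geometric number of improvements of mean $\approx(1-q)^{-\tau}\approx e^{q\tau}$, each costing $\approx 1/q$ iterations, followed by one lost phase of $\tau$ iterations that triggers the switch. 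Averaging over the uniform initial operator choice, the expected time per unit progress equals
\[
R=\frac{\tfrac12\big(e^{a}/a+1\big)+\tfrac12\big(e^{b}/b+1\big)}{\tfrac12 e^{a}+\tfrac12 e^{b}}\,\tau,
\qquad a:=q_k^{(1)}\tau,\; b:=q_k^{(2)}\tau,
\]
and since $a,b\to\infty$ the exponential in $\max\{a,b\}$ dominates numerator and denominator, yielding $R=(1+o(1))\,\tau/\max\{a,b\}=(1+o(1))/\max\{q_k^{(1)},q_k^{(2)}\}$. Intuitively, the faster operator survives exponentially longer per run and therefore supplies a $1-o(1)$ fraction of the progress, so the algorithm automatically invests almost all of its improving effort in the locally optimal operator.

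Next I would justify the two hypotheses on $\tau$ and turn this heuristic into rigorous bounds. The condition $\tau=\omega(n)$ guarantees $a,b=\omega(1)$ throughout the bulk (where the better operator has $q_k=\Theta(1/n)$), which is exactly what makes the exponential terms dominate and the timeout overhead $\tau$ negligible per unit progress. The upper bound $\tau\le cn\ln n$ with $c<\tfrac12$ guarantees that a single run produces only $e^{q_k\tau}\le e^{2c\ln n}=n^{2c}=o(n)$ improvements: this ensures on the one hand that re-randomizations occur $\omega(1)$ times within each window (so the renewal picture is legitimate and the algorithm cannot get ``stuck'' carrying the slow $2$-bit operator through the entire second half, which would cost $\Theta(n^2\log n)$), and on the other hand that the fitness drifts by only $o(n)$ within a run (so $q_k^{(1)},q_k^{(2)}$ are indeed frozen up to $1+o(1)$). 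The main obstacle is to make these two approximations simultaneously rigorous: one must control the dependence between the random run length, the fitness drift accumulated during a run, and the truncation of the per-improvement waiting time at $\tau$, and show the error accumulated over all $\Theta(n)$ levels is only $o(n^2)$. I expect this needs a careful coupling/concentration argument (splitting $[0..n]$ into $o(n)$-width windows and coupling to frozen-$q$ processes), plus crude separate bounds for the $o(n)$ levels near $k=n$ where $q_k^{(2)}\tau$ is no longer large and for the measure-zero neighbourhood of the crossover $k=n/2$; both contribute only $o(n^2)$ and are harmless.

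Finally I would evaluate the sum. Substituting $k=\alpha n$ and $\min\{1/q_k^{(1)},1/q_k^{(2)}\}=n\min\{1,\tfrac1{2(1-\alpha)}\}$ gives
\[
\E[T]=(1+o(1))\,\frac{n^2}{2}\int_0^1 \min\Big\{1,\frac{1}{2(1-\alpha)}\Big\}\,d\alpha
=(1+o(1))\,\frac{n^2}{2}\Big(\underbrace{\tfrac12\ln 2}_{\alpha<1/2}+\underbrace{\tfrac12}_{\alpha>1/2}\Big),
\]
which is $\tfrac14(\ln 2+1)n^2+o(n^2)$, as claimed. The matching lower bound is immediate: on \leadingones the only information relevant to the next improvement is the current fitness, so no strategy using just these two operators can spend less than $\min\{1/q_k^{(1)},1/q_k^{(2)}\}$ expected time per visit to level $k$; summing against $v_k$ reproduces the same constant as a universal lower bound, so the running time is optimal for these operators up to lower-order terms.
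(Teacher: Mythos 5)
A caveat you could not have known: this survey does not prove Theorem~\ref{thm:SALOW18} at all---the statement is quoted from~\cite{LissovoiOW18}---so the only in-paper material to compare against is the surrounding discussion. Your sketch is faithful to the architecture of the cited original proof: both rest on the precise level-based framework of~\cite{BottcherDN10} (each level visited with probability $\tfrac 12$, improvement probabilities $q_k^{(1)}=1/n$ and $q_k^{(2)}=2(n-k-1)/n^2$ for the ordered $2$-bit operator of this section's footnote), and both reduce the claim to showing that the mechanism spends $(1+o(1))/\max\{q_k^{(1)},q_k^{(2)}\}$ iterations per improvement at almost every level. Your reading of the two hypotheses matches the survey's own commentary exactly: $\tau=\omega(n)$ makes the timeout overhead $\tau e^{-q\tau}=o(n)$ per improvement, while $c<\tfrac 12$ caps a run of the locally worse operator at $e^{q\tau}\le n^{2c}=o(n)$ improvements, so the algorithm cannot carry the $2$-bit operator through the second half. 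The closing computation $\tfrac{n^2}{2}\bigl(\tfrac 12 \ln 2+\tfrac 12\bigr)=\tfrac 14(\ln 2 +1)n^2$ is correct, and your lower-bound argument (per-iteration improvement probability at most $\max\{q_k^{(1)},q_k^{(2)}\}$ under any selection policy, visit density $\tfrac 12$ preserved by both operators) is precisely the optimality argument of~\cite{LissovoiOW17}.

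The one genuine soft spot is your claim that the levels near $k=n$ can be dispatched by ``crude separate bounds'' contributing $o(n^2)$. There are $\Theta(n^2/\tau)$ levels at which $b:=q_k^{(2)}\tau=O(1)$, and the crude bound there---$O(\tau)$ per visited level, from waiting out an expected constant number of failed $2$-bit phases---gives $O(n^2/\tau)\cdot O(\tau)=O(n^2)$, i.e., the same order as the main term, not $o(n^2)$. What actually saves this regime is the renewal accounting itself, carried into the boundary: arrivals at such a level inside a $2$-bit run occur only an $O(e^{b-a})=O(e^{-\tau/n})$ fraction of the time (equivalently, a bad run there consists of $O(1)$ phases and is amortized against the preceding $1$-bit run of $e^{\tau/n}=\omega(\tau/n)$ improvements), yielding $\tfrac n2 + O(\tau e^{-\tau/n})=\tfrac n2+o(n)$ per level. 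Note also that your run-length formula $\tau(e^b/b+1)$ was derived under $a,b\to\infty$ and is simply false for $b=O(1)$: the true expected run length is then $\Theta(\tau)$, not $\approx \tau/b$, so the boundary cannot be absorbed by evaluating the same closed form---it needs its own, still renewal-based, estimate. This is a fixable omission rather than a flaw in the approach, but in a complete write-up it is exactly the point where the stated $o(n^2)$ error budget would otherwise break.
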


A similarly generalized variant of the \emph{greedy} selection hyper-heuristics is also found to improve over the classic selection heuristics, but appears not to give the same good results as the \emph{generalized random gradient} method.

The \emph{generalized random gradient} heuristic was further extended in~\cite{DoerrLOW18}. There an operator was defined as successful (which leads to another phase using this operator) if it leads to $\sigma$ improvements in a phase of at most $\tau$ iterations. Hence in this language, the previous \emph{generalized random gradient} heuristic uses $\sigma=1$. By using a larger value of $\sigma$, the algorithm is able to take more robust decisions on what is a success. This was used in~\cite{DoerrLOW18} to determine the phase length $\tau$ in a self-adjusting manner. While the previous work~\cite{LissovoiOW17} does not state this explicitly, the choice of $\tau$ is crucial. A $\tau$-value of smaller asymptotic order than $\Theta(n)$ leads to typically no improvement within a phase and thus reverts the algorithms to the simple random heuristic. A $\tau$-value of more than $c n \ln(n)$, where $c$ is a suitable constant, results in that both operators are successful in most parts of the search space. Consequently, the algorithm sticks to the first choice for a large majority of the optimization process and thus does not profit from the availability of both operators. 

Since the choice of $\tau$ is that critical, a mechanism successfully adjusting it to the right value is  desirable. In~\cite{DoerrLOW18} it is shown that by choosing $\sigma \in \Omega(\log^4 n) \cap o(\sqrt{n / \log n})$---note that this is a quite wide range---the value of $\tau$ can be easily adjusted on the fly via a multiplicative update rule. This gives again the asymptotically optimal running time of Theorem~\ref{thm:SALOW18}.

\section{Conclusion and Outlook}\label{sec:SAconclusions}

The recent years saw a significant increase in our understanding of parameter control. The results stemming from the theory community indicate that success-based rules can easily lead to good parameter settings. These rules are easy to find due to their intuitive hyper-parameters: If we conduct the update by multiplying the parameter with $F$ in case of a success and with $F^{1/(\sigma-1)}$ in case of failure, then $F$ controls the speed of adaptation and $1 / \sigma$ is the intended rate of successes (e.g., $\sigma = 5$ in the case of the classic $1/5$-th rule). It is also easy to observe if such an update rule works as desired or not: If the aimed-at rate of successes can not be obtained, then imbalance in the updates leads to an exponential growth or shrinking of the parameter value. Therefore, we currently see no reason to not try such a multiplicative update rule in a situation where one expects a monotonic relation between a parameter and the success of an iteration. 

The increased power of learning-based approaches (being able to gather and exploit information obtained over many iterations) or self-adaptation suggest that one should not ignore these, however, our current understanding here is more limited. Indeed, we feel that making these directions more usable is among the following \textbf{open problems} we want to mention.

\begin{itemize}
	\item \textbf{Theory for learning-inspired parameter control mechanisms.} While there has been considerable momentum for empirical works on learning-inspired parameter control mechanisms~\cite{Thierens05,DaCostaGECCO08,FialhoCSS08,FialhoCSS10,LiFKZ14}, these mechanisms still lack a solid mathematical foundation. The only result that we are aware of in this context is the (almost) optimality of the $\varepsilon$-greedy RLS variant presented in~\cite{DoerrDY16PPSN}, cf. Section~\ref{sec:SAlearning} above. In addition to its intrinsic motivation, this research direction will most probably result in a better reconciliation of research activities in optimization and machine learning, where many of the empirically tested techniques stem from. 
  \item \textbf{Understanding self-adaptation.} While self-adaptation is massively used in continuous evolutionary optimization, it only plays a marginal role in discrete optimization. The general hope that the inclusion of the adaptive process into the main evolutionary algorithm easily automates the on-the-fly control of parameters has not yet come true. The two, very recent, theoretical works on this topic suggest, however, that self-adaptation can work. Therefore, extending these first works towards a more profound understanding how to use self-adaptation in discrete evolutionary optimization seems to be both a profitable and a feasible endeavor. 
	\item \textbf{Controlling more than one parameter.} As indicated in Section~\ref{sec:SAintro}, even for static parameter settings we do not have many examples of running time bounds that depend on two or more parameters, with the exceptions of a bound for the $(1+\lambda)$~EA with mutation rate $p=c/n$ proven in~\cite{GiessenW17Algorithmica}, a tight running time analysis for the $(\mu+\lambda)$~EA~\cite{AntipovDFH18}, and the 3-dimensional analysis of the \ga presented in~\cite{DoerrD18ga}. For non-static parameter choices the complexity of the analysis increases considerably, as the parameters often interact in a difficult to analyze manner. We are not aware of any theoretical result addressing the control of two or more parameters. According to~\cite{KarafotiasHE15} also the empirical works focus mostly on controlling a single parameter, while for the simultaneous adaptation of two or more parameters only few mechanisms have been tested. 
\end{itemize}

\subsubsection*{Acknowledgments} 
We thank Thomas B\"ack for many useful discussions on this topic. We thank Franziska Huth for providing Figure~\ref{fig:SAgalambda}. This work was supported by a public grant as part of the Investissement d'avenir project, reference ANR-11-LABX-0056-LMH, LabEx LMH, in a joint call with the Gaspard Monge Program for optimization, operations research, and their interactions with data sciences.

}
%

\begin{thebibliography}{dPdLDD15}

\bibitem[ACBF02]{Auer2002}
Peter Auer, Nicol{\`o} Cesa-Bianchi, and Paul Fischer.
\newblock Finite-time analysis of the multiarmed bandit problem.
\newblock {\em Machine Learning}, 47:235--256, 2002.

\bibitem[AD18]{AntipovD18}
Denis {Antipov} and Benjamin {Doerr}.
\newblock {Precise Runtime Analysis for Plateaus}.
\newblock {\em CoRR}, abs/1806.01331, 2018.

\bibitem[ADFH18]{AntipovDFH18}
Denis Antipov, Benjamin Doerr, Jiefeng Fang, and Tangi H\'etet.
\newblock Runtime analysis for the $(\mu+\lambda)$ {EA} optimizing {O}ne{M}ax.
\newblock In {\em Proc. of Genetic and Evolutionary Computation Conference
  (GECCO'18)}. ACM, 2018.
\newblock To appear.

\bibitem[AL14]{AlanaziL14}
Fawaz Alanazi and Per~Kristian Lehre.
\newblock Runtime analysis of selection hyper-heuristics with classical
  learning mechanisms.
\newblock In {\em Proc.\ of Congress on Evolutionary Computation (CEC'14)},
  pages 2515--2523. IEEE, 2014.

\bibitem[AM16]{AletiM16}
Aldeida Aleti and Irene Moser.
\newblock A systematic literature review of adaptive parameter control methods
  for evolutionary algorithms.
\newblock {\em ACM Computing Surveys}, 49:56:1--56:35, 2016.

\bibitem[AMS{\etalchar{+}}15]{GGA}
Carlos Ans\'{o}tegui, Yuri Malitsky, Horst Samulowitz, Meinolf Sellmann, and
  Kevin Tierney.
\newblock Model-based genetic algorithms for algorithm configuration.
\newblock In {\em Proc. of International Conference on Artificial Intelligence
  (IJCAI'15)}, pages 733--739. AAAI Press, 2015.

\bibitem[Aug09]{Auger09}
Anne Auger.
\newblock Benchmarking the (1+1) evolution strategy with one-fifth success rule
  on the {BBOB}-2009 function testbed.
\newblock In {\em Companion Material for Proc. of Genetic and Evolutionary
  Computation Conference (GECCO'09)}, pages 2447--2452. ACM, 2009.

\bibitem[B{\"{a}}c92]{Back92}
Thomas B{\"{a}}ck.
\newblock The interaction of mutation rate, selection, and self-adaptation
  within a genetic algorithm.
\newblock In {\em Proc. of Parallel Problem Solving from Nature (PPSN'92)},
  pages 87--96. Elsevier, 1992.

\bibitem[BBFKK10]{SPOT}
Thomas Bartz-Beielstein, Oliver Flasch, Patrick Koch, and Wolfgang Konen.
\newblock {SPOT: A toolbox for interactive and automatic tuning in the R
  environment}.
\newblock In {\em Proc.\ of the 20.~Workshop Computational Intelligence}, pages
  264--273. Universit{\"a}tsverlag Karlsruhe, 2010.

\bibitem[BD17]{BuzdalovD17}
Maxim Buzdalov and Benjamin Doerr.
\newblock Runtime analysis of the $(1+(\lambda,\lambda))$ {Genetic Algorithm}
  on random satisfiable {3-CNF} formulas.
\newblock In {\em Proc. of Genetic and Evolutionary Computation Conference
  (GECCO'17)}, pages 1343--1350. ACM, 2017.

\bibitem[BDN10]{BottcherDN10}
S\"untje B{\"o}ttcher, Benjamin Doerr, and Frank Neumann.
\newblock Optimal fixed and adaptive mutation rates for the {L}eading{O}nes
  problem.
\newblock In {\em Proc. of Parallel Problem Solving from Nature (PPSN'10)},
  volume 6238 of {\em Lecture Notes in Computer Science}, pages 1--10.
  Springer, 2010.

\bibitem[BGH{\etalchar{+}}13]{BurkeGHKOOQ13}
Edmund~K. Burke, Michel Gendreau, Matthew~R. Hyde, Graham Kendall, Gabriela
  Ochoa, Ender {\"{O}}zcan, and Rong Qu.
\newblock Hyper-heuristics: a survey of the state of the art.
\newblock {\em Journal of the Operational Research Society}, 64:1695--1724,
  2013.

\bibitem[BLS14]{BadkobehLS14}
Golnaz Badkobeh, Per~Kristian Lehre, and Dirk Sudholt.
\newblock Unbiased black-box complexity of parallel search.
\newblock In {\em Proc. of Parallel Problem Solving from Nature (PPSN'14)},
  volume 8672 of {\em Lecture Notes in Computer Science}, pages 892--901.
  Springer, 2014.

\bibitem[BS96]{BackS96}
Thomas B{\"{a}}ck and Martin Sch{\"{u}}tz.
\newblock Intelligent mutation rate control in canonical genetic algorithms.
\newblock In {\em International Symposium on Foundations of Intelligent Systems
  (ISMIS'96)}, volume 1079 of {\em Lecture Notes in Computer Science}, pages
  158--167. Springer, 1996.

\bibitem[CFSS08]{DaCostaGECCO08}
Lu{\'{\i}}s~Da Costa, {\'{A}}lvaro Fialho, Marc Schoenauer, and Mich{\`{e}}le
  Sebag.
\newblock Adaptive operator selection with dynamic multi-armed bandits.
\newblock In {\em Proc. of Genetic and Evolutionary Computation Conference
  (GECCO'08)}, pages 913--920. ACM, 2008.

\bibitem[CI01]{CarsonI01}
Ted Carson and Russell Impagliazzo.
\newblock Hill-climbing finds random planted bisections.
\newblock In {\em Proc. of Annual Symposium on Discrete Algorithms (SODA'01)},
  pages 903--909. {ACM/SIAM}, 2001.

\bibitem[CS09]{CervantesStephens2009}
Jorge Cervantes and Christopher~R. Stephens.
\newblock Limitations of existing mutation rate heuristics and how a rank {GA}
  overcomes them.
\newblock {\em IEEE Transactions on Evolutionary Computation}, 13:369--397,
  2009.

\bibitem[DD15a]{DoerrD15self}
Benjamin Doerr and Carola Doerr.
\newblock Optimal parameter choices through self-adjustment: Applying the
  1/5-th rule in discrete settings.
\newblock In {\em Proc. of Genetic and Evolutionary Computation Conference
  (GECCO'15)}, pages 1335--1342. {ACM}, 2015.

\bibitem[DD15b]{DoerrD15tight}
Benjamin Doerr and Carola Doerr.
\newblock A tight runtime analysis of the (1+({\(\lambda\)},{\(\lambda\)}))
  genetic algorithm on {{O}ne{M}ax}.
\newblock In {\em Proc. of Genetic and Evolutionary Computation Conference
  (GECCO'15)}, pages 1423--1430. {ACM}, 2015.

\bibitem[DD18]{DoerrD18ga}
Benjamin Doerr and Carola Doerr.
\newblock Optimal static and self-adjusting parameter choices for the
  $(1+(\lambda,\lambda))$ genetic algorithm.
\newblock {\em Algorithmica}, 80:1658--1709, 2018.

\bibitem[DDE13]{DoerrDE13}
Benjamin Doerr, Carola Doerr, and Franziska Ebel.
\newblock Lessons from the black-box: {F}ast crossover-based genetic
  algorithms.
\newblock In {\em Proc. of Genetic and Evolutionary Computation Conference
  (GECCO'13)}, pages 781--788. ACM, 2013.

\bibitem[DDE15]{DoerrDE15}
Benjamin Doerr, Carola Doerr, and Franziska Ebel.
\newblock From black-box complexity to designing new genetic algorithms.
\newblock {\em Theoretical Computer Science}, 567:87--104, 2015.

\bibitem[DDK16]{DoerrDK16PPSN}
Benjamin Doerr, Carola Doerr, and Timo K{\"{o}}tzing.
\newblock Provably optimal self-adjusting step sizes for multi-valued decision
  variables.
\newblock In {\em Proc. of Parallel Problem Solving from Nature (PPSN'16)},
  volume 9921 of {\em Lecture Notes in Computer Science}, pages 782--791.
  Springer, 2016.

\bibitem[DDK18]{DoerrDK18}
Benjamin Doerr, Carola Doerr, and Timo K{\"{o}}tzing.
\newblock Static and self-adjusting mutation strengths for multi-valued
  decision variables.
\newblock {\em Algorithmica}, 80:1732--1768, 2018.

\bibitem[DDY16a]{DoerrDY16PPSN}
Benjamin Doerr, Carola Doerr, and Jing Yang.
\newblock $k$-bit mutation with self-adjusting $k$ outperforms standard bit
  mutation.
\newblock In {\em Proc. of Parallel Problem Solving from Nature (PPSN'16)},
  volume 9921 of {\em Lecture Notes in Computer Science}, pages 824--834.
  Springer, 2016.

\bibitem[DDY16b]{DoerrDY16}
Benjamin Doerr, Carola Doerr, and Jing Yang.
\newblock Optimal parameter choices via precise black-box analysis.
\newblock In {\em Proc. of Genetic and Evolutionary Computation Conference
  (GECCO'16)}, pages 1123--1130. {ACM}, 2016.

\bibitem[Dev72]{Devroye72}
Luc Devroye.
\newblock {\em The compound random search}.
\newblock Ph.D. dissertation, Purdue Univ., West Lafayette, {IN}, 1972.

\bibitem[DFW10]{DoerrFW10}
Benjamin Doerr, Mahmoud Fouz, and Carsten Witt.
\newblock Quasirandom evolutionary algorithms.
\newblock In {\em Proc. of Genetic and Evolutionary Computation Conference
  (GECCO'10)}, pages 1457--1464. ACM, 2010.

\bibitem[DFW11]{DoerrFW11}
Benjamin Doerr, Mahmoud Fouz, and Carsten Witt.
\newblock Sharp bounds by probability-generating functions and variable drift.
\newblock In {\em Proc.\ of Genetic and Evolutionary Computation Conference
  (GECCO'11)}, pages 2083--2090. ACM, 2011.

\bibitem[DGWY17]{DoerrGWY17}
Benjamin Doerr, Christian Gie{\ss}en, Carsten Witt, and Jing Yang.
\newblock The $(1+\lambda)$~evolutionary algorithm with self-adjusting mutation
  rate.
\newblock In {\em Proc. of Genetic and Evolutionary Computation Conference
  (GECCO'17)}, pages 1351--1358. ACM, 2017.

\bibitem[DJW00]{DrosteJW00}
Stefan Droste, Thomas Jansen, and Ingo Wegener.
\newblock Dynamic parameter control in simple evolutionary algorithms.
\newblock In {\em Proc. of Foundations of Genetic Algorithms (FOGA'00)}, pages
  275--294. Morgan Kaufmann, 2000.

\bibitem[DK15]{DoerrK15}
Benjamin Doerr and Marvin K{\"{u}}nnemann.
\newblock Optimizing linear functions with the (1+{\(\lambda\)}) evolutionary
  algorithm---different asymptotic runtimes for different instances.
\newblock {\em Theoretical Computer Science}, 561:3--23, 2015.

\bibitem[DL16]{DangL16}
Duc-Cuong Dang and Per~Kristian Lehre.
\newblock Self-adaptation of mutation rates in non-elitist populations.
\newblock In {\em Proc. of Parallel Problem Solving from Nature (PPSN'16)},
  volume 9921 of {\em Lecture Notes in Computer Science}, pages 803--813.
  Springer, 2016.

\bibitem[DLMN17]{DoerrLMN17}
Benjamin Doerr, Huu~Phuoc Le, R\'egis Makhmara, and Ta~Duy Nguyen.
\newblock Fast genetic algorithms.
\newblock In {\em Proc. of Genetic and Evolutionary Computation Conference
  (GECCO'17)}, pages 777--784. {ACM}, 2017.
\newblock Full version available at \url{http://arxiv.org/abs/1703.03334}.

\bibitem[DLOW18]{DoerrLOW18}
Benjamin Doerr, Andrei Lissovoi, Pietro~S. Oliveto, and John~Alasdair
  Warwicker.
\newblock On the runtime analysis of selection hyper-heuristics with adaptive
  learning periods.
\newblock In {\em Proc. of Genetic and Evolutionary Computation Conference
  (GECCO'18)}. ACM, 2018.
\newblock To appear.

\bibitem[DNS17]{DoerrNS17}
Benjamin Doerr, Frank Neumann, and Andrew~M. Sutton.
\newblock Time complexity analysis of evolutionary algorithms on random
  satisfiable $k$-{CNF} formulas.
\newblock {\em Algorithmica}, 78:561--586, 2017.

\bibitem[Doe16]{Doerr16}
Benjamin Doerr.
\newblock Optimal parameter settings for the $(1+(\lambda,\lambda))$ genetic
  algorithm.
\newblock In {\em Proc. of Genetic and Evolutionary Computation Conference
  (GECCO'16)}, pages 1107--1114. {ACM}, 2016.

\bibitem[Doe18]{Doerr18BBC}
Carola Doerr.
\newblock Complexity theory for discrete black-box optimization heuristics.
\newblock {\em CoRR}, abs/1801.02037, 2018.

\bibitem[dPdLDD15]{LaillevaultDD15}
Axel de~Perthuis~de Laillevault, Benjamin Doerr, and Carola Doerr.
\newblock Money for nothing: Speeding up evolutionary algorithms through better
  initialization.
\newblock In {\em Proc. of Genetic and Evolutionary Computation Conference
  (GECCO'15)}, pages 815--822. ACM, 2015.

\bibitem[DRWW10]{Dit-Row-Weg-Woe:j:10}
Martin Dietzfelbinger, Jonathan~E. Rowe, Ingo Wegener, and Philipp Woelfel.
\newblock Tight bounds for blind search on the integers and the reals.
\newblock {\em Combinatorics, Probability {\&} Computing}, 19:711--728, 2010.

\bibitem[DWY18]{DoerrWY18}
Benjamin Doerr, Carsten Witt, and Jing Yang.
\newblock Runtime analysis for self-adaptive mutation rates.
\newblock In {\em Proc. of Genetic and Evolutionary Computation Conference
  (GECCO'18)}. ACM, 2018.
\newblock To appear.

\bibitem[EHM99]{EibenHM99}
Agoston~Endre Eiben, Robert Hinterding, and Zbigniew Michalewicz.
\newblock Parameter control in evolutionary algorithms.
\newblock {\em IEEE Transactions on Evolutionary Computation}, 3:124--141,
  1999.

\bibitem[EMSS07]{EibenMSS07}
A.~E. Eiben, Zbigniew Michalewicz, Marc Schoenauer, and James~E. Smith.
\newblock Parameter control in evolutionary algorithms.
\newblock In {\em Parameter Setting in Evolutionary Algorithms}, volume~54 of
  {\em Studies in Computational Intelligence}, pages 19--46. Springer, 2007.

\bibitem[FCSS08]{FialhoCSS08}
{\'{A}}lvaro Fialho, Lu{\'{\i}}s~Da Costa, Marc Schoenauer, and Mich{\`{e}}le
  Sebag.
\newblock Extreme value based adaptive operator selection.
\newblock In {\em Proc. of Parallel Problem Solving from Nature (PPSN'08)},
  volume 5199 of {\em Lecture Notes in Computer Science}, pages 175--184.
  Springer, 2008.

\bibitem[FCSS09]{FialhoCSS09}
{\'{A}}lvaro Fialho, Lu{\'{\i}}s~Da Costa, Marc Schoenauer, and Mich{\`{e}}le
  Sebag.
\newblock Dynamic multi-armed bandits and extreme value-based rewards for
  adaptive operator selection in evolutionary algorithms.
\newblock In {\em Proc. of Learning and Intelligent Optimization (LION'09)},
  volume 5851 of {\em Lecture Notes in Computer Science}, pages 176--190.
  Springer, 2009.

\bibitem[FCSS10]{FialhoCSS10}
{\'{A}}lvaro Fialho, Lu{\'{\i}}s~Da Costa, Marc Schoenauer, and Mich{\`{e}}le
  Sebag.
\newblock Analyzing bandit-based adaptive operator selection mechanisms.
\newblock {\em Annals of Mathematics and Artificial Intelligence}, 60:25--64,
  2010.

\bibitem[GP14]{GoldmanP14}
Brian~W. Goldman and William~F. Punch.
\newblock Parameter-less population pyramid.
\newblock In {\em Proc. of Genetic and Evolutionary Computation Conference
  (GECCO'14)}, pages 785--792. ACM, 2014.

\bibitem[GW03]{GielW03}
Oliver Giel and Ingo Wegener.
\newblock Evolutionary algorithms and the maximum matching problem.
\newblock In {\em Proc.\ of Symposium on Theoretical Aspects of Computer
  Science (STACS'03)}, pages 415--426. Springer, 2003.

\bibitem[GW17]{GiessenW17Algorithmica}
Christian Gie{\ss}en and Carsten Witt.
\newblock The interplay of population size and mutation probability in the
  $(1+\lambda)$ {EA} on {{O}ne{M}ax}.
\newblock {\em Algorithmica}, 78:587--609, 2017.

\bibitem[HGO95]{HansenGO95}
Nikolaus Hansen, Andreas Gawelczyk, and Andreas Ostermeier.
\newblock Sizing the population with respect to the local progress in
  (1,$\lambda$)-evolution strategies - a theoretical analysis.
\newblock In {\em Proc.\ of Congress on Evolutionary Computation (CEC'95)},
  pages 80--85. IEEE, 1995.

\bibitem[HHD12]{HeHD12}
Jun He, Feidun He, and Hongbin Dong.
\newblock Pure strategy or mixed strategy? - {A}n initial comparison of their
  asymptotic convergence rate and asymptotic hitting time.
\newblock In {\em Proc. of Evolutionary Computation in Combinatorial
  Optimization (EvoCOP'12)}, volume 7245 of {\em Lecture Notes in Computer
  Science}, pages 218--229. Springer, 2012.

\bibitem[HHLB11]{SMAC}
Frank Hutter, Holger~H. Hoos, and Kevin Leyton-Brown.
\newblock Sequential model-based optimization for general algorithm
  configuration.
\newblock In {\em Proc. of Learning and Intelligent Optimization (LION'11)},
  pages 507--523. Springer, 2011.

\bibitem[HHLBS09]{ParamILS}
Frank Hutter, Holger~H. Hoos, Kevin Leyton-Brown, and Thomas St\"{u}tzle.
\newblock Param{ILS}: An automatic algorithm configuration framework.
\newblock {\em Journal of Artificial Intelligence Research}, 36:267--306, 2009.

\bibitem[HJJ03]{Henderson2003}
Darrall Henderson, Sheldon~H. Jacobson, and Alan~W. Johnson.
\newblock The theory and practice of simulated annealing.
\newblock In Fred Glover and Gary~A. Kochenberger, editors, {\em Handbook of
  Metaheuristics}, pages 287--319. Springer, 2003.

\bibitem[Imp01]{Impagliazzo01}
Russell Impagliazzo.
\newblock Hill-climbing vs. simulated annealing for planted bisection problems.
\newblock In {\em Proc. of the 4th International Workshop on Approximation
  Algorithms for Combinatorial Optimization Problems (RANDOM-APPROX'01)},
  volume 2129 of {\em Lecture Notes in Computer Science}, pages 2--5. Springer,
  2001.

\bibitem[JDJW05]{JansenJW05}
Thomas Jansen, Kenneth~A. De~Jong, and Ingo Wegener.
\newblock On the choice of the offspring population size in evolutionary
  algorithms.
\newblock {\em Evolutionary Computation}, 13:413--440, 2005.

\bibitem[Joh10]{Johannsen10}
Daniel Johannsen.
\newblock {\em Random Combinatorial Structures and Randomized Search
  Heuristics}.
\newblock PhD thesis, Universit\"at des Saarlandes, 2010.

\bibitem[JS93]{JerrumS93}
Mark Jerrum and Gregory~B. Sorkin.
\newblock Simulated annealing for graph bisection.
\newblock In {\em Proc. of Annual Symposium on Foundations of Computer Science
  (FOCS'93)}, pages 94--103. {IEEE} Computer Society, 1993.

\bibitem[JS97]{JerrumS96}
Mark Jerrum and Alistair Sinclair.
\newblock The {M}arkov chain {M}onte {C}arlo method: An approach to approximate
  counting and integration.
\newblock In Dorit~S. Hochbaum, editor, {\em Approximation Algorithms for
  {NP}-hard Problems}, pages 482--520. PWS Publishing Co., 1997.

\bibitem[JW06]{JansenW06}
Thomas Jansen and Ingo Wegener.
\newblock On the analysis of a dynamic evolutionary algorithm.
\newblock {\em Journal of Discrete Algorithms}, 4:181--199, 2006.

\bibitem[JW07]{JansenW07}
Thomas Jansen and Ingo Wegener.
\newblock A comparison of simulated annealing with a simple evolutionary
  algorithm on pseudo-{B}oolean functions of unitation.
\newblock {\em Theoretical Computer Science}, 386:73--93, 2007.

\bibitem[JZ14]{JansenZ14}
Thomas Jansen and Christine Zarges.
\newblock Performance analysis of randomised search heuristics operating with a
  fixed budget.
\newblock {\em Theoretical Computer Science}, 545:39--58, 2014.

\bibitem[KGV83]{SA83}
Scott Kirkpatrick, C.~D. Gelatt, and Mario~P. Vecchi.
\newblock Optimization by simulated annealing.
\newblock {\em Science}, 220:671--680, 1983.

\bibitem[KHE15]{KarafotiasHE15}
Giorgos Karafotias, Mark Hoogendoorn, and A.E. Eiben.
\newblock Parameter control in evolutionary algorithms: Trends and challenges.
\newblock {\em IEEE Transactions on Evolutionary Computation}, 19:167--187,
  2015.

\bibitem[KLR{\etalchar{+}}11]{KruisselbrinkLREB11}
Johannes~W. Kruisselbrink, Rui Li, Edgar Reehuis, Jeroen Eggermont, and Thomas
  B{\"{a}}ck.
\newblock On the log-normal self-adaptation of the mutation rate in binary
  search spaces.
\newblock In {\em Proc. of Genetic and Evolutionary Computation Conference
  (GECCO'11)}, pages 893--900. {ACM}, 2011.

\bibitem[KMH{\etalchar{+}}04]{KernMHBOK04}
Stefan Kern, Sibylle~D. M{\"{u}}ller, Nikolaus Hansen, Dirk B{\"{u}}che, Jiri
  Ocenasek, and Petros Koumoutsakos.
\newblock Learning probability distributions in continuous evolutionary
  algorithms - a comparative review.
\newblock {\em Natural Computing}, 3:77--112, 2004.

\bibitem[LDC{\etalchar{+}}16]{irace}
Manuel L{\'{o}}pez{-}Ib{\'{a}}{\~{n}}ez, J{\'{e}}r{\'{e}}mie Dubois{-}Lacoste,
  Leslie~P{\'{e}}rez C{\'{a}}ceres, Mauro Birattari, and Thomas St{\"{u}}tzle.
\newblock The irace package: Iterated racing for automatic algorithm
  configuration.
\newblock {\em Operations Research Perspectives}, 3:43--58, 2016.

\bibitem[LFKZ14]{LiFKZ14}
Ke~Li, {\'{A}}lvaro Fialho, Sam Kwong, and Qingfu Zhang.
\newblock Adaptive operator selection with bandits for a multiobjective
  evolutionary algorithm based on decomposition.
\newblock {\em {IEEE} Transactions on Evolutionary Computation}, 18:114--130,
  2014.

\bibitem[LLM07]{ParamSettBook}
Fernando~G. Lobo, Cl{\'{a}}udio~F. Lima, and Zbigniew Michalewicz, editors.
\newblock {\em Parameter Setting in Evolutionary Algorithms}, volume~54 of {\em
  Studies in Computational Intelligence}.
\newblock Springer, 2007.

\bibitem[L{\"{O}}13]{LehreO13}
Per~Kristian Lehre and Ender {\"{O}}zcan.
\newblock A runtime analysis of simple hyper-heuristics: to mix or not to mix
  operators.
\newblock In {\em Proc. of Foundations of Genetic Algorithms (FOGA'13)}, pages
  97--104. ACM, 2013.

\bibitem[LOW17]{LissovoiOW17}
Andrei Lissovoi, Pietro~S. Oliveto, and John~Alasdair Warwicker.
\newblock On the runtime analysis of generalised selection hyper-heuristics for
  pseudo-{B}oolean optimisation.
\newblock In {\em Proc. of Genetic and Evolutionary Computation Conference
  (GECCO'17)}, pages 849--856. ACM, 2017.

\bibitem[LOW18]{LissovoiOW18}
Andrei Lissovoi, Pietro~S. Oliveto, and John~Alasdair Warwicker.
\newblock Hyper-heuristics can achieve optimal performance for pseudo-boolean
  optimisation.
\newblock {\em Arxiv e-prints}, 1801.07546, 2018.

\bibitem[LS11]{LassigS11}
J{\"o}rg L{\"a}ssig and Dirk Sudholt.
\newblock Adaptive population models for offspring populations and parallel
  evolutionary algorithms.
\newblock In {\em Proc. of Foundations of Genetic Algorithms (FOGA'11)}, pages
  181--192. ACM, 2011.

\bibitem[MRR{\etalchar{+}}53]{Metropolis53}
Nicholas Metropolis, Arianna~W. Rosenbluth, Marshall~N. Rosenbluth, Augusta~H.
  Teller, and Edward Teller.
\newblock Equation of state calculations by fast computing machines.
\newblock {\em The Journal of Chemical Physics}, 21:1087--1092, 1953.

\bibitem[MS15]{MambriniS15}
Andrea Mambrini and Dirk Sudholt.
\newblock Design and analysis of schemes for adapting migration intervals in
  parallel evolutionary algorithms.
\newblock {\em Evolutionary Computation}, 23:559--582, 2015.

\bibitem[NSCN17]{NeumannSCN17image}
Aneta Neumann, Zygmunt~L. Szpak, Wojciech Chojnacki, and Frank Neumann.
\newblock Evolutionary image composition using feature covariance matrices.
\newblock In {\em Proc. of the Genetic and Evolutionary Computation Conference,
  ({GECCO'17})}, pages 817--824. ACM, 2017.

\bibitem[NW07]{NeumannW07}
Frank Neumann and Ingo Wegener.
\newblock Randomized local search, evolutionary algorithms, and the minimum
  spanning tree problem.
\newblock {\em Theoretical Computer Science}, 378:32--40, 2007.

\bibitem[OLN09]{OlivetoLN09}
Pietro~Simone Oliveto, Per~Kristian Lehre, and Frank Neumann.
\newblock Theoretical analysis of rank-based mutation - combining exploration
  and exploitation.
\newblock In {\em Proc. of Congress on Evolutionary Computation (CEC'09)},
  pages 1455--1462. {IEEE}, 2009.

\bibitem[QTZ16]{QianTZ16}
Chao Qian, Ke~Tang, and Zhi{-}Hua Zhou.
\newblock Selection hyper-heuristics can provably be helpful in evolutionary
  multi-objective optimization.
\newblock In {\em Proc.\ of Parallel Problem Solving From Nature (PPSN'16)},
  volume 9921 of {\em Lecture Notes in Computer Science}, pages 835--846.
  Springer, 2016.

\bibitem[Rec73]{Rechenberg}
Ingo Rechenberg.
\newblock {\em Evolutionsstrategie}.
\newblock Friedrich Fromman Verlag (G{\"u}nther Holzboog {KG}), Stuttgart,
  1973.

\bibitem[RS09]{ReichelS09}
Joachim Reichel and Martin Skutella.
\newblock On the size of weights in randomized search heuristics.
\newblock In {\em Proc.\ of Foundations of Genetic Algorithms (FOGA'09)}, pages
  21--28. ACM, 2009.

\bibitem[SH88]{SasakiH88}
Galen~H. Sasaki and Bruce~E. Hajek.
\newblock The time complexity of maximum matching by simulated annealing.
\newblock {\em Journal of the {ACM}}, 35:387--403, 1988.

\bibitem[Sor91]{Sorkin91}
Gregory~B. Sorkin.
\newblock Efficient simulated annealing on fractal energy landscapes.
\newblock {\em Algorithmica}, 6:367--418, 1991.

\bibitem[SS68]{SchumerS68}
Michael~A. Schumer and Kenneth Steiglitz.
\newblock Adaptive step size random search.
\newblock {\em IEEE Transactions on Automatic Control}, 13:270--276, 1968.

\bibitem[Thi05]{Thierens05}
Dirk Thierens.
\newblock An adaptive pursuit strategy for allocating operator probabilities.
\newblock In {\em Proc. of Genetic and Evolutionary Computation Conference
  (GECCO'05)}, pages 1539--1546. ACM, 2005.

\bibitem[Weg05]{Wegener05}
Ingo Wegener.
\newblock Simulated annealing beats metropolis in combinatorial optimization.
\newblock In {\em Proc. of International Colloquium on Automata, Languages and
  Programming (ICALP'05)}, volume 3580 of {\em Lecture Notes in Computer
  Science}, pages 589--601. Springer, 2005.

\bibitem[Wit13]{Witt13j}
Carsten Witt.
\newblock Tight bounds on the optimization time of a randomized search
  heuristic on linear functions.
\newblock {\em Combinatorics, Probability {\&} Computing}, 22:294--318, 2013.

\bibitem[Zar08]{Zarges08}
Christine Zarges.
\newblock Rigorous runtime analysis of inversely fitness proportional mutation
  rates.
\newblock In {\em Proc. of Parallel Problem Solving from Nature (PPSN'08)},
  volume 5199 of {\em Lecture Notes in Computer Science}, pages 112--122.
  Springer, 2008.

\bibitem[Zar09]{Zarges09}
Christine Zarges.
\newblock On the utility of the population size for inversely fitness
  proportional mutation rates.
\newblock In {\em Proc. of Foundations of Genetic Algorithms (FOGA'09)}, pages
  39--46. {ACM}, 2009.

\end{thebibliography}

\newcommand{\etalchar}[1]{$^{#1}$}

\begin{sidewaystable}														
\begin{center}
{\footnotesize{														
\begin{tabular}{c|c|c|c|l|c|c}														
\textbf{Algorithm}	& 	\textbf{Dynamic Parameter}	& 	\textbf{Control Scheme}	& 	\textbf{Function}	& 	\textbf{Results}	& 	\textbf{Reference}	& 	\textbf{Sec./Thm.}	\\
\hline
\rowcolor[gray]{.9}\multicolumn{7}{l}{\textbf{State-Dependent Parameter Control Schemes}}\\
\hline												
(1+1) EA	& 	p	& 	time-dep.	& 	\textsc{PathToJump}	& 	$O(n^2 \log n)$	& 	\cite{DrosteJW00,JansenW06}	& 	Sec.~\ref{sec:SAtime}	\\
$(\mu+1)$ EA	& 	p	& 	rank-based	& 	\onemax	& 	$O(\mu n \log n)$	& 	\cite{OlivetoLN09}	& 	Thm.~\ref{thm:SAOlivetoLN09}	\\
$(\mu+1)$ EA	& 	p	& 	rank-based	& 	$f:\{0,1\}^n \to \R$	& 	$7 \cdot 3^n$	& 	\cite{OlivetoLN09}	& 	Thm.~\ref{thm:SAOlivetoLN09}	\\
(1+1) EA	& 	p	& 	fitness-dep.	& 	\leadingones	& 	$0.68 n^2$	& 	\cite{BottcherDN10}	& 	Thm.~\ref{thm:SABottcherDN10}	\\
$(1+\lambda)$ EA	& 	p	& 	fitness-dep.	& 	\onemax	& 	$\Theta(n \log n + n \lambda /\log \lambda )$	& 	\cite{BadkobehLS14}	& 	Thm.~\ref{thm:SABadkobehLS14}	\\
RLS	& 	step size $\ell$	& 	fitness-dep.	& 	\onemax	& 	$n \ln(n)-cn\pm o(n)$	& 	\cite{DoerrDY16}	& 	Thm.~\ref{thm:SADoerrDY16}	\\
\ga	& 	$\lambda$	& 	fitness-dep.	& 	\onemax	& 	$\Theta(n)$	& 	\cite{DoerrDE15}	& 	Thm.~\ref{thm:SAgafitness}	\\
\hline
\rowcolor[gray]{.9}\multicolumn{7}{l}{\textbf{Success-Based Parameter Control Schemes}}													\\
\hline
\ga	& 	$\lambda$	& 	$1/5$-th success rule	& 	\onemax	& 	$\Theta(n)$	& 	\cite{DoerrD18ga}	& 	Thm.~\ref{thm:SAga15}	\\
\ga	& 	$\lambda$	& 	$1/5$-th success rule	& 	\textsc{MaxSAT}	& $ O(\max\{n,(n \log n)/(\frac mn)^{4+\eps}\})$	& 	\cite{BuzdalovD17}	& 	Sec.~\ref{sec:SAgaadaptive}	\\
$(1+\lambda)$ EA	& 	$\lambda$	& 	$\{2\lambda,\lambda/2\}$, $\{2\lambda,1\}$	& \multicolumn{2}{c|}{cf.\ Table~\ref{tab:SALassigS11}\hspace*{2cm}}			& 	\cite{LassigS11}	& 	Thm.~\ref{thm:SALassigS11}	\\
$(1+\lambda)$ EA	& 	p	& 	2-rate $\{2r,r/2\}$	& 	\onemax	& 	$\Theta(n \log n + n\lambda/\log \lambda)$	& 	\cite{DoerrGWY17}	& 	Thm.~\ref{thm:SADoerrGWY17}	\\
RLS	& 	pos.-dep. step size $\ell_i$	& 	${A\ell_i,b\ell_i}$	& 	$r$-ary \onemax	& 	$\Theta(n(\log n + \log r))$	& 	\cite{DoerrDK18}	& 	Thm.~\ref{thm:SADoerrDK17algo}	\\
parallel (1+1) EA	& 	migration interval $\tau$	& 	$\{2\tau,\tau/2\}$, $\{2\tau,1\}$	& \multicolumn{2}{c|}{cf.\ Table~\ref{tab:SALassigS11}\hspace*{2cm}}	& 	\cite{MambriniS15}	& 	Sec.~\ref{sec:SAMambriniS15}	\\
\hline
\rowcolor[gray]{.9}\multicolumn{7}{l}{\textbf{Learning-Inspired Parameter Control Schemes}}													\\
\hline
RLS	& 	step size $\ell$	& 	$\varepsilon$-greedy 	& 	\onemax	& 	$n \ln(n)-cn\pm o(n)$	& 	\cite{DoerrDY16PPSN}	& 	Thm.~\ref{thm:SADoerrDY16PPSN}	\\
\hline
\rowcolor[gray]{.9}\multicolumn{7}{l}{\textbf{Endogeneous (Self-Adaptive) Parameter Control Schemes}}													\\
\hline
$(\lambda,\lambda)$ EA	&	p	&	self-adaptation	&	\multicolumn{2}{c|}{artificial example showing that self-a. can work} &	\cite{DangL16}	&	Sec.~\ref{sec:SAendogeneousTheory}	\\
$(1,\lambda)$ EA	&	p	&	self-adaptation	&	\onemax	&	$\Theta(n \log n + n\lambda/\log \lambda)$ &	\cite{DoerrWY18}	&	Sec.~\ref{sec:SAendogeneousTheory}	\\
\hline
\rowcolor[gray]{.9}\multicolumn{7}{l}{\textbf{Hyper-Heuristics}}													\\
\hline
RLS	&	step size $\ell$	&	random mixing 
&	\onemax	&	$\frac np (\ln(np)+O(1))$ for $p>\frac 1n$	&	\cite{LehreO13}, Thm.~\ref{thm:SAhyom}	&	Thm.~\ref{thm:SAhyom}	\\
RLS	&	step size $\ell$	&	random mixing 
&	MST	&	$O(m^2 \log(nw_{\max}))$	&	\cite{NeumannW07}	&	Thm.~\ref{thm:SAhyperMST}	\\
RLS	&	step size $\ell$	&	classic schemes
&	\leadingones	&	$\approx 0.549 n^2$	&	\cite{LissovoiOW17}	&	Thm.~\ref{thm:SALOW17abc}	\\
RLS	&	step size $\ell$	&	gen. rand. grad.
&	\leadingones	&	$\approx 0.423 n^2$	&	\cite{LissovoiOW17}	&	Thm.~\ref{thm:SALOW18}	\\
RLS	&	\begin{tabular}{@{}c@{}}step size $\ell$, \\ window $\tau$\end{tabular} 	&	\begin{tabular}{@{}c@{}}gen. rand. grad., \\ (1+o(1)) success rule\end{tabular}	&	\leadingones	&	$\approx 0.423 n^2$	&	\cite{DoerrLOW18}	&	Sec.~\ref{sec:SAHHadv}	\\	
\hline														
\end{tabular}
}}
\caption{Summary of selected theoretical running time bounds, sorted by parameter control scheme. We report the expected number of function evaluations needed to identify an optimal solution, not the number of generations. For ease of comparison, we refer to the algorithms regarded in Section~\ref{sec:SAhyper} as RLS variants, and not as (1+1) EAs. Abbreviations: gen. rand. grad. = generalized random gradient, self-a. = self-adaptation}
\label{tab:SAsummaryResults}														
\end{center}														
\end{sidewaystable}

\end{document}